\def\colorful{1}
\newif\ifhyper\IfFileExists{hyperref.sty}{\hypertrue}{\hyperfalse}
\ifhyper\usepackage{hyperref}\fi
\renewcommand{\section}{\@startsection{section}{1}{0pt}{-12pt}{5pt}{\large\bf}}
\renewcommand{\subsection}{\@startsection{subsection}{2}{0pt}{-12pt}{-5pt}{\normalsize\bf}}
\renewcommand{\subsubsection}{\@startsection{subsubsection}{3}{0pt}{-12pt}{-5pt}{\normalsize\bf}}
\def\nnewcolor{1}
\newcommand{\new}[1]{{\color{black} #1}}
\newcommand{\new}[1]{{#1}}
\newtheorem{theorem}{Theorem}
\newtheorem{lemma}[theorem]{Lemma}
\newtheorem{proposition}[theorem]{Proposition}
\newtheorem{corollary}[theorem]{Corollary}
\newtheorem{claim}[theorem]{Claim}
\newtheorem{fact}[theorem]{Fact}
\newtheorem{remark}[theorem]{Remark}
\theoremstyle{definition}
\newtheorem{definition}[theorem]{Definition}
\newcommand{\R}{\mathbb{R}}
\newcommand{\Z}{\mathbb{Z}}
\newcommand{\E}{\mathbb{E}}
\newcommand{\sign}{\mathrm{sign}}
\newcommand{\opt}{\mathrm{opt}}
\newcommand{\poly}{\mathrm{poly}}
\newcommand{\dtv}{d_{\mathrm TV}}
\newcommand{\wh}[1]{{\widehat{#1}}}
\newcommand{\rnote}[1]{\footnote{{\bf [[Rocco: {#1}\bf ]] }}}
\newcommand{\inote}[1]{\footnote{{\bf [[Ilias: {#1}\bf ]] }}}
\newcommand{\ignore}[1]{}
\newcommand{\eps}{\varepsilon}
\newcommand{\norm}[1]{\left\|#1\right\|}
\newcommand{\abs}[1]{\lvert#1\rvert}
\newcommand{\len}[1]{\lvert#1\rvert}
\newcommand{\partit}{\mathcal}
\renewcommand{\eqref}[1]{Eq.~(\ref{#1})}
\newcommand{\eqdef}{\stackrel{{\mathrm {\footnotesize def}}}{=}}
\newcommand{\littlesum}{\mathop{\textstyle \sum}}
\newenvironment{algorithm}[1][\  ] %
{ \rm
\begin{tabbing}
....\=.....\=.....\=.....\=.....\=  \+ \kill
} %
{\end{tabbing} }
\title{Efficient Density Estimation via\\
 Piecewise Polynomial Approximation}
\author{Siu-On Chan\thanks{Supported by NSF award
DMS-1106999, DOD ONR grant N000141110140 and NSF award CCF-1118083.}\\
UC Berkeley \\
{\tt siuon@cs.berkeley.edu}.\\
\and
Ilias Diakonikolas\thanks{Part of this work was done while the author was at UC Berkeley supported by a Simons Postdoctoral Fellowship.}\\
University of Edinburgh\\
{\tt ilias.d@ed.ac.uk}.\\
\and
Rocco A. Servedio\thanks{Supported by NSF grants CCF-0915929 and CCF-1115703.}\\
Columbia University\\
{\tt rocco@cs.columbia.edu}.\\
\and
Xiaorui Sun\thanks{Supported by NSF grant CCF-1149257.}\\
Columbia University \\
{\tt xiaoruisun@cs.columbia.edu}.
}
\begin{document}

\maketitle

\thispagestyle{empty}

\begin{abstract}

We give a highly efficient ``semi-agnostic'' algorithm
for learning univariate probability
distributions that are well approximated by piecewise polynomial density
functions.  Let $p$ be an arbitrary distribution over an interval $I$
which is $\tau$-close (in total variation distance)
to an unknown probability distribution $q$ that is
defined by an unknown partition of $I$ into $t$ intervals and $t$
unknown degree-$d$ polynomials specifying $q$ over each of the intervals.
We give an algorithm that draws $\tilde{O}(t\new{(d+1)}/\eps^2)$ samples
from $p$, runs in time $\poly(t,d,1/\eps)$, and with high
probability outputs a piecewise polynomial hypothesis distribution $h$ that 
is $(O(\tau)+\eps)$-close (in total variation distance) to $p$.
\new{This sample complexity is essentially optimal; we show that even
for $\tau=0$, any 
algorithm that learns an unknown $t$-piecewise degree-$d$
probability distribution over $I$ to accuracy $\eps$ must 
use $\Omega({\frac {t(d+1)} {\poly(1 + \log(d+1))}} \cdot {\frac 1 {\eps^2}})$ samples
from the distribution, regardless of its running time.}
Our algorithm combines tools from approximation theory, uniform convergence,
linear programming, and dynamic programming.

We apply this general algorithm to obtain a wide range of 
results for many natural problems in density estimation
over both continuous and discrete domains.
These include state-of-the-art results for learning 
mixtures of log-concave distributions; mixtures of $t$-modal
distributions; mixtures of Monotone Hazard Rate distributions;
mixtures of Poisson Binomial Distributions; mixtures of
Gaussians; and mixtures of $k$-monotone densities.  
Our general technique
yields computationally efficient algorithms for all these problems,
in many cases with provably optimal sample complexities
(up to logarithmic factors) in all parameters.

\end{abstract}

\thispagestyle{empty}
\setcounter{page}{0}

\newpage

\section{Introduction}  \label{sec:intro}

Over the past several decades, many works in computational learning
theory have addressed the general problem of learning an unknown
Boolean function from labeled examples.  A recurring theme that has
emerged from this line of work is that state-of-the-art learning results
can often be achieved by analyzing \emph{polynomials} 
that compute or approximate
the function to be learned, see e.g.
\cite{LMN:93,KushilevitzMansour:93,Jackson:97,KlivansServedio:04jcss,
MOS:04,KOS:04}.

In the current paper we show that this theme extends to the well-studied 
unsupervised learning problem of \emph{density estimation}; namely,
learning an unknown \emph{probability distribution} given i.i.d.
samples drawn from the distribution.
We propose a new approach to density estimation based on
establishing the existence of \emph{piecewise polynomial density functions}
that approximate the distributions to be learned.
The key tool that enables this approach is a new and 
highly efficient general algorithm that we provide
for learning univariate probability 
distributions that are well approximated by piecewise polynomial density
functions. Combining our general algorithm with structural results
showing that probability distributions of interest can be well approximated
using piecewise polynomial density functions, we obtain learning
algorithms for those distributions.

We demonstrate the efficacy of this approach by showing that for many 
natural and well-studied types of distributions, there do indeed exist
piecewise polynomial densities that approximate the distributions to
high accuracy.
For all of these types of distributions our general approach gives a 
state-of-the-art computationally efficient learning algorithm 
with the best known sample complexity (number of samples that are required
		from the distribution) to date; in many cases the
sample complexity of our approach is provably optimal, 
       up to logarithmic factors in the optimal sample complexity.

       \subsection{Related work.}

       Density estimation is a well-studied topic in probability theory and
       statistics (see \cite{DG85, Silverman:86,Scott:92,DL:01} for book-length
		       introductions).  
There is a number of generic techniques for density estimation in the mathematical statistics literature,
including histograms, kernels (and variants thereof), nearest neighbor estimators, 
orthogonal series estimators, maximum likelihood (and variants thereof) and others 
(see Chapter 2 of~\cite{Silverman:86} for a survey of existing methods).
In recent years, theoretical computer science researchers have also
studied density estimation problems, with an explicit focus on obtaining
\emph{computationally efficient} algorithms (see e.g.
\cite{KMR+:94,FreundMansour:99,FOS:05focs,BelkinSinha:10, KMV:10,
MoitraValiant:10,DDS:12kmodallearn,DDS12stoc}.

We work in a PAC-type model similar to that of \cite{KMR+:94} 
and to well-studied statistical frameworks for density estimation.
The learning 
algorithm has access to i.i.d. draws from an unknown probability distribution
$p$.\ignore{ which is assumed to belong to a (known) class $\mathfrak{C}$ 
of possible
target distributions.\inote{Well, we do not really need to assume this, 
since our results are quasi-agnostic.}  \rnote{We explain the agnostic view
of our algorithms later -- I think it's simpler to start off with this
as the initial explanation.}} 
It must output a hypothesis
distribution $h$ 
such that with high probability the
total variation distance $\dtv(p,h)$
between $p$ and $h$ is at most $\eps.$
(Recall that the total variation distance between two distributions
$p$ and $h$ is ${\frac 1 2} \int |p(x)-h(x)| dx$ for 
continuous distributions, and is ${\frac 1 2} \sum |p(x)-h(x)|$
for discrete distributions.)
We shall be centrally concerned with obtaining learning algorithms that both
use few samples and are computationally efficient.

The previous work that is most closely related to our current paper is the
recent work \cite{CDSS13soda}.
(That paper dealt with distributions over the discrete domain 
${[n]} = \{1, \dots,n\}$, but since the current work focuses mostly 
on the continuous domain,
\ignore{
\inote{I think we should just use a generic interval for the intro, 
not $[-1,1]$. Same thing in all the occurrences below.}
\rnote{I think it's best to talk about a finite interval as the domain,
otherwise people may get confused thinking about how a polynomial
can approximate a distribution on a (semi)infinite interval.  If we are
going to talk about some finite interval as the domain it might as well
be $[-1,1]$, no?}}in 
our description of the \cite{CDSS13soda} results 
below we translate them to the continuous domain.  This translation
is straightforward.) 
To describe the \ main result of \cite{CDSS13soda} we need to introduce
the notions of \emph{mixture distributions} and \emph{piecewise constant}
distributions.
Given distributions $p_1,\dots,p_k$ and non-negative
values $\mu_1,\dots,\mu_k$ that sum to 1, we say that
$p = \sum_{i=1}^k \mu_i p_i$ is a \emph{$k$-mixture} of \emph{components}
$p_1,\dots,p_k$
with \emph{mixing weights} $\mu_1,\dots,\mu_k$. A draw from $p$ is obtained
by choosing $i \in [k]$ with probability $\mu_i$ and then making a draw
from $p_i$.
A distribution $q$ over {an interval $I$} is
\emph{$(\eps,t)$-piecewise constant}\ignore{(see Section~\ref{sec:prelims})}
if there is a partition of {$I$} into $t$ disjoint
intervals $I_1,\dots,I_t$
such that $p$ is $\eps$-close (in total variation distance)
to a distribution $q$ such that $q(x)=c_j$ for all $x \in I_j$ for
some $c_j \geq 0$.

The main result of \cite{CDSS13soda} is an efficient algorithm for learning
any $k$-mixture of $(\eps,t)$-piecewise constant distributions:

\begin{theorem}
\label{thm:general-cdss12}
There is an algorithm that learns any $k$-mixture
of $(\eps,t)$-piecewise constant
distributions over {an interval $I$}  to accuracy $O(\eps)$, using
$O(kt/\eps^3)$ samples and running in $\tilde{O}(k t /\eps^3)$
time.\footnote{Here and throughout the paper we work in a standard
unit-cost model
of computation, in which a sample from distribution $p$ is obtained in one
time step (and is assumed to fit into one register) and basic arithmetic
operations are assumed to take unit time.  Our algorithms, like the
\cite{CDSS13soda} algorithm, only performs basic arithmetic
operations on ``reasonable'' inputs.}
\end{theorem}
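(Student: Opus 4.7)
The first step is a reduction that collapses mixtures into a single piecewise constant class. A convex combination of distributions, each of which is $\eps$-close to a $t$-piecewise constant distribution, is itself $\eps$-close to a $(kt)$-piecewise constant distribution: simply merge the $k$ partitions into the common refinement, which has at most $kt$ pieces, and use the triangle inequality for total variation. So it suffices to design an algorithm that learns an arbitrary $(\eps, T)$-piecewise constant distribution on $I$ to accuracy $O(\eps)$ using $O(T/\eps^3)$ samples, where $T = kt$.

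The core idea is to combine a uniform convergence bound over a VC-class of ``test sets'' with a combinatorial optimization step. Let $\Acal_m$ denote the family of unions of at most $m$ intervals in $I$; this is a VC class of dimension $O(m)$, so the VC inequality yields that the empirical distribution $\hat p_N$ on $N = O(T/\eps^2)$ samples satisfies
\[
d_{\Acal_{2T}}(p, \hat p_N) := \sup_{S \in \Acal_{2T}} \bigl|p(S) - \hat p_N(S)\bigr| \le \eps
\]
with high probability. The crucial observation tying this distance back to TV is that for any two distributions $p'$, $q'$ that are each $T$-piecewise constant, one has $\dtv(p', q') = d_{\Acal_{2T}}(p', q')$, because the density difference $p' - q'$ changes sign at most $2T$ times, so the optimal witness set in the definition of TV distance is a union of $O(T)$ intervals.

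Given $\hat p_N$, the algorithm searches for a $T$-piecewise constant hypothesis $h$ that approximately minimizes $d_{\Acal_{2T}}(h, \hat p_N)$. By the triangle inequality, writing $q^*$ for a $T$-piecewise constant distribution $\eps$-close to $p$, we get $d_{\Acal_{2T}}(h, p) \le d_{\Acal_{2T}}(h, \hat p_N) + \eps \le d_{\Acal_{2T}}(q^*, \hat p_N) + O(\eps) \le O(\eps)$, and since both $h$ and $q^*$ are $T$-piecewise constant, this $\Acal_{2T}$-bound translates back to a TV bound of $O(\eps)$. Thus the learning guarantee reduces to solving the optimization problem for $h$.

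The main obstacle (and the source of the extra $1/\eps$ factor beyond the $O(T/\eps^2)$ information-theoretic rate) is implementing this optimization in $\tilde O(T/\eps^3)$ time. The natural plan is: (i) discretize $I$ into $M = O(T/\eps)$ ``buckets,'' chosen so that each bucket has small mass under a suitable reference measure (say, at most $\eps/T$), which forces breakpoints of a near-optimal hypothesis to lie (up to $O(\eps)$ error) at bucket boundaries; (ii) use dynamic programming over the $M$ candidate breakpoints to select the $T$ pieces and assign each a constant height, where the DP cost of a segment is computed via linear programming against the test sets in $\Acal_{2T}$; (iii) sharpen the subroutine so that the LP on each segment runs in nearly linear time, giving overall runtime $\tilde O(TM) = \tilde O(T^2/\eps)$, which absorbs into the sample-reading budget $\tilde O(T/\eps^3)$ for $T = kt$. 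Showing that the bucket discretization loses only $O(\eps)$ in $\Acal_{2T}$-distance, and that the DP's per-segment LP can be solved within the claimed budget, are the technical crux; everything else is bookkeeping.
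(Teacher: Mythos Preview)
This theorem is quoted from prior work \cite{CDSS13soda}; the present paper does not give its own proof of it, but rather states it as background and then proves the stronger Theorem~\ref{thm:main2}. So there is no ``paper's proof'' to compare against directly. That said, your sketch is essentially the skeleton shared by both \cite{CDSS13soda} and the present paper (mixture $\to$ single $kt$-piecewise class via common refinement, cf.\ Lemma~\ref{lem:mix}; VC inequality over unions of $O(T)$ intervals, cf.\ Proposition~\ref{prop:info}; dynamic programming over a discretized set of candidate breakpoints, cf.\ Section~\ref{sec:learn-piecewise-deg-d}).

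There is, however, a genuine accounting confusion in your write-up. Your VC step uses $N = O(T/\eps^2)$ samples, yet the theorem asserts $O(T/\eps^3)$ samples, and you explain the discrepancy by saying the extra $1/\eps$ ``absorbs into the sample-reading budget.'' That is not how sample complexity works: one does not draw extra samples to pad the running time. If your VC-plus-DP argument really only needs $O(T/\eps^2)$ samples, you have proved something \emph{stronger} than Theorem~\ref{thm:general-cdss12} --- in fact you have reproduced the improvement that is the point of the present paper (see Theorem~\ref{thm:main-detail} at $d=0$). The reason \cite{CDSS13soda} actually incurs $1/\eps^3$ samples is not computational: their analysis requires accurate per-interval mass estimates (on $\Theta(T/\eps)$ intervals of mass $\Theta(\eps/T)$ each) rather than the aggregate $\|\cdot\|_{\Acal_{2T}}$ bound you invoke, and that costs a factor of $1/\eps$ in samples. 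The present paper's contribution is precisely to show, via the $(\partit P,\eta,\eps)$-inequalities (Definition~\ref{def:interval-ineq} and Lemma~\ref{lem:ad-dist}), that the weaker $\Acal$-distance control suffices and the sample complexity drops to $\tilde O(T/\eps^2)$.

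A smaller issue: your runtime claim ``overall runtime $\tilde O(TM)$'' for the DP is not justified. A DP that selects $T$ pieces with breakpoints among $M$ candidates and evaluates a per-segment cost na\"ively has $\Theta(T M^2)$ transitions; you would need to say why the per-segment cost admits a speedup to get $\tilde O(TM)$, and in any case $\tilde O(T^2/\eps)$ is not in general bounded by $\tilde O(T/\eps^3)$.
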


\subsection{Our main result.}
As our main algorithmic contribution, we give a significant strengthening and
generalization of Theorem~\ref{thm:general-cdss12} above.
First, we improve the {$\eps$-dependence in the}
sample complexity of Theorem~\ref{thm:general-cdss12}
from $1/\eps^3$ to a near-optimal $\tilde{O}(1/\eps^2).$ 
\footnote{Recall the well-known fact that $\Omega(1/\eps^2)$ samples are 
required for essentially every nontrivial distribution learning problem.
In particular, any algorithm that distinguishes the uniform distribution
over $[-1,1]$ from the piecewise constant distribution with pdf
$p(x) = {\frac 1 2}(1-\eps)$ for $-1 \leq x \leq 0$, $p(x)=
{\frac 1 2}(1+\eps)$ for $0 < x \leq 1$, must use $\Omega(1/\eps^2)$ 
samples.}\ignore{
\inote{I agree with Rocco's comment above, however we can significantly strengthen it very easily.
In the generic setting of the theorem it is clear that $kt/\eps^2$ is a lower bound. This is certainly true for a single $(\eps, t)$-piecewise constant distribution.
And for a $k$-mixture we can do our standard trick of having $k$ disjoint such distributions. I do not propose wasting too much space elaborating on this,
but we can certainly state it.}}
Second, we extend Theorem~\ref{thm:general-cdss12} from piecewise
constant distributions to \emph{piecewise polynomial}
distributions.  More precisely, we say that a distribution over
{an interval $I$}  is \emph{$(\eps,t)$-piecewise degree-$d$} 
if there is a partition
of {$I$} into $t$ disjoint intervals $I_1,\dots,I_t$ such that
$p$ is $\eps$-close (in total variation distance) to a distribution $q$
such that $q(x)=q_j(x)$ for all $x \in I_j$, where each of $q_1,
\dots,q_t$ is a univariate degree-$d$ polynomial.\footnote{Here
and throughout the paper, whenever we refer to a ``degree-$d$ polynomial,'' 
we mean a polynomial of degree at most $d.$}
(Note that being $(\eps,t)$-piecewise constant is the same as 
being $(\eps,t)$-piecewise degree-0.)
We say that such a distribution $q$ is a \emph{$t$-piecewise degree-$d$
distribution.}

Our main algorithmic result is the following
(see Theorem~\ref{thm:main-detail} for a fully detailed statement
of the result):

\begin{theorem} \label{thm:main2} [Informal statement]
There is an algorithm that learns any $k$-mixture
of $(\eps,t)$-piecewise degree-$d$ 
distributions over {an interval $I$} 
to accuracy $O(\eps)$, using
$\tilde{O}((d+1)kt/\eps^2)$ samples and running in $\poly((d+1),k,t,1/\eps)$
time.
\end{theorem}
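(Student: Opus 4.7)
The plan has three ingredients: a syntactic mixture-to-single-density reduction, a VC-based uniform convergence argument in an appropriate $\mathcal{A}_K$-distance, and a dynamic program with per-piece linear programs that finds a near-optimal piecewise polynomial fit to the empirical distribution.

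First, any $k$-mixture of $(\eps,t)$-piecewise degree-$d$ distributions is itself $(\eps,kt)$-piecewise degree-$d$: a common refinement of the $k$ individual partitions has at most $kt$ intervals, and on each such interval the mixture is a convex combination of degree-$d$ polynomials, hence again degree-$d$. So it suffices to show that given samples from $p$ which is $\eps$-close to an unknown $T$-piecewise degree-$d$ distribution $q$ with $T:=kt$, we can output $h$ with $\dtv(p,h)=O(\eps)$. Next, let $\mathcal{A}_K$ be the family of unions of at most $K$ subintervals of $I$ and define $d_{\mathcal{A}_K}(p_1,p_2):=\sup_{A\in\mathcal{A}_K}|p_1(A)-p_2(A)|$. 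The structural fact driving the argument is that for any two piecewise degree-$d$ densities with $T_1,T_2$ pieces the set on which one exceeds the other is a union of at most $(T_1+T_2)(d+1)$ intervals: on each piece of the common refinement the difference is a degree-$\le d$ polynomial with at most $d$ zeros. Hence $\dtv=d_{\mathcal{A}_K}$ for such pairs whenever $K=\Omega((T_1+T_2)(d+1))$. Since $\mathcal{A}_K$ has VC dimension $O(K)$, standard uniform convergence yields that $m=\tilde{O}(K/\eps^2)=\tilde{O}(T(d+1)/\eps^2)$ i.i.d.\ draws produce an empirical $\widehat{p}$ with $d_{\mathcal{A}_K}(\widehat{p},p)\le\eps$ w.h.p., so $d_{\mathcal{A}_K}(\widehat{p},q)\le 2\eps$ by the triangle inequality.

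The algorithm then searches for a $T$-piecewise degree-$d$ distribution $h$ that approximately minimizes $d_{\mathcal{A}_K}(h,\widehat{p})$ for $K=\Theta(T(d+1))$. I would implement this via dynamic programming over the $O(m^2)$ subintervals defined by consecutive sample points: the DP cell ``best $j$-piece fit ending at sample $x_i$'' combines, over possible preceding split points, the score of an inner linear program that, for each candidate piece $J$ and each guessed piece mass $w_J$ on a $\poly(1/\eps)$-grid, finds the degree-$d$ polynomial on $J$ with the prescribed mass that minimizes its $\mathcal{A}_K$-discrepancy with $\widehat{p}|_J$. Since $q$ itself is a feasible $T$-piecewise degree-$d$ solution, the optimal DP value is at most $2\eps$; combining this with the VC bound and the $\dtv$/$d_{\mathcal{A}_K}$ equivalence gives $\dtv(h,p)=O(\eps)$ for the output $h$.

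The main obstacle I anticipate is implementing the per-piece LP efficiently while staying faithful to the $\mathcal{A}_K$-objective. The objective is a supremum over an infinite family of sets, but for a fixed candidate polynomial $q_J$ the extremal $A\in\mathcal{A}_K$ is determined by the at most $d+1$ sign-change points of $q_J$ against the empirical CDF, of which there are only $\poly(m)$ combinatorially distinct configurations. Turning this into a polynomial-time LP requires either enumerating these sign patterns (each fixing the objective into a linear form in the polynomial coefficients, so the per-pattern problem is a genuine LP) or replacing the $L^\infty$ objective by an $L^1$ surrogate on a common refinement grid and paying an extra $O(\eps)$. Either route fits within $\poly(k,t,d,1/\eps)$ time, and the error budget---$\eps$ from VC concentration, $\eps$ from the LP surrogate, and $\eps$ from the original approximation of $p$ by $q$---delivers the claimed $O(\eps)$ total-variation accuracy with the asserted $\tilde{O}(kt(d+1)/\eps^2)$ sample complexity.
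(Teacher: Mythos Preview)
Your high-level architecture---mixture reduction to a single $(kt)$-piecewise degree-$d$ density, VC concentration in an $\mathcal{A}_K$-norm, and a dynamic program calling per-piece linear programs---matches the paper's. The gap is in the step ``since $q$ itself is a feasible $T$-piecewise degree-$d$ solution, the optimal DP value is at most $2\eps$.'' Your DP objective is additive: it sums per-piece $\mathcal{A}_K$-discrepancies $\|\widehat p|_{J_i}-q|_{J_i}\|_{\mathcal A_K}$. The global VC bound $d_{\mathcal A_K}(\widehat p,q)\le 2\eps$ controls each summand individually but not their sum; with $T=kt$ pieces the sum can be $\Theta(T\eps)$, not $O(\eps)$. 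Propagating this through your triangle-inequality argument yields $\dtv(h,p)=O(T\eps)$, which forces $\tilde O(T^3(d+1)/\eps^2)$ samples to reach accuracy $\eps$---polynomially worse than claimed.

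The paper confronts exactly this obstacle and resolves it by abandoning the global $\mathcal A_K$ objective inside the DP. It first partitions the domain into $z=\Theta(T(d+1)/\eps)$ approximately equal-mass intervals and establishes, via multiplicative Chernoff, that $|\widehat p([i_j,i_k))-p([i_j,i_k))|\le\sqrt{\eps(k-j)}\cdot\eta$ for every pair $j<k$ (their ``$(\mathcal P,\eta,\eps)$-inequalities'' with $\eta=\Theta(\eps/(T(d+1)))$). The per-piece LP enforces these same inequalities between the candidate polynomial and $\widehat p$ on the relevant sub-partition. A short lemma then converts these local inequalities into a per-piece $\mathcal A_{d+1}$ bound of the form $\sqrt{\eps r_i(d+1)}\cdot\eta+O((d+1)\eta)$, where $r_i$ is the number of fine intervals in piece $i$. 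The crucial point is that this error scales like $\sqrt{r_i}$, so summing over the $T$ pieces and applying Cauchy--Schwarz with $\sum_i r_i=z$ gives total error $O(\eps)$, not $O(T\eps)$. Your proposal is missing this square-root/Cauchy--Schwarz mechanism (or some substitute) that makes the per-piece errors sum to a constant multiple of $\eps$ rather than growing with the number of pieces.
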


As we describe below, the applications that we give for Theorem~\ref{thm:main2}
crucially use both aspects in which it strengthens 
Theorem~\ref{thm:general-cdss12} (degree $d$ rather than
degree 0, and $\tilde{O}(1/\eps^2)$ samples
rather than $O(1/\eps^3)$)
to obtain near-optimal sample complexities.

A different view on our main result, which may also be illuminating,
is that it gives a ``semi-agnostic''
algorithm for learning piecewise polynomial densities.  (Since any $k$-mixture
of $t$-piecewise degree-$d$ distributions is easily seen to be
a $kt$-piecewise degree-$d$ distribution, we phrase the discussion below
only in terms of $t$-piecewise degree-$d$ distributions rather than
mixtures.)  
{Let $\mathcal{P}_{t, d}(I)$ denote the class of all $t$-piecewise
degree-$d$ distributions over interval $I$.}
Let $p$ be any distribution over $I$.  Our 
algorithm, given parameters $t,d,\eps$ and 
\new{
$\tilde{O}(t(d+1)/\eps^2)$
} samples from $p$, outputs
an $O(t)$-piecewise degree-$d$ hypothesis distribution $h$ such that
$\dtv(p,h) \leq {4}\opt_{t, d}{(1+\eps)}+ \eps$, where
\[
\opt_{t,d}:= \inf_{r \in \mathcal{P}_{t,d}(I)} \dtv(p,r).
\]
(See Theorem~\ref{thm:no-wb}.)

\new{
We prove the following lower bound
(see Theorem~\ref{thm:lower-bound-precise} for a precise statement),
which shows that 
the number of samples that our algorithm uses is optimal up to
logarithmic factors:

\begin{theorem} \label{thm:lower-bound-informal} [Informal statement]
Any algorithm that learns an unknown $t$-piecewise degree-$d$
distribution $q$ over an interval $I$ to accuracy $\eps$ must use
$\Omega({\frac {t(d+1)} {\poly(1+\log(d+1))}} \cdot {\frac 1 {\eps^2}})$ 
samples.
\end{theorem}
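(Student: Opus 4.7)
The plan is a standard information-theoretic (Fano-style) minimax lower bound: exhibit a finite subfamily $\{q_\sigma\}_{\sigma \in \mathcal{S}}$ of $\mathcal{P}_{t,d}(I)$ that is simultaneously $2\eps$-separated in total variation and $O(\eps^2)$-clustered in KL divergence, with $|\mathcal{S}| = 2^m$ for $m = \Omega(t(d+1)/\polylog(d+1))$. Fano's inequality then gives that any estimator using $n \leq c m/\eps^2$ samples must incur TV error at least $\eps$ on some member of $\mathcal{S}$, yielding the claimed bound.

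The family is built with a tensor structure across the $t$ pieces. Partition $I$ into $t$ equal sub-intervals $I_1,\ldots,I_t$, and let $u$ denote the uniform density on $I$. On each $I_j$ let $\{L_k^{(j)}\}_{k=0}^d$ be the $L^2$-orthonormal Legendre polynomials rescaled to $I_j$. For a sign vector $\sigma_j \in \{-1,+1\}^D$ (with $D \leq d+1$ chosen below), set
$$\phi_{j,\sigma_j}(x) \; = \; \gamma \sum_{k=1}^D \sigma_{j,k}\, L_k^{(j)}(x),$$
so that $\int_{I_j}\phi_{j,\sigma_j}=0$, and let $q_\sigma(x) = u(x) + \sum_{j=1}^t \phi_{j,\sigma_j}(x)\, \mathbf{1}_{I_j}(x)$. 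Then each $q_\sigma$ is a bona fide $t$-piecewise degree-$d$ density as long as the amplitude $\gamma$ is small enough to ensure $u + \phi_{j,\sigma_j} \geq 0$. The product structure makes everything split cleanly: by orthonormality of $\{L_k^{(j)}\}_k$ within each piece and disjoint support across pieces, both the $L^2$-distance and the $\chi^2$-divergence (and hence KL, since the densities stay close to $u$) are exact functions of the Hamming distance $H(\sigma,\sigma')$ and $\gamma^2$, and one can tune $\gamma \asymp \eps/\sqrt{tD}$ to make the worst-pair KL divergence $O(\eps^2)$.

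The remaining ingredients are the pairwise TV lower bound and the positivity constraint, and together they are the source of the $\polylog(d+1)$ loss. A Gilbert--Varshamov pruning of the cube $\{-1,+1\}^{tD}$ delivers a subcode $\mathcal{S}$ of size $2^{\Omega(tD)}$ with $H(\sigma,\sigma') \gtrsim tD$ for all distinct pairs; this converts the $L^2$-level separation into a quantitative TV lower bound only via a reverse-H\"older step $\|\phi_{j,\sigma_j} - \phi_{j,\sigma'_j}\|_1 \gtrsim c \cdot \|\phi_{j,\sigma_j}-\phi_{j,\sigma'_j}\|_2 \cdot |I_j|^{1/2}$ on a space of degree-$d$ polynomials. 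The positivity requirement $\|\phi_{j,\sigma_j}\|_\infty \leq u$ similarly compares $L^\infty$ against the $L^2$-mass that controls KL. Both are Nikol'skii-type inequalities for a polynomial space of dimension $D$, and obtaining $O(1)$ constants forces $D = \Theta((d+1)/\polylog(d+1))$; this is where the logarithmic loss enters.

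The main obstacle is exactly this last calibration step -- choosing $D$ and (if necessary) a well-conditioned sub-basis $\{L_k^{(j)}\}_{k \in S}$ of Legendre or Chebyshev polynomials so that the reverse-H\"older and $L^\infty$-to-$L^2$ inequalities hold with absolute constants. Once the local polynomial perturbation is designed with these properties, the rest of the argument -- independence across the $t$ sub-intervals, Gilbert--Varshamov pruning, and Fano's inequality -- is routine information-theoretic bookkeeping.
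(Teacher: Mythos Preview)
Your framework is sound and matches the paper's in outline: a hypercube-indexed family of degree-$d$ densities, tensorized over the $t$ pieces, fed into a Fano/Assouad-type argument. The paper in fact uses Assouad's lemma rather than Fano plus Gilbert--Varshamov, but that is a minor difference.

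The real divergence, and the gap in your proposal, is the per-interval construction. You perturb the uniform density by Rademacher sums $\gamma\sum_{k=1}^D \sigma_k L_k$ of orthonormal Legendre polynomials, and then need to pass from $L^2$-separation to $L^1$-separation via a reverse Nikol'skii inequality. The sharp Nikol'skii inequality for degree-$D$ polynomials on an interval is $\|p\|_2 \leq C\,D\,\|p\|_1$ (linear in $D$, attained by Chebyshev-like extremizers), so the reverse direction loses a full factor of $D$, not $\polylog D$. If you track this through, the TV separation you obtain is only $\Omega(\eps/D)$ while the KL stays $O(\eps^2)$, and the resulting sample bound degrades to $\Omega(t/(D\eps'^2))$ in terms of the true accuracy $\eps'$ --- worse than the trivial $\Omega(t/\eps'^2)$. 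You correctly flag this step as ``the main obstacle'' and suggest passing to a ``well-conditioned sub-basis,'' but you do not exhibit one, and a sub-basis of Legendre or Chebyshev polynomials with the required uniform $L^1$/$L^2$ comparison is not obvious.

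The paper resolves this by abandoning orthogonal polynomials entirely. It takes $k = \Theta(d/(\log d)^2)$ disjoint sub-intervals and builds degree-$d$ polynomials $\tilde I_1,\dots,\tilde I_{2k}$ that approximate the corresponding indicator functions to high accuracy (via the low-degree $\sign$ approximator of \cite{DGJ+10:bifh}; this is where the $(\log d)^2$ loss comes from, since approximating an indicator to tolerance $\tau$ costs degree $\tilde O(1/\tau)$). Because these bump polynomials have essentially disjoint support, both the $L^1$ (TV) and Hellinger computations for a single-bit flip are direct and lose nothing --- no Nikol'skii step is needed. This localized basis is precisely the ``well-conditioned sub-basis'' your argument needs but does not supply.
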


Note that the lower bound holds even when the unknown distribution
is exactly a $t$-piecewise degree-$d$ distribution, i.e. $\opt_{t,d}=0$
 (in fact, the lower
bound still applies even if the $t-1$ ``breakpoints'' defining the $t$
interval boundaries within $I$ are fixed to be evenly spaced across 
$I$).
}

{
\subsection{Applications of Theorem~\ref{thm:main2}.}
Using Theorem~\ref{thm:main2} we obtain highly efficient algorithms
for a wide range of specific distribution
learning problems over both continuous and discrete domains.
These include learning mixtures of log-concave distributions;
mixtures of $t$-modal distributions; 
mixtures of Monotone Hazard Rate distributions;
mixtures of Poisson Binomial Distributions; mixtures of
Gaussians; and mixtures of $k$-monotone densities. 
(See Table~1 for a concise summary of these results
and a comparison with previous results.)
All of our algorithms run in polynomial time in all of the
relevant parameters, and for all of the mixture learning problems
listed in Table~1, our results improve on previous state-of-the-art
results by a polynomial factor.  (In some cases, such as $t$-piecewise 
degree-$d$ polynomial distributions and mixtures of $t$ bounded $k$-monotone
distributions, we believe that we give the first nontrivial
learning results for the distribution classes in question.)
In many cases the sample complexities
of our algorithms are provably optimal, up to logarithmic
factors in the optimal sample complexity.
Detailed descriptions of all of the classes 
of distributions in the table, and of our results for learning
those distributions, are given in Section~\ref{sec:applic}.

We note that all the learning results indicated with theorem
numbers in Table~1 (i.e. results proved in this paper)
are in fact \emph{semi-agnostic} learning results 
for the given classes as described in the previous
subsection; hence all of these results are 
highly robust even if the target 
distribution does not exactly belong to the specified class of distributions.
More precisely, if the target distribution is $\tau$-close
to some member of the specified class of distributions, 
then the algorithm uses the
stated number of samples and outputs a hypothesis that is
$(O(\tau)+\eps)$ close to the target distribution.

\begin{table*}[t]
\begin{center}
\begin{tabular}{|c|c|c|c|}%
\hline \bf Class of Distributions & \bf Number of samples & \bf
Reference
\\\hline\hline

\multicolumn{3}{|c|}{\bf Continuous distributions over an interval $I$}\\\hline

$t$-piecewise constant & $O(t/\epsilon^3)$ &
\cite{CDSS13soda} \\ \hline

$t$-piecewise constant & $\tilde{O}(t/\epsilon^2) \ (\dagger)$ &
Theorem~\ref{thm:main-detail}\\ \hline

$t$-piecewise degree-$d$ polynomial & $\tilde O(td/\epsilon^2) \ (\dagger)$ & 
Theorem~\ref{thm:main-detail}, Theorem~\ref{thm:lower-bound-precise}
\\\hline

log-concave & $O(1/\eps^{5/2}) \ (\dagger)$ & folklore \cite{DL:01}\\ \hline

mixture of $k$ log-concave distributions & $\tilde O(k/\eps^{5/2}) \
(\dagger)$ & Theorem~\ref{thm:lc}\\ \hline

mixture of $t$ bounded 
1-monotone distributions & $\tilde O(t/\epsilon^{3}) 
\ (\dagger)$ & Theorem~\ref{thm:kmon} \\\hline

mixture of $t$ bounded 
2-monotone distributions & $\tilde O(t/\epsilon^{5/2}) 
\ (\dagger)$ & Theorem~\ref{thm:kmon}\\\hline

mixture of $t$ bounded 
$k$-monotone distributions & $\tilde O(tk/\epsilon^{2+1/k})$ 
& Theorem~\ref{thm:kmon}\\\hline

mixture of $k$ Gaussians & $\tilde O(k/\epsilon^{2}) \ (\dagger)
$ & Corollary~\ref{cor:mixGauss} \\\hline

\hline \hline

\multicolumn{3}{|c|}{\bf Discrete distributions over $\{1,2,\dots,N\}$} 
\\\hline

$t$-modal & $\tilde O(t \log(N)/\epsilon^3) + \tilde{O}(t^3/\eps^3)$ & 
\cite{DDS:12kmodallearn} \\\hline

mixture of $k$ $t$-modal distributions & $O(kt \log(N)/\epsilon^4)$ & 
\cite{CDSS13soda} \\\hline

mixture of $k$ $t$-modal distributions & $\tilde{O}(kt \log(N)/\epsilon^3)
\ (\dagger)$ & Theorem~\ref{thm:mix-tmodal}\\\hline

mixture of $k$ monotone hazard rate distributions & $\tilde{O}(k 
\log(N)/\epsilon^4) $ & \cite{CDSS13soda}\\\hline

mixture of $k$ monotone hazard rate distributions & $\tilde{O}(k 
\log(N)/\epsilon^3) \ (\dagger)$ & Theorem~\ref{thm:MHR} \\\hline

mixture of $k$ log-concave distributions & $\tilde{O}(k /\epsilon^4) $ 
& \cite{CDSS13soda} \\\hline

mixture of $k$ log-concave distributions & $\tilde{O}(k /\epsilon^3) $ 
& Theorem~\ref{thm:logconcave}\\\hline

Poisson Binomial Distribution & $\tilde{O}(1 /\epsilon^3) $ 
& \cite{DDS12stoc,CDSS13soda}\\\hline

mixture of $k$ Poisson Binomial Distributions & $\tilde{O}(k /\epsilon^4) $ 
& \cite{CDSS13soda}\\\hline

mixture of $k$ Poisson Binomial Distributions & $\tilde{O}(k /\epsilon^3) $ 
& Theorem~\ref{thm:logconcave} \\\hline

\end{tabular}
\end{center}
\caption{Known algorithmic results for learning various 
classes of probability distributions.  
``Number of samples'' indicates the number of samples
that the algorithm uses to learn to total variation distance $\eps$.
Results given in this paper are indicated with a reference to the corresponding
theorem.
A $(\dagger)$ indicates that the given upper bound on
sample complexity is known to be optimal up to at most
logarithmic factors (i.e. 
``$\tilde{O}(m) \ (\dagger)$'' means that there is a known
lower bound of ${\Omega}(m)$).
}
\label{tab:results}
\end{table*}

\smallskip

}

\subsection{Our Approach and Techniques.} 
{
As stated in~\cite{Silverman:86}, ``the oldest and most widely used density estimator is the histogram'':
Given samples from a density $f$, the method partitions the domain into a number of intervals (bins) $I_1, \ldots, I_k$,
and outputs the empirical density which is constant within each bin. Note that the number $k$ of bins and the 
width of each bin are parameters and may depend on the particular class of distributions being learned.
Our proposed technique may naturally be viewed as a very broad generalization
of the histogram method, where instead of approximating the distribution by a {\em constant} within each bin, 
we approximate it by a {\em  low-degree polynomial.} We believe that such a generalization is very natural;
the recent paper~\cite{PA13cgs} also proposes using splines for density estimation.
(However, this is not the main focus of the paper and indeed~\cite{PA13cgs} does not  provide or analyze algorithms for density estimation.)
Our generalization of the histogram method seems likely to be of wide applicability.
Indeed, as we show in this paper, it can be used to obtain many computationally efficient learners for a wide
class of concrete learning problems, yielding several new and nearly optimal results.
}

\medskip

{

\noindent {\bf The general algorithm.}
At a high level, our algorithm uses a rather subtle dynamic program 
(roughly, to discover the ``correct'' intervals in each of which the 
underlying distribution is close to a degree-$d$ polynomial) and linear 
programming (roughly, to learn a single degree-$d$ sub-distribution 
on a given interval).  We note, however, that many challenges arise in going
from this high-level intuition to a working algorithm.

Consider first the special case in which there is only a single known
interval (see Section~\ref{sec:learn-deg-d-close}).  In this special
case our problem is somewhat reminiscent of the problem of learning a
``noisy polynomial'' that was studied by Arora and Khot \cite{AK03}. We
stress, though, that our 
setting is considerably more challenging in the following
sense:  in the \cite{AK03} framework, each data point 
is a pair $(x,y)$ where $y$
is assumed to be close to the value $p(x)$ of the target polynomial
at $x$.  In our setting the input data is \emph{unlabeled} -- we only get
points $x$ drawn from a \emph{distribution} that is $\tau$-close to
some polynomial pdf.  However, we are able to leverage some ingredients
from \cite{AK03} in our context.  We carry out a careful error
analysis using probabilistic inequalities (the VC inequality and tail bounds)
and ingredients from basic approximation theory 
to show that $\tilde{O}(d/\eps^2)$ samples suffice
for our linear program to achieve an $O(\opt_{1,d}+\eps)$-accurate
hypothesis with high probability.

Additional challenges arise when we go from a single
interval to the general case of $t$-piecewise polynomial
densities (see Section~\ref{sec:learn-piecewise-deg-d}).
The ``correct'' intervals can of course only be approximated 
rather than exactly identified, introducing an additional source of error
that needs to be carefully managed.
We formulate a dynamic program that uses the algorithm from 
Section~\ref{sec:learn-deg-d-close} as a ``black box'' to achieve our most
general learning result.
}


\ignore{
I don't know if we want to say this but: 
Our inspiration comes from Birge for learning monotone functions.
He proceeds in two steps: (1) Show that any monotone distributions 
is well-approximated by a piecewise constant density
(2) learn a piecewise constant density (histogram). 
However, (1) in Birge's case is very easy because of his 
oblivious decomposition.  Ow, one has to search for the breakpoints, 
which is essentially what we do by DP.
\rnote{I'd suggest against saying this.}}

\medskip

{

\noindent {\bf The applications.}
\ignore{For our applications we can exploit the extensive literature from 
approximation theory and related fields on piecewise polynomial (spline) 
and piecewise constant approximation of functions.
} Given our general algorithm, in order to obtain efficient learning
algorithms for specific classes of distributions, it is sufficient to
establish the existence of piecewise polynomial (or piecewise constant)
approximations to the distributions that are to be learned.
In some cases such existence results were already known; for example,
Birg\'{e} \cite{Birge:87b} provides the necessary existence result that
we require for discrete $t$-modal distributions, and classical  
results in approximation theory \cite{Dudley:74, Novak:88} give the
necessary existence results for concave distributions over continuous domains.
For log-concave densities over continuous domains, 
we prove a new structural result 
on approximation by piecewise linear densities (Lemma~\ref{lem:lc-struct})
which, combined with our general algorithm, leads 
to an optimal learning algorithm for (mixtures of) such densities.  
Finally, for \emph{$k$-monotone} distributions
we are able to leverage a recent (and quite sophisticated) 
result from the approximation theory literature \cite{KonL04, KonL07} to obtain
the required approximation result.
}

\medskip

{
\noindent {\bf Structure of this paper:} In Section~2 we include some basic preliminaries. 
In Section~3 we present our main learning result and in Section~4 we describe our applications.
}

\section{Preliminaries}

Throughout the paper for simplicity we consider distributions over the
interval $[-1,1)$.  It is easy to see that the general results given in
Section~\ref{sec:main} go through for distributions over 
an arbitrary interval $I$.  (In the applications given in 
Section~\ref{sec:applic}
we explicitly discuss the different domains over which our 
distributions are defined.)

Given a value $\kappa> 0$, 
we say that a distribution $p$ over $[-1,1)$ is \emph{$\kappa$-well-behaved}
if $\sup_{x \in [-1,1)} \Pr_{x \sim p}[x] \leq \kappa$, i.e.
no individual real value is assigned more than $\kappa$ probability under $p$.
{Any probability distribution with no atoms (and hence any 
piecewise polynomial distribution)} is 
$\kappa$-well-behaved for all $\kappa>0$, but for example the distribution
which outputs the value $0.3$ with probability $1/100$ and otherwise
outputs a uniform value in $[-1,1)$ is only $\kappa$-well-behaved
for $\kappa\geq 1/100.$
{Our results apply for general distributions over
$[-1,1)$ which may have an atomic part as well as a non-atomic part.

Throughout the paper we assume that the density $p$ is measurable.
Note that throughout the paper we only ever work with 
the probabilities $\Pr_{x \sim p}[x=z]$ of single points and probabilities
$\Pr_{x \sim p}[x \in S]$ of sets $S$ that are finite unions of intervals
and single points.}

Given a function $p: I \to \R$ on an interval ${I} \subseteq [-1,1)$
and a subinterval $J \subseteq I$, we write
$p(J)$ to denote $\int_{J} p(x) dx.$
Thus if $p$ is the pdf of a probability distribution over $[-1,1)$,
the value $p(J)$ is the probability that distribution $p$ assigns to
the subinterval $J$.  {We sometimes refer to a function $p$ over an
interval (which need not necessarily integrate to 1
over the interval) as a ``subdistribution.''}

{
Given $m$ independent samples $s_1,\dots,s_m$, drawn from  a distribution
$p$ over $[-1,1)$,
the {\em empirical distribution} $\wh{p}_m$ over $[-1,1)$
is the discrete distribution supported on $\{s_1,\dots,s_m\}$
defined as follows: for all $z \in [-1,1)$, 
$\Pr_{x \sim \wh{p}_m}[x=z] = |\{j \in [m] \mid s_j=x\}| / m$.
}

\medskip

{
\noindent {\bf Optimal piecewise polynomial approximators.}  Fix
a distribution $p$ over $[-1,1)$.  We write $\opt_{t,d}$ to denote the
value
\[
\opt_{t,d} := \inf_{r \in {\cal P}_{t,d}([-1,1))}
\dtv(p,r).
\]
Standard closure arguments can be used to show that the above infimum 
is attained by some $r \in {\cal P}_{t,d}([-1,1))$; however
this is not actually required for our purposes.\ignore{
A priori there need not exist a distribution $r \in {\cal P}_{t,d}([-1,1))$
for which $\opt_{t,d}=\dtv(p,r)$, but of course}
It is straightforward to verify that any distribution 
$\tilde{r} \in {\cal P}_{t,d}([-1,1))$
such that $\dtv(p,\tilde{r})$ is at most (say) $\opt_{t,d} + \eps/100$
is sufficient for all our arguments.
\ignore{
It is clear that for any $\eps > 0$
there exists a distribution $\tilde{r} \in {\cal P}_{t,d}([-1,1))$
such that $\dtv(p,\tilde{r})$ is at most (say) $\opt_{t,d} + \eps/100$.
It is easy to verify that wherever our arguments use the distribution 
$r$ for which $\opt_{t,d}=\dtv(p,r)$, the distribution
$\tilde{r}$ could be used instead, 
and in the final $\dtv(p,h) = c \cdot \opt_{t,d} + O(\eps)$ bounds
that we achieve only 
the constant in the $O(\eps)$ would be affected.
}
}

\ignore{
For convenience we shall sometimes assume in our arguments that there does 
indeed exist a distribution $r \in {\cal P}_{t,d}([-1,1))$ achieving
$\opt_{t,d}=\dtv(p,r)$.  This is without loss of generality because 
the end goal of our arguments is always 
to construct a distribution $h$ for which 
$\dtv(p,h) = c \cdot \opt_{t,d}
+ O(\eps)$ for some constant $c$, so even if the desired $r$ does
not exist, the distribution $\tilde{r}$ could be used in its place and only
the constant in the $O(\eps)$ would be affected.}

\medskip

\noindent {\bf Refinements.}
Let ${\cal I} = \{I_1,\dots,I_s\}$ be a partition of $[-1,1)$ into $s$
disjoint intervals, and ${\cal J} = \{J_1,\dots,J_t\}$ be a partition
of $[-1,1)$ into $t$ disjoint intervals.  We say that ${\cal J}$ is a
\emph{refinement} of ${\cal I}$ if each interval in ${\cal I}$ is a union
of intervals in ${\cal J}$, i.e. for every $a \in [s]$
there is a subset $S_a \subseteq [t]$ such that $I_a = \cup_{b \in S_a} J_b$.

For ${\cal I}= \{I_i\}_{i=1}^r$ and ${\cal I}'=\{I'_i\}_{i=1}^s$ two
partitions of $[-1,1)$ into $r$ and $s$ intervals respectively, we say that
the \emph{common refinement} of ${\cal I}$ and ${\cal I}'$ is the
partition ${\cal J}$ of $[-1,1)$ into intervals obtained from ${\cal I}$
and ${\cal I}'$ in the obvious way, by taking all possible nonempty intervals
of the form $I_i \cap I'_j.$  It is clear that ${\cal J}$ is both a refinement
of ${\cal I}$ and of ${\cal I}'$ and that ${\cal J}$ contains at most
$r+s$ intervals.

\medskip

\noindent {\bf Approximation theory.}
We will need some basic notation and 
results from approximation theory.  We write $\|p\|_{\infty}$ to
denote $\sup_{x \in [-1,1)} |p(x)|.$
We recall the famous inequalities of Bernstein and Markov bounding
the derivative of univariate polynomials:
\ignore{
\rnote{Will we use Markov -- does it help our overall bounds or no?
If not then I think there's no advantage to doing things differently
from Arora/Khot just for the sake of being different.}
\inote{My opinion is to rework the results with our improved parameters; our setting is somewhat different
and this is simple-enough. We can certainly cite them, but perhaps we should not use the words Arora-Khot more than once.}
}

\begin{theorem} \label{thm:bern-mark} For any real-valued
degree-$d$ polynomial $p$ over $[-1,1)$, we have
\begin{itemize}
\item (Bernstein's Inequality) $\|p'\|_\infty \leq \|p\|_\infty \cdot d^2;$
and
\item (Markov's Inequality) $\|p'\|_\infty \leq {\frac d {\sqrt{1-x^2}}}
\cdot \|p\|_\infty$ for all $-1 \leq x \leq 1.$
\end{itemize}
\end{theorem}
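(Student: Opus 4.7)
My plan is to treat these as classical results from approximation theory and either cite standard references (e.g., Rivlin or Cheney) or reproduce the textbook proofs; I would tackle the pointwise inequality first and then bootstrap to the global bound.

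For the pointwise inequality $|p'(x)| \leq d \cdot \|p\|_\infty / \sqrt{1-x^2}$, I would substitute $x = \cos\theta$. Writing $q(\theta) := p(\cos\theta)$ produces a real trigonometric polynomial of degree at most $d$ with $\|q\|_\infty = \|p\|_\infty$. The classical trigonometric Bernstein inequality asserts $|q'(\theta)| \leq d \cdot \|q\|_\infty$ for all $\theta$; since $q'(\theta) = -\sin\theta \cdot p'(\cos\theta)$, rearranging yields exactly the desired bound. I would prove the trigonometric Bernstein inequality itself via the Riesz interpolation formula, expressing $q'(\theta)$ as a finite linear combination of the values $q(\theta + \alpha_j)$ whose coefficients sum in absolute value to exactly $d$.

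For the global inequality $\|p'\|_\infty \leq d^2 \cdot \|p\|_\infty$, the pointwise bound already yields $|p'(x)| \leq d^2 \cdot \|p\|_\infty$ on any sub-interval $[-1+\delta, 1-\delta]$ with $\delta \geq 1/d^2$, so the real task is to handle a neighborhood of the endpoints $\pm 1$. Here I would invoke the Chebyshev extremality argument: after normalizing so that $\|p\|_\infty \leq 1$, one expands $p$ via Lagrange interpolation at the $d+1$ extrema $x_k = \cos(k\pi/d)$ of the Chebyshev polynomial $T_d$, differentiates termwise, and bounds the resulting sum at $x = \pm 1$. The key numerical fact is $|T_d'(\pm 1)| = d^2$, which both certifies that $T_d$ is the extremal polynomial and caps the sum by $d^2$.

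The main obstacle, or rather the main content, is the trigonometric Bernstein inequality and the Chebyshev extremality fact. Both require genuine work but are thoroughly classical (and appear in essentially every approximation-theory textbook). Since the paper uses these inequalities only as black-box tools in subsequent estimates, in practice I would simply cite a standard reference rather than reproduce either proof in full.
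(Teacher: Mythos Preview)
The paper provides no proof of this theorem; it is simply stated in the Preliminaries section, introduced by the phrase ``We recall the famous inequalities of Bernstein and Markov bounding the derivative of univariate polynomials.'' Your instinct to cite a standard reference rather than reproduce the proofs is exactly what the paper does, and the textbook arguments you outline (substitution $x=\cos\theta$ plus the trigonometric Bernstein inequality for the pointwise bound, and Chebyshev extremality for the global $d^2$ bound) are correct and standard.

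One minor remark: the paper's labeling of the two inequalities is swapped relative to the standard conventions you implicitly follow in your outline. In the approximation-theory literature, the global bound $\|p'\|_\infty \leq d^2\|p\|_\infty$ is usually attributed to A.~A.~Markov (1889), while the pointwise bound $|p'(x)|\leq d\|p\|_\infty/\sqrt{1-x^2}$ is Bernstein's (1912). This does not affect correctness, but be aware of it if you are looking these up in Rivlin or Cheney.
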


\smallskip

\noindent {\bf The VC inequality.}
Given a family of subsets $\mathcal A$ over $[-1,1)$, 
define $\norm p_{\mathcal A}
= \sup_{A\in \mathcal A} |p(A)|$.
The \emph{VC dimension} of $\mathcal A$ is the maximum size of a subset
$X\subset [-1,1)$ that is shattered by $\mathcal A$ (a set $X$ is shattered by
$\mathcal A$ if for every $Y \subseteq X$,
some $A\in\mathcal A$ satisfies
$A\cap X = Y$).
If there is a shattered subset of size $s$ for all $s$ then we say
that the VC dimension of ${\cal A}$ is $\infty$.
{The well-known \emph{Vapnik-Chervonenkis (VC) inequality} says
the following:}

\begin{theorem}[VC inequality, {\cite[p.31]{DL:01}}]
\label{thm:vc-inequality}
Let $\widehat{p}_m$ be an empirical distribution of $m$ samples from $p$.
Let $\mathcal A$ be a family of subsets of VC dimension $d$.
Then
$ \E[ \norm{p - \widehat{p}_m}_{\mathcal A}] \leq O(\sqrt{d/m}) .$
\end{theorem}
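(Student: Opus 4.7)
The plan is to prove the bound by the classical symmetrization-plus-VC argument, with a chaining step at the end to shave off the logarithmic factor that a naive union bound would produce. I would carry out three conceptual steps: (i) symmetrize by introducing a ghost sample, (ii) insert Rademacher signs to convert the problem into bounding a Rademacher process, and (iii) bound the resulting finite (conditional) maximum via Sauer--Shelah and Dudley chaining.

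First, draw a ``ghost'' sample $s'_1,\dots,s'_m$ independently from $p$ and let $\wh{p}'_m$ denote its empirical distribution. Since $p(A) = \E[\wh{p}'_m(A)]$ for every fixed $A \in \mathcal{A}$, a standard Jensen step gives $\E\|\wh{p}_m - p\|_{\mathcal{A}} \leq \E\|\wh{p}_m - \wh{p}'_m\|_{\mathcal{A}}$. Because the pairs $(s_i, s'_i)$ are i.i.d.\ and their coordinates are exchangeable, we may multiply each difference $\mathbf{1}_A(s_i) - \mathbf{1}_A(s'_i)$ by an independent Rademacher sign $\sigma_i \in \{-1,+1\}$ without changing the distribution of the supremum. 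This reduces the problem to controlling $\E \sup_{A\in\mathcal{A}} \left| \tfrac{1}{m}\sum_{i=1}^m \sigma_i (\mathbf{1}_A(s_i) - \mathbf{1}_A(s'_i)) \right|$.

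Second, condition on the pooled sample $S = \{s_i,s'_i\}_{i=1}^m$ of size $2m$. The restriction map $A \mapsto A \cap S$ takes values in a finite set, and Sauer--Shelah implies that this image has size at most $(2em/d)^d$ whenever $m \geq d$. Thus the conditional expectation reduces to the expected maximum of at most $(2em/d)^d$ sub-Gaussian Rademacher sums, each of variance proxy at most $1/m$; Hoeffding together with a union bound immediately yields a bound of $O(\sqrt{d \log(m/d)/m})$, which is off by a $\sqrt{\log(m/d)}$ factor from the claimed rate.

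Third, to remove this extraneous factor and reach the stated $O(\sqrt{d/m})$, I would replace the single union bound by a Dudley chaining argument applied to the Rademacher process indexed by $\mathcal{A}$ in the empirical $L^2$ metric on $S$. The VC property implies that the $\eps$-covering numbers of $\mathcal{A}$ are at most $(C/\eps)^{O(d)}$ for a universal constant $C$, so the entropy integral $\int_0^1 \sqrt{d \log(1/\eps)}\, d\eps$ is a finite universal constant times $\sqrt{d}$, producing the desired bound after dividing by $\sqrt{m}$. The main obstacle is precisely this last step: the symmetrization and Sauer--Shelah portions are routine, but eliminating the logarithm requires a careful chaining argument and the nontrivial fact that VC classes have polynomial $L^2$-covering numbers at every scale. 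For the detailed chaining bookkeeping I would appeal to the treatment in Devroye--Lugosi.
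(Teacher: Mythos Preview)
Your sketch is a correct outline of the standard proof of the log-free VC inequality: symmetrization by a ghost sample, insertion of Rademacher signs, Sauer--Shelah to reduce to finitely many sets, and then Dudley chaining together with Haussler's polynomial $L^2$-covering bound for VC classes to remove the $\sqrt{\log(m/d)}$ factor. The paper, however, does not prove this statement at all; it is quoted as a known result with a citation to Devroye--Lugosi and used as a black box, so there is no ``paper's own proof'' to compare against. Your proposal simply supplies the textbook argument that the cited reference contains.
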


\ignore{
\noindent {\bf Uniform convergence.} \rnote{Do we use this? If not, let's 
get rid of it.}
We will also use the following uniform convergence bound:

\begin{theorem}[{\cite[p17]{DL:01}}]
\label{thm:bdd-diff}
Let $\mathcal A$ be a family of subsets over $[-1,1)$, and $\widehat{p}_m$ 
be an empirical distribution of $m$ samples from $p$.
Let $X$ be the random variable $\norm{p - \widehat{p}_m}_{\mathcal A}$.
Then we have
$ \Pr[X - \E[X] > \eta] \leq e^{-2m\eta^2}.$
\end{theorem}

}

\subsection{Partitioning into intervals of approximately equal mass.}

As a basic primitive, 
we will often need to decompose a $\kappa$-well-behaved 
distribution $p$ into 
$\Theta(1/\kappa)$ intervals each of which has probability $\Theta(\kappa)$
under $p$.  The following lemma lets us achieve this using
$\tilde{O}(1/\kappa)$ samples; the simple proof is given in 
Appendix~\ref{ap:z}.

\begin{lemma} \label{lem:part-approx-unif}
Given $0 < \kappa< 1$ and access to samples from an $\kappa/64$-well-behaved
distribution $p$ over $[-1,1)$,
the procedure {\tt Approximately-Equal-Partition} uses $\tilde{O}(1/\kappa)$
samples from $p$, runs in time $\tilde{O}(1/\kappa)$, and 
with probability at least $99/100$ outputs a partition of
$[-1,1)$ into $\ell=\Theta(1/\kappa)$ intervals such that $p(I_j)
\in [{\frac 1 {2 \kappa}}, {\frac 3 \kappa}]$ for all $1 \leq j \leq \ell.$
\end{lemma}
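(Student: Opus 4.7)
The plan is a classical equal-weight histogram construction via order statistics, tuned to get $\Theta(\kappa)$ mass per bucket. Concretely, I would have {\tt Approximately-Equal-Partition} draw $m = C \kappa^{-1} \log(1/\kappa)$ i.i.d.\ samples from $p$, sort them to get $X_{(1)} \leq \cdots \leq X_{(m)}$, pick a step size $r = \Theta(\log(1/\kappa))$ chosen so that $r/m$ is of order $\kappa$, and output the partition with cuts at $X_{(r)}, X_{(2r)}, \ldots, X_{((\ell-1)r)}$, where $\ell = m/r = \Theta(1/\kappa)$ (the first and last intervals are extended to include $-1$ and $1$, respectively). This uses $\tilde O(1/\kappa)$ samples and $\tilde O(1/\kappa)$ time, the latter being dominated by the sort.

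The main analytical step is to show that, with probability at least $99/100$, every such bucket satisfies $p(I_j) \in [\kappa/2,\, 3\kappa]$ (I read the stated bound $[\frac{1}{2\kappa},\frac{3}{\kappa}]$ as a typo for $[\kappa/2, 3\kappa]$, which is what the proof naturally yields). First assume $p$ has no atoms, so its CDF $F$ is continuous; then $U_i := F(X_i)$ are i.i.d.\ uniform on $[0,1]$, and
\[
p(I_j) \;=\; F\bigl(X_{(jr)}\bigr) - F\bigl(X_{((j-1)r)}\bigr) \;=\; U_{(jr)} - U_{((j-1)r)},
\]
which is distributed as $\mathrm{Beta}(r, m+1-r)$ with mean $r/(m+1) = \Theta(\kappa)$. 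Using the standard equivalence $\Pr[\mathrm{Beta}(r, m+1-r) \leq t] = \Pr[\mathrm{Bin}(m, t) \geq r]$ together with a Chernoff bound, for an appropriate choice of the constants in $r$ and $m$ we get $\Pr[p(I_j) \notin [\kappa/2, 3\kappa]] \leq 2\exp(-\Omega(r)) = \kappa^{\Omega(1)}$. A union bound over the $\ell = \Theta(1/\kappa)$ bins, with the constant in $r$ taken large enough, drives the total failure probability below $1/100$.

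The main obstacle is handling the atomic part of $p$, since then the probability-integral-transform trick does not literally send order statistics of $p$ to order statistics of the uniform distribution. The cleanest workaround is smoothing: let $\widetilde p$ be the distribution obtained by adding an independent perturbation of magnitude smaller than any relevant scale to each sample; apply the argument above to $\widetilde p$ to control the masses of the $\widetilde p$-intervals, and then transfer the cut points back to $[-1,1)$. Because $p$ is $(\kappa/64)$-well-behaved, every atom carries mass at most $\kappa/64$, so attributing any atom that happens to coincide with a cut point entirely to one side shifts the mass of each $I_j$ by at most $2 \cdot \kappa/64 = \kappa/32$, which is negligible relative to the target window $[\kappa/2, 3\kappa]$. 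Combining this with the Beta concentration yields the claimed bounds for the original $p$, completing the proof.
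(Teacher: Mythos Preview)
Your approach is correct and essentially identical to the paper's: both draw $\tilde O(1/\kappa)$ samples, cut at equally spaced order statistics, and use Chernoff-type concentration (you via the Beta--Binomial identity for uniform order-statistic gaps, the paper via sample counts in a fixed $8/\kappa$-cell grid on the CDF range) to show each bucket has mass $\Theta(\kappa)$. You are also right that the stated range $[\tfrac{1}{2\kappa},\tfrac{3}{\kappa}]$ is a typo for $[\kappa/2,3\kappa]$; the only minor difference is that the paper absorbs atoms directly through the $\kappa/64$-well-behavedness bound on the mass of each CDF-cell preimage, rather than via a separate smoothing step.
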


\section{Main result:  Learning mixtures of piecewise polynomial distributions 
with near-optimal sample complexity} \label{sec:main}

In this section we present and analyze our main algorithm for learning
mixtures of $(\tau,t)$-piecewise degree-$d$ distributions over
$[-1,1)$.  

We start by giving a simple information-theoretic argument (Proposition~\ref{prop:info}, Section~\ref{sec:ineff}) showing that there 
is a (computationally inefficient) algorithm to learn any
distribution $p$ to 
accuracy $3 \opt_{t,d} + \eps$ 
using $O(t(d+1)/\eps^2)$ samples,
where {$\opt_{t,d}$ is the smallest variation distance between $p$ and
any $t$-piecewise degree-$d$ distribution.}
\new{Next, we contrast this information-theoretic positive result with
an information-theoretic lower bound (Theorem~\ref{thm:lower-bound-precise}, Section~\ref{sec:lower-bound}) showing that any algorithm, regardless
of its running time, for learning a $t$-piecewise degree-$d$
distribution to accuracy $\eps$ must use
$\Omega({\frac {t(d+1)} {\poly(1+\log(d+1))}} \cdot {\frac 1 {\eps^2}})$
samples.}
We then build up to our main result in stages by giving efficient algorithms
for successively more challenging learning problems.

In Section~\ref{sec:learn-deg-d-close} we give an efficient ``semi-agnostic''
algorithm for learning a single degree-$d$ pdf.  
More precisely, the algorithm 
draws $\tilde{O}((d+1)/\eps^2)$ samples from any well-behaved
distribution $p$, and with high probability outputs a degree-$d$ pdf $h$
such that $\dtv(p,h) \leq 3 \opt_{1,d}{(1+\eps)}+ \eps$.
This algorithm uses ingredients from approximation theory and
linear programming.
In Section~\ref{sec:learn-piecewise-deg-d} we extend the approach using
dynamic programming to obtain an efficient ``semi-agnostic''
algorithm for $t$-piecewise degree-$d$ pdfs.
The extended algorithm 
draws $\tilde{O}(t(d+1)/\eps^2)$ samples from any well-behaved
distribution $p$, and with high probability outputs a 
$(2t-1)$-piecewise degree-$d$ pdf 
$h$ such that $\dtv(p,h) \leq 3 \opt_{t,d}{(1+\eps)}+ \eps$.
In Section~\ref{sec:mix}
we extend the result to $k$-mixtures of well-behaved distributions.
Finally, in Section~\ref{sec:kill-wb}
we show how we may get rid of the ``well-behaved'' requirement,
and thereby prove Theorem~\ref{thm:main2}.

\subsection{An information-theoretic sample complexity upper bound.}
\label{sec:ineff}

\begin{proposition} \label{prop:info}
There is a (computationally inefficient) algorithm that {draws}
$O(t(d+1) /\eps^2)$ samples from any 
distribution $p$ over $[-1,1)$, and with probability $9/10$
outputs a hypothesis distribution $h$ such that $\dtv(p,h) 
\leq 3 \opt_{t,d}+ \eps$.
\end{proposition}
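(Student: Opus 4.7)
The plan is to use a minimum-distance estimator over the class ${\cal P}_{t,d}([-1,1))$, combined with the VC inequality applied to the natural family of Scheff\'{e} sets for the class. This is the standard Yatracos--Devroye--Lugosi paradigm, so the only content is in computing the VC dimension of the relevant set family.

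\medskip

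\noindent\textbf{Step 1 (the Scheff\'{e} family and its VC dimension).} Let
\[
\Acal \;=\; \bigl\{\,\{x\in[-1,1): q_1(x) > q_2(x)\}\ :\ q_1,q_2\in {\cal P}_{t,d}([-1,1))\,\bigr\}.
\]
If $q_1,q_2\in {\cal P}_{t,d}([-1,1))$, then on the common refinement of their partitions, which has at most $2t$ pieces, the difference $q_1-q_2$ is a degree-$d$ polynomial on each piece. Since a nonzero degree-$d$ polynomial has at most $d$ zeros, the sign of $q_1-q_2$ changes at most $2t(d+1)-1$ times, so each set in $\Acal$ is a union of at most $O(t(d{+}1))$ intervals. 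The family of unions of at most $k$ intervals in $\R$ has VC dimension $O(k)$, so $\Acal$ has VC dimension $D=O(t(d{+}1))$.

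\medskip

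\noindent\textbf{Step 2 (uniform convergence on $\Acal$).} Draw $m=C\cdot t(d{+}1)/\eps^2$ samples from $p$ for a sufficiently large constant $C$, and form the empirical distribution $\wh p_m$. By the VC inequality (Theorem~\ref{thm:vc-inequality}) we have $\E[\norm{p-\wh p_m}_\Acal]\leq O(\sqrt{D/m})\leq \eps/10$, and Markov's inequality gives
\[
\norm{p-\wh p_m}_\Acal \leq \eps/4
\]
with probability at least $9/10$. Condition on this event.

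\medskip

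\noindent\textbf{Step 3 (the (inefficient) estimator).} As the hypothesis, output any $h\in {\cal P}_{t,d}([-1,1))$ with
\[
\norm{\wh p_m - h}_\Acal \;\leq\; \inf_{h'\in {\cal P}_{t,d}([-1,1))} \norm{\wh p_m - h'}_\Acal \;+\; \eps/4.
\]
This is information-theoretic (we are free to search the uncountable class ${\cal P}_{t,d}([-1,1))$), which is what makes the algorithm computationally inefficient.

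\medskip

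\noindent\textbf{Step 4 (error analysis by triangle inequalities).} Let $r\in {\cal P}_{t,d}([-1,1))$ achieve $\dtv(p,r)\leq \opt_{t,d}+\eps/10$ (which exists by definition of the infimum). Then by the choice of $h$,
\[
\norm{\wh p_m-h}_\Acal \;\leq\; \norm{\wh p_m-r}_\Acal + \eps/4 \;\leq\; \norm{p-r}_\Acal + \norm{p-\wh p_m}_\Acal + \eps/4 \;\leq\; \opt_{t,d} + \eps/2 + \eps/10.
\]
Since $h,r\in {\cal P}_{t,d}([-1,1))$, the set $\{x:h(x)>r(x)\}$ lies in $\Acal$, so $\dtv(h,r)=\norm{h-r}_\Acal$. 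Hence
\[
\dtv(h,r) \;\leq\; \norm{h-\wh p_m}_\Acal + \norm{\wh p_m - p}_\Acal + \norm{p-r}_\Acal \;\leq\; (\opt_{t,d} + 6\eps/10) + \eps/4 + (\opt_{t,d}+\eps/10).
\]
A final triangle inequality on total variation distance,
\[
\dtv(p,h)\;\leq\; \dtv(p,r) + \dtv(r,h) \;\leq\; \opt_{t,d} + \dtv(h,r),
\]
yields $\dtv(p,h)\leq 3\,\opt_{t,d}+\eps$, after absorbing the $O(\eps)$ slack by choosing the constant $C$ in Step 2 appropriately.

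\medskip

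\noindent\textbf{What is the main obstacle?} There is essentially no deep obstacle: the proof is a textbook application of the minimum-distance estimator. The only substantive calculation is the VC dimension bound in Step~1, which reduces to counting sign changes of a difference of two $t$-piecewise degree-$d$ polynomials; everything else is triangle inequalities plus Theorem~\ref{thm:vc-inequality}.
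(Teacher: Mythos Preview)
Your proof is correct and follows essentially the same route as the paper's: both use a minimum-$\|\cdot\|_{\Acal}$ estimator together with the VC inequality, where $\Acal$ has VC dimension $O(t(d{+}1))$, and finish with the same chain of triangle inequalities culminating in $\dtv(h,r)=\|h-r\|_{\Acal}$ because $h,r\in\Pcal_{t,d}$. The only cosmetic difference is that the paper takes $\Acal$ to be the family of all unions of at most $2t(d{+}1)$ intervals directly, whereas you take $\Acal$ to be the Yatracos/Scheff\'e class $\{\{q_1>q_2\}:q_1,q_2\in\Pcal_{t,d}\}$ and then observe (via the sign-change count) that it is contained in such a union-of-intervals family; this is the same VC computation packaged slightly differently.
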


\begin{proof}
The main idea is to use Theorem~\ref{thm:vc-inequality}, the VC inequality.
Let $p$ be the target distribution and let $q$ be a $t$-piecewise degree-$d$
distribution such that $\dtv(p,q) = {\opt_{t, d}}.$
The algorithm draws $m = O(t(d+1) /\eps^2)$ samples from $p$; let 
$\widehat{p}_m$ be the resulting empirical distribution of these $m$ samples.

We define the family ${\cal A}$ of subsets of $[-1,1)$ to consist
of all unions of up to $2t(d+1)$ intervals.  Since $\dtv(p,q) \leq {\opt_{t, d}}$
we have that $\|p-q\|_{\cal A} \leq {\opt_{t, d}}$.  Since the
VC dimension of ${\cal A}$ is $4t(d+1)$, Theorem~\ref{thm:vc-inequality}
implies that $\E[\|p-\widehat{p}_m\|_{{\cal A}}] \leq \eps/40$, and
hence by Markov's inequality, with probability at least $19/20$
we have that $\|p-\widehat{p}_m\|_{\cal A} \leq \eps/2.$
By the triangle inequality for $\| \cdot \|_{\cal A}$-distance,
this means that $\|q-\widehat{p}_m\|_{\cal A}
\leq {\opt_{t, d}}  + \eps/2.$

The algorithm outputs a $t$-piecewise degree-$d$ distribution $h$
that minimizes $\|h-\widehat{p}_m\|_{\cal A}$.  Since $q$ is a 
$t$-piecewise degree-$d$ distribution that satisfies $\|q-\widehat{p}_m\|_{\cal A}
\leq {\opt_{t, d}} + \eps/2$, the distribution $h$ satisfies 
$\|h - \widehat{p}_m\|_{\cal A} \leq {\opt_{t, d}} + \eps/2.$  
Hence the triangle inequality gives $\|h - q \|_{\cal A}
\leq 2 {\opt_{t, d}} + \eps.$

Now since $h$ and $q$ are both $t$-piecewise degree-$d$ distributions, they
must have at most $2t(d+1)$ crossings.  (Taking the common refinement of the
intervals for $p$ and the intervals for $q$, we get at most $2t$
intervals.  Within each such interval both $h$ and $q$ are degree-$d$
polynomials, so there are at most $2t(d+1)$ crossings in total (where
the extra $+1$ comes from the endpoints of each of the $2t$ intervals).)
Consequently we have that $\dtv(h,q) = \|h-q\|_{\cal A} \leq 2 {\opt_{t, d}}
+ \eps.$
The triangle inequality for variation distance
gives that $\dtv(h,p) \leq 3 {\opt_{t, d}}  + \eps$, and the proof is complete. 
\end{proof}

{It is not hard to see that the dependence on each of the parameters $t, d, 1/\eps$ in the above upper bound
is information-theoretically optimal.}

Note that the algorithm described above is not efficient because it is
{by no means} clear how to construct a
$t$-piecewise degree-$d$ distribution $h$
that minimizes $\|h-\widehat{p}_m\|_{\cal A}$ in a computationally efficient
way. {Indeed, several approaches to solve this problem yield running times that grow exponentially
in $t, d$}. 
Starting in Section~\ref{sec:learn-deg-d-close}, we 
give an algorithm that achieves almost
the same sample complexity but runs in time $\poly(t,d,1/\eps).$
{The main idea is that minimizing $\norm\cdot_{\mathcal A}$ (which involves
infinitely many inequalities) can be approximately achieved by minimizing a
small number of inequalities (\cref{def:interval-ineq,lem:ad-dist}), and this
can be achieved with a linear program.}

\new{
\subsection{An information-theoretic sample complexity lower bound.}
\label{sec:lower-bound}

To complement the information-theoretic upper bound from the previous
subsection, in this subsection we prove an information-theoretic lower bound
showing that even if $\opt_{t,d}=0$ (i.e. the target distribution $p$
is exactly a $t$-piecewise degree-$d$ distribution), 
$\tilde{\Omega}(t(d+1)/\eps^2)$ samples are required for any algorithm to learn
to accuracy $\eps$:

\begin{theorem}
\label{thm:lower-bound-precise}
Let $p$ be an unknown $t$-piecewise degree-$d$ distribution over 
$[-1,1)$
where $t\geq 1,$ $d \geq 0$ satisfy $t+d > 1.$
\footnote{Note that $t=1$ and $d=0$ is a degenerate case
where the only possible distribution $p$ is the
uniform distribution over $[-1,1)$.}
Let $L$ be any algorithm which, given as input
$t,d,\eps$ and access to independent samples from $p$,
outputs a hypothesis distribution $h$ such that
$\E[\dtv(p,h)] \leq \eps$, where the expectation is over
the random samples drawn from $p$ and any internal randomness of 
$L$.  Then $L$ must use at
least $\Omega({\frac {t(d+1)}{(1+\log (d+1))^2}} \cdot {\frac 1 {\eps^2}})$
samples.
\end{theorem}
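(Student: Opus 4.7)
The plan is to establish the lower bound via a standard packing argument combined with Fano's inequality: construct a family $\{p_s\}_{s \in \mathcal{C}}$ of $t$-piecewise degree-$d$ densities over $[-1,1)$, indexed by a code $\mathcal{C}$ of binary strings, such that (a) the $p_s$'s are pairwise $\geq 4\eps$-separated in total variation distance and (b) all pairwise KL divergences are $O(\eps^2)$. A standard reduction (triangle inequality for $\dtv$ plus Markov's inequality applied to the guarantee $\E[\dtv(p,h)] \leq \eps$) turns any admissible algorithm into a test that identifies the hidden index $s \in \mathcal{C}$ with constant probability, and Fano then forces at least $m = \Omega(\log|\mathcal{C}| / \max_{s,s'}\mathrm{KL}(p_s\|p_{s'}))$ samples.

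To build the family, fix the equal-length partition of $[-1,1)$ into $t$ intervals $I_1,\ldots,I_t$ (using the oblivious evenly-spaced breakpoints permitted by the theorem). Inside each $I_j$ I subdivide further into $D = \Theta((d+1)/(1+\log(d+1))^2)$ equal sub-intervals $J_{j,1},\ldots,J_{j,D}$, and on $I_j$ I construct polynomial ``bumps'' $\phi_{j,m}$, each of degree at most $d$, satisfying (i) $\phi_{j,m}(x) \geq 1/2$ on the middle of $J_{j,m}$, (ii) $|\phi_{j,m}(x)| \leq C/D$ for $x \in I_j \setminus J_{j,m}$, and (iii) $\|\phi_{j,m}\|_\infty \leq 2$ on $I_j$. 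Such bumps come from classical approximation-theoretic constructions of sharp polynomial cutoffs for interval indicators. By (ii)--(iii), the partial sum $\sum_m s_{j,m}\phi_{j,m}$ has $L^\infty$ norm $O(1)$ on $I_j$ uniformly in $s$, so for $\alpha = \Theta(\eps)$ small enough,
\[
p_s(x) \;:=\; \tfrac{1}{2} \;+\; \alpha \sum_{j=1}^{t} \sum_{m=1}^{D} s_{j,m} \phi_{j,m}(x) \cdot \mathbf{1}[x \in I_j]
\]
(modulo a negligible additive correction to force $\int p_s = 1$) is a valid density, and is $t$-piecewise of degree $d$ as required.

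For the code I would take $\mathcal{C} \subseteq \{-1,+1\}^{tD}$ to be a Gilbert--Varshamov code of size $2^{\Omega(tD)}$ and minimum Hamming distance $\geq tD/4$. Property (ii) makes the bumps $\phi_{j,m}$ have near-disjoint $L^1$ supports, so for any $s \neq s' \in \mathcal{C}$ the TV distance is $\Theta(\alpha)$, and choosing $\alpha$ proportional to $\eps$ arranges $\dtv(p_s,p_{s'}) \geq 4\eps$. A $\chi^2$ estimate, using $p_{s'} \geq 1/4$ and the near-orthogonality of the bumps, then gives $\mathrm{KL}(p_s\|p_{s'}) \leq \chi^2(p_s\|p_{s'}) = O(\alpha^2) = O(\eps^2)$ uniformly in $s,s'$. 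Plugging into Fano yields $m \geq \Omega(\log|\mathcal{C}|/\eps^2) = \Omega(tD/\eps^2) = \Omega\big(t(d+1) / ((1+\log(d+1))^2 \eps^2)\big)$, which is the claimed lower bound.

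The hard part will be the bump construction: forcing the $\phi_{j,m}$'s to be simultaneously almost disjointly supported, of degree at most $d$, and uniformly $O(1)$-bounded requires a careful balance between the sharp-cutoff degree at each edge of $J_{j,m}$ and the required uniform tail decay throughout the rest of $I_j$. A Chebyshev-type step approximation with transition width $\sim 1/D$ and uniform tail $\sim 1/D$ has degree on the order of $D(1+\log D)$, and enforcing the tail bound uniformly across the entire complement of $J_{j,m}$ (rather than just outside a margin) requires sharpening the cutoff further, which introduces the second logarithmic factor and pins down $D = \Theta((d+1)/(1+\log(d+1))^2)$, producing the final form of the bound.
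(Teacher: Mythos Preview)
Your approach is correct and closely parallels the paper's, but the packaging differs in two ways worth noting. First, the paper uses Assouad's lemma rather than Fano/Gilbert--Varshamov: it indexes the hard family directly by the full hypercube $\{-1,1\}^k$ and only needs to verify the two Assouad conditions (TV separation and Hellinger affinity) for \emph{Hamming-neighboring} pairs $b,b'$, which slightly simplifies the per-pair estimates. Second, the paper reduces to $t=1$ first (by a concatenation argument over $t$ disjoint sub-intervals) and then proves the single-polynomial case, whereas you handle general $t$ in one shot. Both routes are standard and yield the same bound.

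The core construction is the same in both: polynomial ``bumps'' approximating interval indicators, built from sharp step-function approximations so that $\Theta(d/\mathrm{polylog}(d))$ nearly-disjoint bumps can be packed into a single degree-$d$ polynomial. The paper invokes the explicit construction from \cite{DGJ+10:bifh} (their Lemma~3.7), which gives a sign-approximator of degree $O((\log(1/\tau))^2/\tau)$ with error $\tau^{10}$ outside a margin of width~$\tau$; this is where the \emph{two} logarithmic factors in $D=\Theta(d/(\log d)^2)$ come from. Your final paragraph's rationale for the second log (``enforcing the tail bound uniformly across the entire complement'') is not quite right --- the Chebyshev/Eremenko--Yuditskii step approximator already achieves uniform error~$\delta$ on the full complement of the margin at degree $\Theta((1/\tau)\log(1/\delta))$, so with $\tau,\delta\sim 1/D$ you would get degree $\Theta(D\log D)$ and hence $D=\Theta(d/\log d)$, a \emph{stronger} lower bound than the theorem states. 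This is not a gap in your argument (a stronger bound certainly implies the stated one); it just means your claimed source of the second log is spurious, and the paper's extra log is an artifact of the specific \cite{DGJ+10:bifh} construction rather than a necessity. One genuinely minor point you gloss over: the ``negligible additive correction to force $\int p_s=1$'' is cleanest handled by making each bump mean-zero on $I_j$ (subtract its average, which is $O(1/D)$ and doesn't disturb (i)--(iii)), or by pairing bumps with opposite signs as the paper does via its $(1\pm\eps)/(2k)$ weights --- otherwise the correction $c_s$ can be $\Theta(\alpha)$ and would need to be tracked through the TV and KL estimates.
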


Theorem~\ref{thm:lower-bound-precise} is proved using a well known
lemma of Assouad \cite{Assouad:83}, together with carefully tailored
constructions of polynomial probability density functions to meet the conditions
of Assouad's lemma.
The proof of Theorem~\ref{thm:lower-bound-precise} is deferred to Appendix~\ref{ap:lower}.
}

\subsection{Semi-agnostically learning a {degree-$d$ polynomial density}
with near-optimal sample complexity.} \label{sec:learn-deg-d-close}

In this section we prove the following:

\begin{theorem} \label{thm:agno-kis1}
Let $p$ be an ${\frac \eps {64(d+1)}}$-well-behaved 
pdf over $[-1,1)$.
There is an algorithm\\ {\tt Learn-WB-Single-Poly}$(d,\eps)$
which runs in poly$(d+1,1/\eps)$
time, uses $\tilde{O}((d+1)/\eps^2)$ samples from $p$,
and with probability at least $9/10$ outputs a degree-$d$
polynomial $q$ which defines a pdf over $[-1,1) $
such that $\dtv(p,q) \leq 3 \opt_{1,d}
{(1+\eps)} + O(\eps)$.
\end{theorem}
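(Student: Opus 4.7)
The plan is to combine a data-driven interval partition with a linear program that searches for the coefficients of the hypothesis polynomial. First, use $\tilde O((d+1)/\eps)$ samples to invoke \texttt{Approximately-Equal-Partition} (Lemma~\ref{lem:part-approx-unif}) with $\kappa = \Theta(\eps/(d+1))$, producing a partition $\mathcal I = \{I_1,\dots,I_\ell\}$ of $[-1,1)$ into $\ell = \Theta((d+1)/\eps)$ intervals, each of $p$-mass $\Theta(\eps/(d+1))$. The well-behavedness hypothesis $\sup_x \Pr_{x\sim p}[x]\leq \eps/(64(d+1))$ is exactly the threshold that makes this partition achievable via that lemma. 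Then draw $m = \tilde O((d+1)/\eps^2)$ fresh samples, forming the empirical distribution $\widehat p_m$. Writing $\mathcal A_k$ for the family of unions of at most $k$ subintervals of $[-1,1)$ (whose VC dimension is $O(k)$), the VC inequality of Theorem~\ref{thm:vc-inequality} applied with $k = 4(d+1)$, followed by Markov's inequality, yields $\|p-\widehat p_m\|_{\mathcal A_{4(d+1)}} \leq \eps/100$ with probability at least $19/20$.

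Next, set up a linear program whose variables are the coefficients $a_0,\dots,a_d$ of $q(x)=\sum_i a_i x^i$ together with a slack $\tau$; minimize $\tau$ subject to (a) $\int_{-1}^1 q(x)\,dx=1$; (b) $q(x) \geq 0$ on $[-1,1)$, enforced by non-negativity at a sufficiently fine grid combined with the derivative bound of Theorem~\ref{thm:bern-mark} to control the behavior of $q$ between grid points (or equivalently by taking $|q|$ and renormalizing in a post-processing step); and (c) $\bigl|q(\bigcup_{j\in S}I_j) - \widehat p_m(\bigcup_{j\in S}I_j)\bigr|\leq \tau$ for every $S\subseteq [\ell]$ with $|S|\leq 2(d+1)$. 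Although (c) names exponentially many subsets, the quantity $\max_{|S|\leq 2(d+1)}\bigl|\sum_{j\in S}(q(I_j)-\widehat p_m(I_j))\bigr|$ equals the sum of the top-$2(d+1)$ positive (or negative) values of $v_j := q(I_j)-\widehat p_m(I_j)$, and this is expressible with $O(\ell)$ linear inequalities via auxiliary non-negative variables $\lambda^\pm,\mu_j^\pm$ satisfying $\mu_j^\pm+\lambda^\pm \geq \pm v_j$ and $\sum_j \mu_j^\pm + 2(d+1)\lambda^\pm \leq \tau$. The LP therefore has size $\poly(d,1/\eps)$ and is solvable in the required time.

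Finally, analyze the output $q$. Let $r$ be a degree-$d$ polynomial pdf with $\dtv(p,r)\leq \opt_{1,d}(1+\eps/100)$. Plugging $r$'s coefficients into the LP is feasible with objective value at most $\opt_{1,d}(1+\eps/100)+\eps/100$, since $\|r-\widehat p_m\|_{\mathcal A_{2(d+1)}}\leq \|r-p\|_{\mathcal A_{2(d+1)}}+\|p-\widehat p_m\|_{\mathcal A_{4(d+1)}}\leq \opt_{1,d}(1+\eps/100)+\eps/100$; thus $\tau^\star\leq \opt_{1,d}(1+\eps)+O(\eps)$. To bound $\dtv(p,q)$, write $\dtv(p,q)\leq \dtv(p,r)+\dtv(r,q)$ and observe that since $r-q$ is a degree-$d$ polynomial with at most $d$ zeros, $\dtv(r,q) = (r-q)(S^\star)$ with $S^\star=\{x : r(x)>q(x)\}$ a union of at most $d+1$ \emph{arbitrary} intervals. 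Snap $S^\star$ to a union $S'$ of entire partition intervals; the symmetric difference $\Delta = S^\star\triangle S'$ consists of at most $2(d+1)$ fragments, each contained in a single $I_j$, so it lies in the union $\widetilde{\Delta}$ of at most $2(d+1)$ partition intervals. The approximately-equal-mass property gives $p(\widetilde{\Delta})\leq O(\eps)$; the VC bound gives $\widehat p_m(\widetilde{\Delta})\leq O(\eps)$; and the LP constraint applied to $\widetilde{\Delta}$ gives $q(\widetilde{\Delta})\leq \widehat p_m(\widetilde{\Delta})+\tau^\star \leq O(\eps)+\tau^\star$. Substituting these into the chain of triangle inequalities for $|r(S^\star)-q(S^\star)|$ via the intermediate quantities $p(S^\star)$, $\widehat p_m(S^\star)$, $\widehat p_m(S')$, $q(S')$ yields the claimed bound $\dtv(p,q)\leq 3\opt_{1,d}(1+\eps)+O(\eps)$.

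The main obstacle is the \emph{snap-to-partition} step: the natural test set $S^\star$ governing $\dtv(r,q)$ is determined by the zeros of $r-q$ and need not align with $\mathcal I$, whereas the LP only constrains $q$ on unions of entire partition intervals. Three ingredients must work together to resolve this: the well-behavedness of $p$ (which keeps atomic masses below the partition granularity), the approximately-equal-mass guarantee of Lemma~\ref{lem:part-approx-unif}, and the LP constraint applied specifically to the $\leq 2(d+1)$ partition intervals containing $\Delta$. A secondary subtlety is enforcing non-negativity of the polynomial $q$ on the continuum via a finite LP, resolved by combining non-negativity at a sufficiently fine grid with the derivative bounds of Theorem~\ref{thm:bern-mark}.
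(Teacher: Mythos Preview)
Your argument has a genuine gap at the transition $\widehat p_m(S')\to q(S')$. The LP constraint~(c) bounds $|q(T)-\widehat p_m(T)|\le\tau$ only when $T$ is a union of at most $2(d+1)$ \emph{individual} partition cells $I_j$. The snapped set $S'$, however, is a union of at most $d+1$ \emph{contiguous blocks} of cells, and a single block may contain up to $\ell=\Theta((d+1)/\eps)$ cells; in general $|S'|=\Theta((d+1)/\eps)\gg 2(d+1)$, so your LP says nothing about $|\widehat p_m(S')-q(S')|$. This is not merely a flaw in the bookkeeping: the LP itself is too weak. Take $d=1$ and $p$ uniform on $[-1,1)$ (so $\opt_{1,1}=0$). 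Every linear $q(x)=\tfrac12+cx$ with $|c|=O(\sqrt\eps)$ satisfies constraint~(c) with $\tau=O(\eps^{3/2})\approx\tau^\star$, because $|q(I_j)-\widehat p_m(I_j)|\le |c|\cdot|I_j|+O(\eps^{3/2})=O(\eps^{3/2})$ for every cell and hence for any size-$4$ union; yet such a $q$ has $\dtv(p,q)=|c|/2=\Theta(\sqrt\eps)\gg\eps$. Your top-$2(d+1)$ constraint sees only the per-cell discrepancies, which are individually tiny, while the error you must control is their \emph{sum over a long block}.

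The paper's LP is built differently in two essential ways. First, it introduces per-cell slack variables $w_\ell$ with $\sum_\ell|w_\ell|\le 2\tau(1+\eps)$ that absorb the $\opt_{1,d}$ portion of the discrepancy between $\widehat p_m$ and the unknown optimal polynomial $r$. Second, and crucially, it imposes a constraint on \emph{every} contiguous block $[i_j,i_k)$ with a budget $\sqrt{\eps(k-j)}\cdot\eta$ that \emph{scales with the block length}; this matches the Chernoff fluctuation of $\widehat p_m$ on the block, so feasibility is preserved. Lemma~\ref{lem:ad-dist} then aggregates these $O(\ell^2)$ block constraints via Cauchy--Schwarz: summing $\sqrt{R_j-L_j}$ over at most $d+1$ disjoint blocks with $\sum_j(R_j-L_j)\le z$ gives at most $\sqrt{(d+1)z}$, which with $z=\Theta((d+1)/\eps)$ and $\eta=\Theta(\eps/(d+1))$ yields exactly the $O(\eps)$ bound on $\|(\widehat p_m+w)-f\|_{\mathcal A_{d+1}}$ your chain needs. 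No uniform-$\tau$ constraint, whether on small subsets of cells or on all contiguous blocks, achieves this trade-off between feasibility and output quality.

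A smaller but related issue: your non-negativity-on-a-grid argument via Theorem~\ref{thm:bern-mark} requires an a~priori bound on $\|q\|_\infty$ (or on the coefficients), which your LP does not supply. The paper handles this by working with the CDF $F$ in the Chebyshev basis, bounding each Chebyshev coefficient by $\sqrt 2$ via the Arora--Khot argument, and only then invoking Bernstein--Markov on $f=F'$.
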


Some preliminary definitions will be helpful:

\begin{definition}[Uniform partition]
  Let $p$ be a subdistribution on an interval $I {\subseteq} [-1,1)$.
  A partition $\partit P = \{I_1, \dots, I_\ell\}$ of $I$ is
  \emph{$(p,\eta)$-uniform} if $p(I_j) \leq \eta$ for all $1\leq j\leq \ell$.
\end{definition}

\begin{definition}
\label{def:interval-ineq}
Let $\partit P = \{[i_0, i_1), \dots, [i_{r-1}, i_r)\}$ be a partition of an
interval $I {\subseteq} [-1,1)$.
Let $p,q:I\to \R$ be two functions on $I$.
We say that $p$ and $q$ satisfy the \emph{$(\partit P,
\eta,\eps)$-inequalities over $I$} if
\[ \abs{ p([i_j,i_\ell)) - q([i_j,i_\ell)) } \leq \sqrt{\eps(\ell-j)}\cdot 
    \eta \]
for all $0\leq j < \ell \leq r$.
\end{definition}

We will also use the following notation:  For this subsection, let $I =
{[-1,1)}$ ({$I$ will denote a subinterval of $[-1,1)$ when the results
are applied in the next subsection}).
We write $\|f\|^{(I)}_{1}$ to denote $\int_{I} |f(x)| dx$,
and we write $\dtv^{(I)}(p,q)$ to denote $\|p-q\|^{(I)}_1{/2}$.
We write $\opt^{(I)}_{1,d}$ to denote the {infimum of the} 
statistical distance $\dtv^{(I)}(p,g)$ between $p$ and any degree-$d$
subdistribution $g$ on $I$ that satisfies $g(I) = p(I)$.

\ignore{
\begin{theorem} \label{thm:kis1}
Let $p$ be a pdf over $[-1,1)$ which is an unknown degree-$d$
polynomial.  There is an algorithm {\tt Learn-Degree-$d$}
which runs in poly$(d+1,1/\eps)$
time, uses $\tilde{O}((d+1)/\eps^2)$ samples from $p$,
and with probability at least $9/10$ outputs a degree-$d$
polynomial $q$ which defines a pdf such that $\dtv(p,q) \leq \eps.$
\end{theorem}
}

The key step of {\tt Learn-WB-Single-Poly} is 
Step~3 where it calls the {\tt Find-Single-Polynomial} procedure.
In this procedure
$T_i(x)$ denotes the degree-$i$ Chebychev polynomial
of the first kind.
The function {\tt Find-Single-Polynomial} should be thought of
as the CDF of a ``quasi-distribution'' $f$; we say that
$f=F'$ is a ``quasi-distribution'' 
and not a bona fide probability distribution because it is
not guaranteed to be non-negative everywhere on $[-1,1)$.  Step~2
of {\tt Find-Single-Polynomial} processes
$f$ slightly to obtain a polynomial $q$ which is an actual distribution over
$[-1,1).$

We note that while the {\tt Find-Single-Polynomial} procedure may appear to
be more general than is needed for this section, we will exploit its
full generality in the next subsection where it is used as a key subroutine
for semi-agnostically learning $t$-piecewise polynomial distributions.

\begin{framed}
\noindent {\bf Algorithm {\tt Learn-WB-Single-Poly}:}

\medskip

\noindent {\bf Input:}  parameters $d,\eps$

\noindent {\bf Output:}  with probability at least $9/10$, a degree-$d$
distribution $q$ such that $\dtv(p,q) \leq 3 \cdot \opt_{1,d} + O(\eps)$

\begin{enumerate}

\item 
Run Algorithm~{\tt Approximately-Equal-Partition} on input
parameter $\eps/{(d+1)}$ to partition $[-1,1)$ into
$z = \Theta((d+1)/\eps)$ intervals $I_0 = [i_0,i_1)$,
$\dots,$ $I_{z-1}=[i_{z-1},i_z)$, where $i_0=-1$ and $i_z=1$, such that
for each $j \in \{1,\dots,z\}$ we have $p([i_{j-1},i_j)) =
\Theta(\eps/(d+1)).$

\item Draw $m=\tilde{O}((d+1)/\eps^2)$ samples and let $\widehat{p}_m$ be the
empirical distribution defined by these samples.

\item Call {\tt Find-Single-Polynomial}($d$, $\eps$,
$\eta:=\Theta(\eps/(d+1))$, $\{I_0,\dots,I_{z-1}\}$, $\widehat{p}_m)$ and
output the hypothesis $q$ that it returns.

\end{enumerate}

\end{framed}

\begin{framed}
\noindent {{\bf Subroutine} {\tt Find-Single-Polynomial}:}

\medskip

\noindent {\bf Input:}  degree parameter $d$;
error parameter $\eps$; parameter $\eta$; $(p,\eta)$-uniform partition
$\partit P_I = \{I_1, \dots, I_{{z}}\}$ of interval $I =
\cup_{i=1}^{{z}} I_i$ into ${{z}}$ intervals {such that $\sqrt{
\eps z}\cdot \eta \leq \eps/2$}; a subdistribution $\widehat{p}_m$ on $I$
such that $\widehat{p}_m$ and $p$ satisfy the $(\partit
P,\eta,\eps)$-inequalities over $I$

\noindent \textbf{Output:} a number $\tau$ and a degree-$d$
subdistribution $q$ on $I$ such that $q(I) = \widehat{p}_m(I)$,
\[ 
  \dtv^{(I)}(p,q) \leq 3\opt^{(I)}_{1,d}{(1+\eps)} + \sqrt{\eps r
{(d+1)}}
  \cdot \eta + {\rm error},
\]
${0\leq} \tau \leq \opt^{(I)}_{1,d} {(1+\eps)}$ {and ${\rm error} =
O({(d+1)}\eta)$}.

\ignore{
{Old guarantee that we had for {\tt Learn-WB-Single-Poly} was:

\noindent {\bf Output:}  a degree-$d$
distribution $q$ such that $\dtv(p,q) \leq 3 \cdot \opt_{1,d} + O(\eps)$
and a value $\tau$ such that $\tau \leq \opt_{1,d}$

The above generalizes it so that it can also be used for
{\tt Learn-WB-Piecewise-Poly}.
}

{Synch up the following with the Section 3.3 version}
}

\begin{enumerate}

\item Let $\tau$ be the solution to the following LP:
\[
\text{minimize~}\tau~\text{subject to the following constraints:}
\]
(Below 
$F(x) = \sum_{i=0}^{d+1} c_i T_i(x)$ where $T_i(x)$ is the degree-$i$
Chebychev polynomial of the first kind, and $f(x)=F'(x) =
\sum_{i=0}^{d+1} c_i T'_i(x)$.)

\begin{enumerate}

\item \label{item:total} $F(-1)=0$ and $F(1)={\widehat{p}_m(I)}$;

\item \label{item:phat} For each $0 \leq j < k \leq z$,
\begin{equation} \label{eq:agno-phat}
  \left| \left(\widehat{p}_m([i_j,i_k)) + \littlesum_{j\leq \ell < k} w_\ell \right) -
    (F(i_k) - F(i_j)) \right| \leq \sqrt{\eps \cdot (k-j)} \cdot \eta;
\end{equation}

\ignore{
\inote{The way Siuon writes it in the following section: $\widehat{p}+w$ and $f$ satisfy the blah inequalities.}
}

\item \label{item:robust}
\begin{align}
  \sum_{0\leq \ell < {z}} w_\ell &= 0, \\
  -y_\ell \leq w_\ell &\leq y_\ell \qquad \text{for all $0\leq \ell < {z}$,} \\
  \sum_{0\leq \ell < {z}} y_\ell &\leq 2\tau{(1+\eps)};
\end{align}

\item \label{item:AK}  The constraints $|c_i| \leq \sqrt{2}$ for
  $i=0,\dots,d+1$;

\item \label{item:AK2} The constraints
\[
0 \leq F(z)  \leq 1 \quad \text{for all~} z \in J,
\]
where $J$ is a set of {$O(d+1)^6$}\ignore{\inote{We might want to 
improve this in a later pass; ok for now.}} equally spaced points across $[-1,1]$;
\ignore{
, and
\[
\widehat{p}([0,y)) - \delta \leq
\int_{-1}^z \sum_{i=0}^d a_i T_i(x) dx  \leq \widehat{p}([0,z)) + \delta
\quad \text{for all~}z \in K,
\]
where $K$ is a set of $(d+1)^2/\delta$ equally spaced points across $[-1,1).$
}

\item \label{item:nonneg-1} The constraints
\[
\sum_{i=0}^d c_i T'_i(x) \geq 0 \quad \text{for all~}x \in K,
\]
where $K$ is a set of $O((d+1)^2/\eps)$ equally spaced points across
$[-1,1)$.

\end{enumerate}

\item Define
$q(x) = { \eps f(I)/\len I + (1-\eps)f(x)}.$
Output $q$ as the hypothesis pdf.

\end{enumerate}
\end{framed}

The rest of this subsection gives the proof of Theorem~\ref{thm:agno-kis1}.
The claimed sample complexity bound is obvious (observe that Steps~1
and~2 of {\tt Learn-WB-Single-Poly} are the only steps that draw samples),
as is the claimed running time bound (the computation is dominated by
solving the $\poly(d,1/\eps)$-size LP in {\tt Find-Single-Poly}),
so it suffices to prove correctness.

Before launching into the proof we give some intuition for the linear
program.
Intuitively $F(x)$ represents the cdf of a degree-$d$ polynomial
distribution $f$ where $f=F'.$  Constraint 1(a) captures the endpoint
constraints that any cdf must obey {if it has the same total mass as 
$\widehat p_m$}.
Intuitively, constraint 1(b)(1) ensures that for each interval $[i_j,i_k)$,
the value $F(i_k)-F(i_j)$ (which we may alternately write as
$f([i_j,i_k))$) is close to the mass
$\widehat{p}_m([i_j,i_k))$ that the empirical distribution puts on the 
interval.  
Recall that by assumption
$p$ is $\opt_{1,d}$-close to some degree-$d$ polynomial $r$.
Intuitively the variable $w_\ell$ represents $\int_{[i_\ell, i_{\ell+1})}
(r-p)$ (note that these values sum to zero by
constraint 1(c)(2)), and $y_\ell$ represents the absolute value of $w_\ell$
(see constraint~1(c)(3)).
The value $\tau$, which by constraint 1(c)(4) is at least the 
sum of the $y_\ell$'s, represents a lower bound on 
$\opt_{1,d}.$
(The factor $2$ on the RHS of constraint 1(c)(4) is present because
$\| p-r \|_1 = 2\dtv(p,r)$.)
The constraints in 1(d) and 1(e) reflect the fact that
as a cdf, $F$ should be bounded between 0 and 1 (more on this below),
and the 1(f) constraints reflect the fact that the pdf $f=F'$ should be
everywhere nonnegative (again more on this below).

\medskip

We begin by showing that with high probability
{\tt Learn-WB-Single-Poly} calls {\tt
Find-Single-Polynomial} with input parameters that satisfy
{\tt Find-Single-Polynomial}'s input requirements:

\begin{enumerate}

\item [(I)] the intervals $I_0,\dots,I_{z-1}$ are $(p,\eta)$-uniform; and

\item [(II)] $\widehat{p}_m$ and $p$ satisfy the
$(\partit P,\eta,\eps)$-inequalities over $[-1,1)$.

\end{enumerate}

We further show that given that this happens, 
{\tt Find-Single-Polynomial}'s LP is feasible and has a high-quality
optimal solution.

\begin{lemma} \label{lem:feasible}
Suppose $p$ is an ${\frac \eps {64(d+1)}}$-well-behaved
pdf over $[-1,1)$.
Then with overall probability at least $37/40$ over the random draws
performed in steps 1 and 2 of {\tt Learn-WB-Single-Poly},
conditions (I) and (II) above hold;
the LP defined in step 1 of {\tt Find-Single-Polynomial}
is feasible; and the optimal solution $\tau$ is at most $\opt_{1,d}
{\cdot (1+\eps)}.$
\end{lemma}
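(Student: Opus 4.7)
My plan is to split the proof into two blocks: first establish the probabilistic conditions (I) and (II) with high probability over the samples, then exhibit an explicit feasible LP solution and verify it satisfies all constraints with $\tau \leq \opt_{1,d}(1+\eps)$.

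For condition (I), I will invoke Lemma~\ref{lem:part-approx-unif} with $\kappa = \Theta(\eps/(d+1))$. The hypothesis that $p$ is ${\eps}/{(64(d+1))}$-well-behaved is exactly what the lemma needs, and it delivers a partition with $p(I_j) \leq \eta = \Theta(\eps/(d+1))$ for all $j$ with probability at least $99/100$. For condition (II), my plan is Bernstein's inequality plus a union bound over the $O(z^2)$ index pairs $(j,\ell)$. Fixing such a pair and writing $K = \ell - j$, the set $S = [i_j, i_\ell)$ has $p(S) \leq K\eta$ by condition (I), so Bernstein gives $|p(S) - \widehat{p}_m(S)| \leq O\bigl(\sqrt{K\eta \log(1/\delta)/m} + \log(1/\delta)/m\bigr)$ with probability $\geq 1 - \delta$. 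Choosing $\delta = \Theta(1/z^2)$ and substituting $m = \tilde{\Theta}((d+1)/\eps^2)$, $\eta = \Theta(\eps/(d+1))$, and $z = \Theta((d+1)/\eps)$, both terms collapse to at most $\sqrt{\eps K}\eta$; union bounding over pairs yields condition (II) with probability $\geq 39/40$, so both conditions hold jointly with probability $\geq 37/40$.

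For feasibility and the bound on $\tau$, the plan is to exhibit an explicit feasible point constructed from a near-optimal polynomial density. Choose a degree-$d$ pdf $r$ on $[-1,1)$ with $\dtv(p,r) \leq \opt_{1,d} + \eps/100$ and let $F(x) = \int_{-1}^x r(t)\,dt$ be its CDF, expressed in the Chebychev basis as $F = \sum c_i T_i$ so that $f = F' = r$. Set $w_\ell := r(I_\ell) - p(I_\ell)$ and $y_\ell := |w_\ell|$. Constraint~1(a) holds since $F(-1) = 0$ and $F(1) = 1 = \widehat{p}_m([-1,1))$. For constraint~1(b), a direct substitution shows
\[
\widehat{p}_m([i_j,i_k)) + \sum_{j \leq \ell < k} w_\ell - (F(i_k) - F(i_j)) = \widehat{p}_m([i_j,i_k)) - p([i_j,i_k)),
\]
so the constraint is satisfied with equality in slack coming exactly from condition (II). For constraint~1(c), $\sum_\ell w_\ell = r([-1,1)) - p([-1,1)) = 0$, and
\[
\sum_\ell y_\ell \;\leq\; \sum_\ell \int_{I_\ell} |r-p| \;=\; \int_{-1}^1 |r-p| \;=\; 2\dtv(p,r) \;\leq\; 2(\opt_{1,d} + \eps/100),
\]
so taking $\tau = \opt_{1,d}(1+\eps)$ yields $2\tau(1+\eps) \geq \sum y_\ell$. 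Constraints~1(d) and~1(e) follow from standard bounds on Chebychev expansions of uniformly bounded functions together with $\|F\|_\infty \leq 1$ (since $F$ is a CDF), and constraint~1(f) holds because $r \geq 0$ pointwise. This certifies that the LP is feasible and that its minimum $\tau^\star$ is at most $\opt_{1,d}(1+\eps)$; since $\tau$ is non-negative by constraint 1(c)(3) together with $y_\ell \geq 0$, the full conclusion follows.

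The main obstacle I anticipate is the bookkeeping for condition (II): the Bernstein tail must decay fast enough to absorb the $\log z$ loss from the union bound at the target sample complexity $\tilde{O}((d+1)/\eps^2)$. This works because the product $m \cdot \eps\eta$ is $\tilde{\Omega}(1)$ under the chosen parameters, dominating the logarithmic factors. A secondary concern is the Chebychev coefficient bound $|c_i| \leq \sqrt{2}$ in constraint~1(d), where I will need to verify that the normalization convention used by the LP makes this compatible with the $O(1)$ bound that a bounded CDF affords; this is a routine calculation using the integral formula for Chebychev coefficients of a function with $\|F\|_\infty \leq 1$.
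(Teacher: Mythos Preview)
Your proposal is correct and follows essentially the same route as the paper: Lemma~\ref{lem:part-approx-unif} for condition~(I), a Chernoff-type tail plus union bound over the $O(z^2)$ index pairs for condition~(II), and then exhibiting the CDF of a near-optimal degree-$d$ density $r$ together with $w_\ell = r(I_\ell)-p(I_\ell)$ as the feasible LP point. The only cosmetic differences are that the paper uses the multiplicative Chernoff bound rather than Bernstein, and it renormalizes $r$ by $\widehat{p}_m(I)/p(I)$ (which is trivially $1$ in the present setting $I=[-1,1)$ but is needed when the subroutine is reused on strict subintervals in Section~\ref{sec:learn-piecewise-deg-d}).
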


\begin{proof}
By Lemma~\ref{lem:part-approx-unif}, 
we have that with probability at least $99/100$, 
every pair $j<k$ is such that the true probability mass
$p([i_j,i_k))$ is $\Theta((k-j)\eps/(d+1)).$
(Note that the assumption that 
$p$ is ${\frac \eps {64(d+1)}}$-well-behaved was required to apply
Lemma~\ref{lem:part-approx-unif}.)
This gives (I).
The multiplicative Chernoff bound (and a union bound)
tells us that for every pair $(j,k)$ with $1 \leq j < k \leq z$,
with probability at least $39/40$ we have
\begin{equation}
\label{eq:phat-mult-good}
\widehat{p}_m([i_j,i_k)) \in (1 \pm \tau) p([i_j,i_k)) \quad \quad
\text{for~}\tau=\sqrt{{\frac \eps {k-j}}},
\end{equation}
and hence
\begin{equation}
\label{eq:phat-add-good}
\left|
\widehat{p}_m([i_j,i_k)) -  p([i_j,i_k)) \right| \leq
{\frac12 \cdot} \sqrt{\eps (k-j)} \cdot {\frac \eps {(d+1)}},
\end{equation}
which {implies} (II).
We assume that all these events hold going forth, and show that
then the LP is feasible.

As above, let $r$ be a degree-$d$ polynomial pdf such that $\opt_{1,d}=
\dtv(p,r)$ {and $r(I) = p(I)$}.
{Let $\overline r$ be $r$ renormalized by the empirical 
mass $\widehat{p}_m$, so
$\overline r = r\cdot \widehat{p}_m(I)/p(I)$.
Similarly let $\overline p = p\cdot \widehat{p}_m(I)/p(I)$ 
be the renormalization of
$p$.}
We exhibit a feasible solution as follows:
take $F$ to be the cdf of {$\overline r$} (a degree $d$ polynomial).
Take $w$ to be $\int_{[i_\ell,i_{\ell+1})} ({\overline r-\overline p})$,
and take $y_\ell$ to be $|w_\ell|$.
Finally, take $\tau$ to be ${\frac 1 2} \sum_{0 \leq \ell < {z}} y_\ell.$

We first argue feasibility of the above solution.  
We first take care of the easy constraints:
since $F$ is the cdf of a {sub}distribution over $I$ it is clear that
constraints 1(a) and 1(e) are satisfied,
and since both $r$ and $p$ are pdfs {with the same total mass} it is clear
that constraints 1(c)(2) and 1(f) are both satisfied. 
Constraints 1(c)(3) and 1(c)(4) also hold{, because $\frac 12\sum y_\ell =
\frac 12 \norm{r-p}_1 \cdot \widehat{p}_m(I)/p(I) \leq \dtv(p,r)\cdot (1+\eps)$,
where we have used $(\partit P, \eta, \eps)$-inequalities and the
assumption $\sqrt{\eps z}\cdot \eta \leq \eps/2$ to show $\widehat{p}_m(I)
/p(I)\in
[1-\eps/2,1+\eps/2]$}.  So it remains to argue constraints 1(b) and 1(d).

{
\begin{claim}
  \label{claim:ineq-feas}
  If $\widehat{p}_m$ and $p$ satisfy $(\partit P, \eta, \eps/4)$-inequalities on
  $I\subseteq [-1,1)$, then $\widehat{p}_m + \overline r - p$ and $\overline r$
  satisfy $(\partit P, \eta, \eps)$-inequalities on $I$.
\end{claim}

\begin{proof}
For an interval $J = [i_j, i_k) \in \partit P$, the LHS of $(\partit P, \eta,
\eps)$-inequalities between $\widehat p + (\overline r-p)$ and $\overline r$ is
\[ \abs{\widehat{p}_m(J)+({\overline r}-p)(J) - {\overline r}(J)} =
  \abs{\widehat{p}_m(J) - {p}(J)} . \]
Therefore it suffices to bound $\abs{\widehat p_m(J) - p(J)}$ and
$\abs{\overline r(J) - p(J)}$.
We can bound $\abs{\widehat p_m(J) - p(J)}$ by $(\partit P, \eta,
\eps/4)$-inequalities between $\widehat{p}_m$ and $p$ in our assumption.
We also have
  \[ \abs{\overline r(J) - p(J)} \leq \frac\eps2 p(J) \]
because $\widehat{p}_m(J)/p(J)\in [1-\eps/2,1+\eps/2]$.
\end{proof}

Note that constraint 1(b) is equivalent to $\widehat{p}_m + (\overline r - p)$ 
and $\overline r$ satisfying $(\partit P, \eps/(d+1), \eps)$-inequalities, 
therefore
this constraint is satisfied by \eqref{eq:phat-add-good} and
\cref{claim:ineq-feas}.
}

To see that constraint 1(d) is satisfied we recall some of the analysis
of Arora and Khot \cite[{Section~3}]{AK03}.  This analysis shows that since
$r$ is a cdf (a function bounded between 0 and 1 on $I$) each of its
Chebychev coefficients is at most $\sqrt{2}$ in magnitude.
{Therefore $F$ is bounded between 0 and $1+\eps$, and likewise its
coefficients are bounded by $\sqrt 2(1+\eps)$}.

To conclude the proof of the lemma we need to argue that 
$\tau \leq \opt_{1,d}{\cdot (1+\eps)}$.
Since $w_\ell = \int_{[i_\ell,i_{\ell+1})} ({\overline r-\overline p})$ it
is easy to see that $2 \tau = \sum_{0 \leq \ell < {z}} y_\ell = \sum_{0
\leq \ell < {z}} |w_\ell| \leq \|{\overline p-\overline r}\|_1$, and
hence indeed $\tau \leq \dtv(p,r){\cdot \widehat{p}_m(I)/p(I)} \leq
\opt_{1,d}{\cdot (1+\eps)}$ as required.  \end{proof}

Having established that with high probability the LP is indeed feasible,
henceforth we let $\tau$ denote the optimal solution to the LP and
$F$, $f$, $w_\ell$, $c_i$, $y_\ell$ denote the values in the optimal solution.
A simple argument (see e.g. the proof of {\cite[Theorem~8]{AK03}}) gives that $\|F\|_\infty
\leq 2{(1+\eps)}$.
Given this bound on $\|F\|_\infty$, 
the Bernstein--Markov inequality implies that $\|f\|_\infty = \|F'\|_\infty
\leq O((d+1)^2)$.
Together with (\ref{item:nonneg-1}) this implies that
$f(z) \geq -\eps{/2}$ for all $z \in [-1,1).$
Consequently $q(z) \geq 0$ for all $z \in [-1,1)$,
and
\[
\int_{-1}^1 q(x) dx = \eps + (1 - \eps) \int_{-1}^1 f(x)dx = \eps +
(1-\eps)(F(1)-F(-1)) = 1. \]
So $q(x)$ is indeed a degree-$d$ pdf.  To
prove Theorem~\ref{thm:agno-kis1}
it remains to show that $\dtv(q,p) \leq 3 \opt_{1,d} + O(\eps).$

\ignore{

It may be the case that a feasible solution $f(x)=
\sum_{i=0}^d a_i x^i$ of the LP has $f(I)<0$ for some interval $I$.
However, we now show that if $f(I)$ is negative it can
only have small magnitude, and that moreover $f$ can never take
a large-magnitude negative value on $[-1,1)$.

\begin{lemma} \label{lem:f-not-too-negative}
Let $f(x)=\sum_{i=0}^d a_i x^i$ be any feasible solution of the LP
from Steps~3(a) and~3(b).  Then

\begin{itemize}

\item For any interval $I=[u,v)
\subseteq [0,1]$ it must be the case that
$f([u,v)) \geq -\eps/d.$

\item For any point $z \in [-1,1)$ it must be the case that
$f(z) \geq -\eps.$

\end{itemize}

\end{lemma}

\begin{proof}
FILL ME IN
\end{proof}

}

We sketch the argument that we shall use to bound $\dtv(p,q).$
A key step in achieving this bound is to 
bound the $\|\cdot\|_{\cal A}$ distance between $f$ and
$\widehat{p}_m + w$ where ${\cal A} = {\mathcal A_{d+1}}$ is the class of
all unions of $d+1$ intervals and $w$ is a function based on the $w_\ell$
values (see \eqref{eq:good} below).
Similar to Section~\ref{sec:ineff}
the VC theorem gives us that $\|p - \widehat{p}_m\|_{\cal A}
\leq \eps$ with probability at least $39/40$, 
so if we can bound $\|(\widehat{p}_m +w)- f\|_{\cal A} \leq
O(\eps)$ then it will not be difficult to show that
$\|r - f\|_{\cal A} \leq 2 \opt_{1,d} + O(\eps).$
Since $r$ and $f$ are both degree-$d$ polynomials we have 
$\dtv(r,f) = \|r - f\|_{\cal A}  \leq 2 \opt_{1,d} + O(\eps)$, 
so the triangle inequality (recalling that
$\dtv(p,r) = \opt_{1,d}$) gives
$\dtv(p,f) \leq 3 \opt_{1,d}+O(\eps).$
 From this point a simple argument 
(Proposition~\ref{prop:perturb}) gives that
$\dtv(p,q) \leq \dtv(p,f) + O(\eps)$, which gives the theorem.

We will use the following lemma {that translates $(\partit P, \eta,
\eps)$-inequalities into a bound on $\mathcal A_{d+1}$ distance}.

\begin{lemma} \label{lem:ad-dist}
Let $\partit P = \{I_0=[i_0, i_1), \dots, I_{z-1}=[i_{z-1}, i_z)\}$ be a
{$(p,\eta)$-uniform} partition of $I$.
Let $\widehat{p}_m$ be a {sub}distribution {on $I$} such that
{$\widehat{p}_m$ and $p$ satisfy $(\partit P, \eta, \eps)$-inequalities on
$I$.}
If $h:I\to \R$ and $\widehat{p}_m$ also satisfy the $(\partit P, \eta,
\eps)$-inequalities, then
\[ {\|\widehat{p}_m - h\|_{\mathcal A_{{d+1}}}^{(I)} 
\leq \sqrt{\eps z {(d+1)}}\cdot \eta + {\rm error},} \]
{where ${\rm error} = O({(d+1)}\eta)$}.
\end{lemma}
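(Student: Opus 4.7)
The plan is to bound $|(\widehat{p}_m - h)(A)|$ uniformly over $A\in\mathcal{A}_{d+1}$ by decomposing $A$ with respect to $\partit P$ into a ``clean'' part consisting of complete pieces of $\partit P$ and a ``boundary'' part lying inside at most $2(d+1)$ pieces. Fix $A = \bigcup_{k=1}^{d+1}[a_k, b_k)$ and call a piece $I_j$ a \emph{boundary piece} if it contains some endpoint $a_k$ or $b_k$; let $B$ be the set of boundary-piece indices, so $|B|\leq 2(d+1)$. Split $A = A_1\sqcup A_2$ with $A_1 := A\setminus\bigcup_{j\in B}I_j$ (a union of complete pieces of $\partit P$) and $A_2 := A\cap\bigcup_{j\in B}I_j$. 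Since each $[a_k,b_k)$ becomes at most one run of consecutive complete pieces after its (at most two) boundary pieces are deleted, $A_1$ decomposes as $\bigsqcup_{m=1}^{m^*}[i_{j_m}, i_{\ell_m})$ with $m^*\leq d+1$ and $\sum_m(\ell_m - j_m)\leq z$.

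For the clean part, I would apply the hypothesized $(\partit P,\eta,\eps)$-inequalities between $\widehat{p}_m$ and $h$ to each run $[i_{j_m},i_{\ell_m})$ and sum, then invoke Cauchy--Schwarz using $m^*\leq d+1$ and $\sum_m(\ell_m - j_m)\leq z$:
\[ |(\widehat{p}_m - h)(A_1)| \;\leq\; \eta\sum_{m=1}^{m^*}\sqrt{\eps(\ell_m - j_m)} \;\leq\; \eta\sqrt{\eps\cdot m^*\cdot \textstyle\sum_m(\ell_m - j_m)} \;\leq\; \eta\sqrt{\eps(d+1)z}, \]
which is the main term in the bound.

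For the boundary part, I would first derive piece-level bounds on $\widehat{p}_m(I_j)$ and $|h(I_j)|$. Taking $\ell = j+1$ in the $(\partit P,\eta,\eps)$-inequality between $\widehat{p}_m$ and $p$ and using $(p,\eta)$-uniformity yields $\widehat{p}_m(I_j)\leq p(I_j) + \sqrt\eps\,\eta = O(\eta)$; chaining with the $(\partit P,\eta,\eps)$-inequality between $\widehat{p}_m$ and $h$ at the same single piece gives $|h(I_j)|\leq \widehat{p}_m(I_j) + \sqrt\eps\,\eta = O(\eta)$. Charging each of the at most $2(d+1)$ boundary pieces to the aggregate error gives
\[ |(\widehat{p}_m - h)(A_2)| \;\leq\; \sum_{j\in B}\bigl(\widehat{p}_m(I_j) + |h(I_j)|\bigr) \;=\; O((d+1)\eta). \]
Combining the two bounds and taking the supremum over $A\in\mathcal{A}_{d+1}$ yields the lemma.

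The hard part will be justifying the boundary bound cleanly: the hypothesized $(\partit P,\eta,\eps)$-inequalities only control $\widehat{p}_m$ and $h$ at the granularity of complete pieces of $\partit P$, while $A_2\cap I_j$ is in general a proper sub-interval on which $h$ could a priori fluctuate in sign. The saving grace is the $(p,\eta)$-uniformity, which forces each boundary piece to carry $p$-mass at most $\eta$; combined with the single-piece inequalities, this piece-level smallness lets us charge each entire boundary piece to the error budget, absorbing any signed fluctuation of $h$ on sub-intervals within the same $O((d+1)\eta)$ term without needing pointwise control on $h$.
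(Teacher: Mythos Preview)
Your approach mirrors the paper's exactly: decompose $A$ into a ``clean'' part aligned with $\partit P$ plus at most $2(d+1)$ boundary pieces, handle the clean part via the $(\partit P,\eta,\eps)$-inequalities together with Cauchy--Schwarz, and charge each boundary piece $O(\eta)$. The clean-part argument is correct and matches the paper line for line.

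The boundary step, however, has a genuine gap that your final paragraph flags but does not close. You assert
\[
|(\widehat{p}_m - h)(A_2)| \;\leq\; \sum_{j\in B}\bigl(\widehat{p}_m(I_j) + |h(I_j)|\bigr),
\]
which would require $|h(A_2\cap I_j)| \leq |h(I_j)|$. For a signed $h$ this is false: on a single piece $I_j$ take $h$ equal to $+M$ on one half and $-M$ on the other, so $|h(I_j)|=0$ while $|h(A_2\cap I_j)|$ is of order $M$. The $(\partit P,\eta,\eps)$-inequalities constrain $h$ only on unions of \emph{complete} pieces and give no control on proper sub-intervals, and $(p,\eta)$-uniformity concerns $p$, not $h$; so your ``saving grace'' sentence does not supply the missing estimate. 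Indeed, with $z=1$ this construction shows the lemma is simply false for arbitrary $h:I\to\R$. The paper's proof slides over precisely the same point (it asserts that the analogue of its inequality for $\widehat{p}_m$ ``also holds with $h$ in place of $\widehat{p}_m$'' by invoking the $(\partit P,\eta,\eps)$-inequalities, which is the same unjustified step). In the paper's actual applications $h$ carries extra structure --- it is $f-w$ with $f$ a degree-$d$ polynomial satisfying $f\geq -\eps/2$ and $\|f\|_\infty=O((d+1)^2)$, and $w$ constant on each piece of $\partit P$ --- and that structure can be used to control $h$ on sub-intervals of a single piece; but neither you nor the paper invokes it, and as a statement about general $h$ the boundary bound does not hold.
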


\begin{proof}
To analyze
$\|\widehat{p}_m - h\|_{\mathcal A_{d+1}}$,
consider any union of ${d+1}$ 
disjoint non-overlapping intervals $S = J_1 \cup
\dots \cup J_{{d+1}}$.
We will bound $\norm{ \widehat{p}_m - h }_{\mathcal A_{d+1}}$ 
by {bounding $\abs{ \widehat{p}_m(S) - h(S)}$}.

We lengthen intervals in $S$ slightly to obtain $T = J'_1 \cup \dots \cup
J'_{{d+1}}$ 
so that each $J'_j$ is a union of intervals of the form $[i_\ell,
i_{\ell+1})$.
Formally, if $J_j = [a,b)$, then $J'_j = [a',b')$, where $a' = \max_\ell \{
i_\ell \mid i_\ell \leq a\}$ and $b' = \min_\ell \{ i_\ell\mid i_\ell \geq b
\}$.
We claim that
\begin{equation} \label{eq:lengthen}
  \abs{ \widehat{p}_m(S) - h(S) } \leq O({(d+1)}\eta) + 
\abs{ \widehat{p}_m(T) - f(T) } .
\end{equation}
Indeed, consider any interval of the form $J = [i_\ell, i_{\ell+1})$ 
such that $J \cap S \neq J \cap T$.  We have
\begin{equation} \label{eq:lengthen-single}
\abs{ \widehat{p}_m(J \cap S) - \widehat{p}_m(J \cap T) } \leq
\widehat{p}_m(J) \leq {O(\eta)},
\end{equation}
where the first inequality uses nonegativity of $\widehat{p}_m$ 
and the second inequality follows from {$(\partit P, \eta,
\eps)$-inequalities (between $\widehat{p}_m$ and $p$)} and the bound
$p([i_\ell,i_{\ell + 1})) \leq \eta$.
The {$(\partit P, \eta, \eps)$-inequalities 
(between $h$ and $\widehat{p}_m$)}
implies that the inequalities in 
\eqref{eq:lengthen-single} also hold with $h$ in place of $\widehat{p}_m$.
\ignore{and now the second inequality in \eqref{eq:lengthen-single} 
follows from condition (\ref{item:phat}).}
Now \eqref{eq:lengthen} follows by 
adding \eqref{eq:lengthen-single} across all
$J = [i_\ell, i_{\ell+1})$ such that $J\cap S\neq J\cap T$
(there are at most $2{(d+1)}$ such intervals $J$), 
since each interval $J_j$ in $S$ can change at most two such
$J$'s when lengthened.

Now rewrite $T$ as a 
disjoint union of $s \leq {d+1}$ intervals
$[i_{L_1}, i_{R_1}) \cup \dots \cup [i_{L_s}, i_{R_s})$.
We have
\[ |\widehat{p}_m(T) - h(T)| \leq \sum_{j=1}^s \sqrt{R_j - L_j} \cdot \sqrt
\eps\eta \]
by {$(\partit P, \eta, \eps)$-inequalities between $\widehat{p}_m$ and $h$}.
Now observing that 
that $0 \leq L_1 \leq R_1 \cdots \leq L_s \leq R_s \leq t =
O((d+1)/\eps)$, we get that the largest possible value of $\sum_{j=1}^s
\sqrt{R_j - L_j}$ is $\sqrt{sz} \leq {\sqrt{{(d+1)}z}}$, 
so the RHS of
(\ref{eq:lengthen}) is at most $O({(d+1)}\eta) + {\sqrt{
{(d+1)}z\eps}\eta}$, as
desired.
\end{proof}

Recall from above that $F$, $f$, $w_\ell$, $c_i$, $y_\ell$, $\tau$
denote the values in the optimal solution.
We claim that 
\begin{equation}
\label{eq:good}
 \| (\widehat{p}_m+ w) - f \|_{\cal A} = O(\eps) ,
\end{equation}
where $w$ is the sub-distribution 
which is constant on each $[i_\ell, i_{\ell+1})$
and has mass $w_\ell$ there, so in particular $\| w \|_1 \leq 2\tau
\leq 2 \opt_{1,d}{(1+\eps)}$.
Indeed, this equality follows by applying \cref{lem:ad-dist} with ${h
= f-w}$.
{The lemma requires $h$ and $\widehat{p}_m$ to satisfy $(\partit P, \eta,
\eps)$-inequalities, which follows from constraint 1(b) ($(\partit P, \eta,
\eps)$-inqualities between $\widehat{p}_m+w$ and $f$) and observing that
$(\widehat{p}_m+ w) - f = \widehat{p}_m- (f - w)$.
We have also used $\eta = \Theta(\eps/{(d+1)})$ 
to bound the {\rm error} term of the lemma by $O(\eps)$.}

Next, by the triangle inequality we have
{(writing ${\cal A}$ for ${\cal A}_{d+1}$)}
\[ \| r - f \|_{\cal A} \leq \| r - (p+w) \|_{\cal A} 
+ \| (p+w) - (\widehat{p}_m+w)
\|_{\cal A } + \| (\widehat{p}_m+w) - f \|_{\cal A} . \]
The last term on the RHS has just been shown to be $O(\eps)$.
The second term equals $\| p - \widehat{p}_m\|_{\cal A}$ and is $O(\eps)$ with 
probability at least $39/40$ by the VC inequality.
The first term is bounded by
\[ \| r-(p+w)\|_{\cal A} \leq \dtv(r, p+w) = \| r-(p+w) \|_1/2 
\leq (\| r-p\|_1 + \| w\|_1)/2 \leq 2\opt_{1,d}{(1+\eps)}. \]
Altogether, we get that $\| r - f \|_{\cal A} \leq 2\opt_{1,d}
{(1+\eps)}+ O(\eps)$.

Since $r$ and $f$ are degree $d$ polynomials, $\dtv(r,f) = \| r - f \|_{\cal A}
\leq 2\opt_{1,d}{(1+\eps)} + O(\eps)$.
This implies $\dtv(p,f) \leq \dtv(p,r) + \dtv(r,f) \leq 3\opt_{1,d}
{(1+\eps)}+  O(\eps)$.
{Finally, we turn our quasidistribution $f$ which has value $\geq -\eps/2$
everywhere into a distribution $q$ (which is nonnegative), by redistributing
the mass.}
The following simple proposition {bounds the error incurred}.

\begin{proposition} \label{prop:perturb}
{Let $f$ and $p$ be any sub-quasidistribution on $I$.}
If $q = {\eps f(I)/\len I + (1- \eps)f}$, then $\norm{q - p}_1 \leq \norm{f
- p}_1 + {\eps(f(I)+p(I))}$.
\end{proposition}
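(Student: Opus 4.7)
The plan is to write $q-p$ as a simple affine combination of two ``interesting'' signed measures and then apply the triangle inequality for $\|\cdot\|_1$ termwise. Specifically, expanding the definition $q = \eps f(I)/\len I + (1-\eps)f$ and using $p = (1-\eps)p + \eps p$, I would observe the identity
\[
q - p \;=\; (1-\eps)(f - p) \;+\; \eps\!\left(\frac{f(I)}{\len I} - p\right).
\]
This decomposition is the whole conceptual content of the proof: the first summand accounts for the part of $q$ that equals $f$, and the second accounts for the flat-redistribution part.

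Taking $\|\cdot\|_1$ of both sides and applying the triangle inequality gives
\[
\|q - p\|_1 \;\leq\; (1-\eps)\,\|f - p\|_1 \;+\; \eps\left\|\frac{f(I)}{\len I} - p\right\|_1 \;\leq\; \|f - p\|_1 \;+\; \eps\left\|\frac{f(I)}{\len I} - p\right\|_1,
\]
using $1-\eps \le 1$ on the first term. It then remains to bound the second norm. Applying the pointwise triangle inequality $|f(I)/\len I - p(x)| \le |f(I)/\len I| + |p(x)|$ inside the integral over $I$ yields
\[
\left\|\frac{f(I)}{\len I} - p\right\|_1 \;\leq\; |f(I)| + p(I) \;=\; f(I) + p(I),
\]
where the final equality uses the sign convention in the statement (in the application, constraint~1(a) forces $f(I) = \widehat{p}_m(I) \ge 0$, and $p$ is a subdistribution so $p \ge 0$ pointwise, giving $p(I) \ge 0$). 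Combining the two displayed inequalities gives exactly $\|q-p\|_1 \le \|f-p\|_1 + \eps(f(I)+p(I))$, as claimed.

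There is no real obstacle here: the only subtle point is being careful that $p$ is nonnegative (as a subdistribution) so that $|p(x)|$ may be replaced by $p(x)$ when integrating, and that $f(I)$ is nonnegative in the relevant regime so that $|f(I)|$ collapses to $f(I)$. Everything else is a one-line decomposition followed by the triangle inequality.
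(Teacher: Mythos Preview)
Your proof is correct and follows essentially the same approach as the paper: the paper writes $q-p = \eps(f(I)/\len I - p) + (1-\eps)(f-p)$, applies the triangle inequality, and bounds $\eps\norm{f(I)/\len I - p}_1 \le \eps(f(I)+p(I))$, exactly as you do. Your version is slightly more explicit about the sign considerations needed to justify $|f(I)| + \norm{p}_1 = f(I) + p(I)$, but the argument is the same.
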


\begin{proof}
  We have
  \[ q - p = {\eps(f(I)/\len I - p) + (1-\eps)(f - p)}. \]
  Therefore
  \[ \norm{ q - p }_1 \leq { \eps \norm{f(I)/|I| - p}_1 + (1-\eps) \norm{ f
    - p }_1 \leq \eps(f(I)+p(I)) + \norm{ f - p }_1 } .
\qedhere \]
\end{proof}

We have $\dtv(p,q) \leq \dtv(p,f) + O(\eps)$ by Proposition~\ref{prop:perturb},
and we are done with the proof of Theorem~\ref{thm:agno-kis1}.
\qed

\subsection{Efficiently learning $(\eps,t)$-piecewise degree-$d$
distributions.} \label{sec:learn-piecewise-deg-d}

In this section we extend the previous result to 
semi-agnostically learn $t$-piecewise degree-$d$ distributions.
We prove the following:

\begin{theorem} \label{thm:piece-poly}
Let $p$ be an ${\frac \eps {64t(d+1)}}$-well-behaved 
pdf over $[-1,1)$.
There is an algorithm\\
{\tt Learn-WB-Piecewise-Poly}$(t,d,\eps)$
which runs in poly$(t,d+1,1/\eps)$
time, uses $\tilde{O}(t(d+1)/\eps^2)$ samples from $p$,
and with probability at least $9/10$ outputs a 
$(2t-1)$-piecewise degree-$d$ distribution $q$ such that 
$\dtv(p, q) \leq 3 \opt_{t,d} {(1+\eps)} + O(\eps)$.
\end{theorem}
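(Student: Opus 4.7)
The strategy is to reduce the $t$-piecewise problem to many applications of the single-polynomial LP from Section~\ref{sec:learn-deg-d-close}, combined by a dynamic program that searches for a near-optimal grouping of the cells of a fine partition of $[-1,1)$.

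\smallskip

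\noindent\emph{Step 1 (fine grid and empirical distribution).} I would first invoke {\tt Approximately-Equal-Partition} with mass parameter $\eta = \Theta(\eps / (t(d+1)))$ to split $[-1,1)$ into $z = \Theta(t(d+1)/\eps)$ intervals $I_0=[i_0,i_1),\ldots,I_{z-1}=[i_{z-1},i_z)$, each of $p$-mass in $[\eta/2, 3\eta]$. Then I would draw $m = \tilde{O}(t(d+1)/\eps^2)$ samples to build $\widehat{p}_m$. By a multiplicative Chernoff bound and a union bound over all $O(z^2)$ pairs $(j,k)$, with probability at least $37/40$ the partition is $(p,\eta)$-uniform and $\widehat p_m, p$ satisfy the $(\partit P, \eta, \eps)$-inequalities on $[-1,1)$, exactly as in \lemref{lem:feasible}.

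\smallskip

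\noindent\emph{Step 2 (single-piece LPs).} For each pair $0 \leq j < k \leq z$, run {\tt Find-Single-Polynomial}$(d, \eps, \eta, \{I_j,\ldots,I_{k-1}\}, \widehat p_m\!\upharpoonright\![i_j,i_k))$ to obtain a degree-$d$ subdistribution $q_{j,k}$ on $[i_j,i_k)$ and a number $\tau_{j,k}$ with $\tau_{j,k} \leq \opt_{1,d}^{([i_j,i_k))}(1+\eps)$ and per-piece error bound
\[ \dtv^{([i_j,i_k))}(p, q_{j,k}) \leq 3\tau_{j,k} + \sqrt{\eps(k-j)(d+1)}\cdot\eta + O((d+1)\eta). \]
There are $O(z^2)=\poly(t,d,1/\eps)$ such LPs, each of polynomial size.

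\smallskip

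\noindent\emph{Step 3 (DP for breakpoints).} Run a dynamic program that finds indices $0 = j_0 < j_1 < \cdots < j_s = z$ with $s \leq 2t-1$ minimizing $B := \sum_{\alpha=0}^{s-1} \tau_{j_\alpha, j_{\alpha+1}}$. The output hypothesis $h$ is the concatenation of $q_{j_\alpha, j_{\alpha+1}}$ on the chosen intervals; by construction $h$ is a $(2t-1)$-piecewise degree-$d$ distribution.

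\smallskip

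\noindent\emph{Step 4 (structural bound on $B$ -- the main obstacle).} Let $r$ be a $t$-piecewise degree-$d$ distribution realizing $\dtv(p,r) = \opt_{t,d}$, with breakpoints $b_1,\ldots,b_{t-1}$. Each $b_i$ lies in some cell $I_{\ell_i}$; deleting these at most $t-1$ cells splits the grid into at most $2t-1$ maximal blocks, each entirely contained in a single polynomial piece of $r$. This yields a feasible DP solution of value at most
\[ \sum_{\alpha} \opt_{1,d}^{([i_{j_\alpha}, i_{j_{\alpha+1}}))}(1+\eps) \;+\; (\text{mass of deleted cells}) \;\leq\; \opt_{t,d}(1+\eps) + 3(t-1)\eta \;\leq\; \opt_{t,d}(1+\eps) + O(\eps), \]
where the deleted cells contribute only to a slight renormalization of $r$ on the kept blocks. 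Hence $B \leq \opt_{t,d}(1+\eps) + O(\eps)$. This is the crux of the argument: the extra factor of $2$ in ``$2t-1$ pieces'' is exactly what allows us to isolate unknown breakpoints inside sacrificed $\eta$-mass cells.

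\smallskip

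\noindent\emph{Step 5 (assembling the error).} Summing the per-piece guarantee from Step~2 across the $s \leq 2t-1$ chosen blocks,
\[ \dtv(p,h) \;\leq\; 3B \;+\; \eta\sqrt{\eps(d+1)}\sum_{\alpha} \sqrt{j_{\alpha+1}-j_\alpha} \;+\; O(s(d+1)\eta). \]
By Cauchy--Schwarz, $\sum_\alpha \sqrt{j_{\alpha+1}-j_\alpha} \leq \sqrt{sz} = O(\sqrt{t \cdot t(d+1)/\eps})$, so the second term is $O(t(d+1)\eta/\sqrt{\eps}\cdot \sqrt{\eps}) = O(\eps)$ after plugging in $\eta = \Theta(\eps/(t(d+1)))$. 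The third term is $O(t(d+1)\eta) = O(\eps)$. Combined with $3B \leq 3\opt_{t,d}(1+\eps)+O(\eps)$ from Step~4, this yields $\dtv(p,h) \leq 3\opt_{t,d}(1+\eps)+O(\eps)$, as required.

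\smallskip

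The hard part is Step~4: pinning down that the tiny probability mass of the ``bad'' cells containing the hidden breakpoints, together with the renormalizations arising from restricting $r$ to the kept blocks, affects the DP objective by only $O(\eps)$ rather than something scaling with $t$. The calibration of $\eta$ to simultaneously tame the $\sqrt{\eps z(d+1)}\eta$ and $O((d+1)\eta)$ terms summed over $2t-1$ pieces is what forces the near-optimal sample complexity $\tilde O(t(d+1)/\eps^2)$.
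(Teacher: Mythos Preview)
Your argument is essentially the paper's own proof: fine partition via {\tt Approximately-Equal-Partition}, empirical distribution, per-block calls to {\tt Find-Single-Polynomial}, and a dynamic program over at most $2t-1$ pieces, with the structural observation that isolating each hidden breakpoint in one sacrificed cell yields a feasible DP partition. Your Cauchy--Schwarz step and the $O((d+1)\eta)$ summation match the paper's final accounting exactly.

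Two small points. First, the paper inserts a coarsening: it merges each run of $d+1$ fine cells into a ``super-interval,'' so the DP runs over $s=\Theta(t/\eps)$ super-intervals rather than $z=\Theta(t(d+1)/\eps)$ fine cells. This is purely an efficiency device (smaller DP table and fewer LP calls); your version without coarsening is still $\poly(t,d,1/\eps)$ and is equally valid. Second, your Step~4 asserts $\sum_\alpha \opt_{1,d}^{(J_\alpha)} \le \opt_{t,d}$, but recall that $\opt_{1,d}^{(I)}$ is defined over degree-$d$ subdistributions $g$ with $g(I)=p(I)$; the restriction $r|_{J_\alpha}$ of the global optimum need not have mass $p(J_\alpha)$, so a renormalization is required and a naive bound gives an extra factor coming from $\sum_\alpha |r(J_\alpha)-p(J_\alpha)|$. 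The paper handles this by exhibiting a specific feasible LP solution on each block (Proposition~\ref{prop:solvable} and Claim~\ref{claim:ineq-feas}) rather than going through $\opt_{1,d}^{(J_\alpha)}$; you should either do the same or accept a slightly larger absolute constant in front of $\opt_{t,d}$.
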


At a high level, {\tt Learn-WB-Piecewise-Poly$(t,d,\eps)$}
breaks down $[-1,1)$ into $t/\eps$
subintervals (denoted as the partition $\partit P' = \{I'_0,\dots,
I'_{t/\eps - 1}\}$ in subsequent discussion;
this partition is constructed in step (\ref{item:coarsening})) 
and calls the subroutine 
\texttt{Find-Single-Polynomial}$(d,\eps,{\eta,}
\{I'_\ell,\dots,I'_{j-1}\},
\widehat{p}_m)$
on blocks of consecutive intervals from $\partit P'$ (see \cref{rem:repres}).
As shown in the previous subsection,
the subroutine {\tt Find-Single-Polynomial}
returns a degree-$d$ polynomial $h$ that is close to the
optimal degree-$d$ polynomial over $I'_\ell \cup \cdots \cup I'_{j-1}$.
An exhaustive search over all ways of breaking $[-1,1)$ up into
$t$ intervals would require running time
exponential in $t$; to improve efficiency,
dynamic programming is used to combine the different $h$'s
obtained as described above and efficiently 
construct an overall high-accuracy piecewise
degree-$d$ hypothesis.
\ignore{
To reduce the running time (in particular, to achieve the polynomial dependence
on $t$), dynamic programming is applied in step (\ref{item:dynprog}).
}

\begin{remark}
\label{rem:repres}
{The subroutine \textsc{Find-Single-Polynomial} from the previous section
assumes the domain $I$ is $[-1,1)$.
The following modification extends the subroutine to arbitrary domain $I$.}

Map the interval $I = [a,b)$ to $[-1,1)$ via
\[ \phi_I(a+\lambda(b-a)) = -1+2\lambda \quad \forall \lambda\in [0,1). \]
We write $\phi = \phi_I$ when $I$ is clear from the context.
Then the transformation $f\mapsto f_\phi$, where
\[ f_\phi(x) = \frac{b-a}2 \cdot f(\phi^{-1}(x)) , \]
is a linear map taking distributions over $I$ to distributions over $[-1,1)$
(and in fact, a linear isomorphism from $L_1(I)$ to $L_1[-1,1)$.)
This transformation is also a bijection between degree-$d$ polynomials over $I$
and those over $[-1,1)$.
As a result, if we represent $f_\phi$ by
\[ f_\phi(x) = \sum_{i=0}^d c_i T_i(x) \quad \forall x\in [-1,1), \]
where $T_i:[-1,1)\to \R$ are Chebyshev polynomials of degree $i$, we get a
representation of $f:I\to \R$ via
\begin{equation}
\label{eq:repres}
f(y) = {{\frac{2}{b-a}}} \sum_{i=0}^d c_i T_i(\phi(y)) .
\end{equation}
Note that if $f$ is bounded on $I$ and $b-a\leq 2$, then the same is true for
$f_\phi$ on $[-1,1)$, and
\[ \norm{f_\phi}_\infty^{([-1,1))} \leq 
\norm f_\infty^{(I)} . \]
(The same inequality is also true with the 
RHS multiplied by $(b-a)/2 \leq 1$, but
we only need the weaker inequality above.)

{Further, since $f\mapsto f_\phi$ preserves distances between
subdistributions, the assumptions and conclusions in the subroutine remain
unchanged.}
\end{remark}

\begin{framed}
\noindent Algorithm {\tt Learn-WB-Piecewise-Poly:}

\medskip

\noindent {\bf Input:}  parameters $t,d,\eps$

\noindent {\bf Output:}  with probability at least $9/10$, a $t$-piecewise
degree-$d$ distribution $q$ such that $\dtv(p,q) \leq 3 \cdot \opt_{t,d}
{(1+\eps)} + 
O(\eps)$

\begin{enumerate}
  \item \label{item:uniform-partit} 
Run Algorithm~{\tt Approximately-Equal-Partition} on input
parameter $\eps/(t(d+1))$ to partition $[-1,1)$ into
$z = \Theta(t(d+1)/\eps)$ intervals $I_0=[i_0,i_1)$,
$\dots,$ $I_z=[i_{z-1},i_z)$, where $i_0=0$ and $i_z=1$, such that
for each $j \in \{1,\dots,t\}$ we have $p([i_{j-1},i_j)) =
\Theta(\eps/(t(d+1))).$

  \item \label{item:coarsening} Let $s = z/(d+1) = {\Theta(}t/\eps
{)}$.
    Set $i'_j = i_{(d+1)j}$ and define interval $I'_j=[i'_j,i'_{j+1})$
for $0\leq j < s$.

  \item \label{item:empirical} Draw $m = \tilde O(t(d+1)/\eps^2)$ samples to
    define an empirical distribution $\widehat{p}_m$ over $[-1,1)$.

  \item Initialize $T(i,j) = \infty$ for $i\in \{0, \dots, 
{2t-1}\}$, $j\in \{0,
    \dots, s\}$, except that $T(0,0) = 0$.

  \item \label{item:dynprog} For $i \in \{1, \dots, 2t-1\}$, $j\in \{1, \dots,
    s\}$, $\ell\in \{0, \dots, j-1\}$:
    \begin{enumerate}
      \item Call subroutine {\tt Find-Single-Polynomial}
$(d,$ $\eps,$ $\eta=\Theta(\eps/(t(d+1))),$
$\{I'_\ell,\dots,I'_{j-1}\}$,
$\widehat{p}_m)$

      \item Let $\tau$ be the solution to the LP found by
{\tt Find-Single-Polynomial} and $h$ be the degree-$d$ hypothesis
{sub-distribution} that it returns.

      \item If $T(i,j) > T(i-1, \ell) + \tau$, then
        \begin{enumerate}
          \item Update $T(i,j)$ to $T(i-1, \ell) + \tau$
          \item Store the polynomial $h$ in a table $H(i,j)$.
        \end{enumerate}
    \end{enumerate}
  \item Recover a piecewise degree-$d$ distribution $h$ from the table
    $H(\cdot, \cdot)$.
\end{enumerate}
\end{framed}

Let $\checkmark_1$ be the event that step (\ref{item:uniform-partit}) of
{Subroutine} \texttt{Find-Piecewise-Polynomial} succeeds (i.e.\ the
intervals $[i_j, i_{j+1})$ all have mass within a constant factor of
  $\eps/t(d+1)$).
In step (\ref{item:coarsening})
{of \texttt{Learn-WB-Piecewise-Poly}}, 
the algorithm effectively constructs a coarsening
$\partit P'$ of $\partit P$ by merging every $d+1$ consecutive intervals 
from $\partit P$.
These super-intervals are used in the dynamic programming in step
(\ref{item:dynprog}).
{The table entry $T(i,j)$ stores the minimum sum of errors $\tau$ (returned
  by the subroutine \textsc{Find-Single-Polynomial}) when the interval $[i'_0,
  i'_j)$ is partitioned into $i$ pieces.
The dynamic program above only computes an estimate of $\opt_{t,d}$; one can
use standard techniques to also recover a $t$-piecewise degree-$d$ polynomial
$q$ close to $p$.
}

For step (\ref{item:empirical}), let $\checkmark_2$ be the event that $p$ and
$\widehat{p}_m$ satisfy $(\partit P, \eps/(t(d+1)),
\eps{/4})$-inequalities.  In particular, when $\checkmark_2$ holds
$\widehat{p}_m(I)/p(I) \leq {\eps/2}$ for all $I\in \partit P$.
By multiplicative Chernoff and union bound (over the $m$ samples in step
(\ref{item:empirical})), event $\checkmark_2$ holds with probability at least
$19/20$.

\begin{proposition}
\label{prop:solvable}
If $\checkmark_1$ and $\checkmark_2$ hold and $p$ is $\tau$-close to some
$t$-piecewise degree-$d$ distribution, then there is a coarsening 
$\partit P^*$ of $\partit P'$ and degree-$d$ polynomials $g_i:I^*_i
\to \R$ such that $\sum_i
\dtv(p,g_i) \leq \tau+O(\eps)$.
Further, the $g_i$ functions can be chosen to satisfy constraints
{\ref{item:total}, \ref{item:AK}--\ref{item:nonneg-1} in the subroutine
\textsc{Find-Piecewise-Polynomial}}.
\end{proposition}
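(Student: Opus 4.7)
The plan is to construct $\partit P^*$ by rounding each breakpoint of the target $t$-piecewise degree-$d$ distribution $q^*$ to nearby endpoints of intervals in $\partit P'$, allowing small ``gap'' intervals around each breakpoint to absorb the rounding, and then take each $g_i$ to be (essentially) the polynomial piece of $q^*$ on the corresponding main interval. Concretely, let $q^*$ have breakpoints $-1 = b_0 < b_1 < \cdots < b_t = 1$ and polynomial pieces $r_1, \dots, r_t$ with $\dtv(p, q^*) \leq \tau$. For each $j \in \{1, \dots, t-1\}$, let $\hat b_j^-$ be the largest $\partit P'$-endpoint that is $\leq b_j$ and $\hat b_j^+$ the smallest $\partit P'$-endpoint that is $\geq b_j$ (these coincide if $b_j$ is itself such an endpoint), and set $\hat b_0^+ = -1$, $\hat b_t^- = 1$. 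Define main intervals $M_j = [\hat b_{j-1}^+, \hat b_j^-) \subseteq J_j$ and gap intervals $G_j = [\hat b_j^-, \hat b_j^+)$; by $\checkmark_1$ each non-empty $G_j$ sits inside a single interval of $\partit P'$ and hence has $p$-mass $O(\eps/t)$. The non-empty $M_j$ and $G_j$ together form a coarsening $\partit P^*$ of $\partit P'$ with at most $2t-1$ pieces, which matches the depth bound used by the dynamic program.

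On each gap interval $G_j$, I would take $g_j$ to be the constant $\widehat p_m(G_j)/|G_j|$, a degree-$0$ polynomial that trivially meets all four constraints: its CDF, after mapping $G_j \to [-1, 1)$ via $\phi_{G_j}$, is the line ramping from $0$ to $\widehat p_m(G_j) \leq 1$, so constraints (\ref{item:total}) and (\ref{item:AK2}) hold directly; the Chebyshev coefficients of a linear function bounded in $[0,1]$ are at most $\sqrt 2$ by the Arora--Khot lemma, giving (\ref{item:AK}); and a nonnegative constant satisfies (\ref{item:nonneg-1}). On each main interval $M_j \subseteq J_j$, I would take $g_j$ to be the polynomial $r_j$ shifted by the small constant $\delta_j = (\widehat p_m(M_j) - r_j(M_j))/|M_j|$, so that $g_j(M_j) = \widehat p_m(M_j)$ enforces (\ref{item:total}). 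The CDF of $g_j$ on $M_j$ lies in $[0, \widehat p_m(M_j)] \subseteq [0, 1]$, so after mapping via $\phi_{M_j}$ the Arora--Khot argument used in Lemma~\ref{lem:feasible} gives (\ref{item:AK}) and (\ref{item:AK2}); constraint (\ref{item:nonneg-1}) follows because $r_j \geq 0$ on $J_j \supseteq M_j$ and the shift $\delta_j$ is small (as quantified in the next paragraph), with Bernstein's inequality (Theorem~\ref{thm:bern-mark}) controlling any wobble between the $O((d+1)^2/\eps)$ sample points $K$.

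To bound $\sum_i \dtv(p, g_i)$, the gap contribution is at most $\sum_j (p(G_j) + \widehat p_m(G_j)) = O(\eps)$. For the main intervals, since the $\{M_j\}$ sub-collection refines $\{J_j\}$, we have $\sum_j \dtv^{(M_j)}(p, r_j) \leq \sum_j \dtv^{(J_j)}(p, r_j) = \dtv(p, q^*) \leq \tau$, and the extra cost from the shift is $\sum_j |\delta_j| \cdot |M_j| / 2 = \sum_j |r_j(M_j) - \widehat p_m(M_j)|/2$, which splits by the triangle inequality into $\sum_j |r_j(M_j) - p(M_j)|/2$ plus $\sum_j |p(M_j) - \widehat p_m(M_j)|/2$. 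The second sum is $O(\eps)$ by applying the $(\partit P, \eps/(t(d+1)), \eps/4)$-inequalities from $\checkmark_2$ to each $M_j$ and then using Cauchy--Schwarz over the at most $t$ main intervals (since $\sum_j k_j \leq z = \Theta(t(d+1)/\eps)$, where $k_j$ is the number of $\partit P$-intervals in $M_j$).

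The main obstacle is bounding the first sum $\sum_j |r_j(M_j) - p(M_j)|/2$ tightly enough to obtain the stated $\tau + O(\eps)$, rather than the $O(\tau) + O(\eps)$ that a crude triangle inequality $|r_j(M_j) - p(M_j)| \leq 2 \dtv^{(M_j)}(p, r_j)$ would yield. The plan to overcome this is to choose $g_j$ on $M_j$ not as a shift of $r_j$ but as the genuinely optimal degree-$d$ polynomial with mass $\widehat p_m(M_j)$ approximating $p$: since the mass-constrained optimum on $M_j$ is free to deviate from $r_j$, one can absorb the mass mismatch into the overall approximation without paying the full $\dtv^{(M_j)}(p, r_j)$ twice, and the analysis parallels Lemma~\ref{lem:feasible} but applied simultaneously on each $M_j$, using $\checkmark_2$ to guarantee the aggregate mass discrepancies telescope to $O(\eps)$.
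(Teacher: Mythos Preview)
Your coarsening $\partit P^*$ into main intervals $M_j$ and gap intervals $G_j$, and your treatment of gap intervals via the constant $\widehat p_m(G_j)/|G_j|$, match the paper's proof exactly. The substantive difference is on the main intervals: the paper rescales $r_j$ \emph{multiplicatively} to match the empirical mass, taking $g_j = r_j \cdot \widehat p_m(M_j)/r_j(M_j)$, whereas you propose the additive shift $g_j = r_j + \delta_j$.

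The additive shift has a genuine gap at constraint~\ref{item:nonneg-1}. You assert that ``the shift $\delta_j$ is small (as quantified in the next paragraph)'', but that paragraph only bounds $\sum_j |\delta_j|\,|M_j|$, not the individual $|\delta_j|$. Because the intervals of $\partit P'$ are chosen to have (approximately) equal \emph{mass}, not equal \emph{length}, a main interval $M_j$ can be arbitrarily short, and hence $|\delta_j| = |r_j(M_j) - \widehat p_m(M_j)|/|M_j|$ can be arbitrarily large. Nothing prevents $r_j$ from touching zero inside $M_j$ (it is merely a nonnegative degree-$d$ polynomial on $J_j \supseteq M_j$); if in addition $\delta_j < 0$ then $g_j$ is strictly negative there, violating~\ref{item:nonneg-1}, and your claim that the CDF stays in $[0, \widehat p_m(M_j)]$ for constraint~\ref{item:AK2} then also breaks. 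Bernstein's inequality controls oscillation between grid points but cannot repair a genuinely negative value. The paper's multiplicative rescaling sidesteps this entirely: a positive scalar times a nonnegative polynomial is nonnegative, so~\ref{item:nonneg-1} is immediate, and the rescaled CDF is nondecreasing from $0$ to $\widehat p_m(M_j) \leq 1$, giving~\ref{item:AK2}.

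As for the ``main obstacle'' you flag --- getting $\tau + O(\eps)$ rather than $2\tau + O(\eps)$ from the renormalization step --- the concern is legitimate and applies equally to the paper's multiplicative rescaling: the paper's proof spells out only the $O(\eps)$ contribution from the gap intervals and is implicit about the main-interval aggregate. This constant is not load-bearing for the downstream use in the proof of Theorem~\ref{thm:piece-poly}, which in any case produces a $3\tau(1+\eps)$ factor via Corollary~\ref{cor:stat-dist-bound}. So your proposed fix (replacing $r_j$ by the optimal mass-constrained polynomial on each $M_j$) is unnecessary, and would in any case still owe a verification of constraints~\ref{item:AK}--\ref{item:nonneg-1} for that unknown optimizer.
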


\begin{proof}
  Suppose $p$ is $\tau$-close to a $t$-piecewise degree-$d$ distribution.
  In other words, there exists a partition $\{J_1, \dots, J_t\}$ of $[-1,1)$ and
    degree-$d$ polynomials $h_i:J_i\to \R$ such that $\sum_{1\leq i\leq t}
    \dtv(p, h_i) \leq \tau$.

  Let $\{ [i'_0, i'_1), \dots, [i'_{s-1}, i'_s) \}$ be $\partit P'$.
  Except in degenerate cases, the coarsening $\partit P^*$ contains $2t-1$
  intervals, corresponding to the $t$ intervals on which $p$ is a polynomial
  and $t-1$ small intervals containing ``breakpoints'' between the polynomials.
  More precisely,
  if we denote by $\{\alpha_0, \dots, \alpha_j\}$ the breakpoints of $J_1,
  \dots, J_t$ (so that $J_j = [\alpha_{j-1}, \alpha_j)$), and define
    \[ J'_j := \cup\{ [\alpha_a, \alpha_b) \mid [\alpha_a, \alpha_b) \subset
        J_j \} \]
  as the maximal subinterval of $J_j$ with endpoints from $\{\alpha_j\}$, then
  $\partit P^*$ is the partition containing all the $J'_j$'s together with
  the intervals between consecutive $J'_j$'s.
  As a result, $\partit P^*$ is a partition of $[-1,1)$ into at most $2t-1$
  non-empty intervals.

  For an interval $I^*_i$ not containing any breakpoint, the corresponding
  polynomial $g_i:I^*_i\to \R$ is simply $h_i$ rescaled by the empirical mass
  on $I^*_i$, so
  \[ g_i(x) = h_i(x) \cdot \frac{\widehat{p}_m(I^*_i)}{h_i(I^*_i)} \quad \text{for
  $x\in I^*_i\neq \emptyset$}. \]
  Then $g_i$ clearly satisfies constraints {\ref{item:total}} and
  {\ref{item:nonneg-1}}.
  Constraints \ref{item:AK} and \ref{item:AK2} are also satisfied:
  $(h_i)_{\phi_i}$ is a degree-$d$ polynomial on $[-1,1)$ bounded by $1$ in
  absolute value (here $\phi_i = \phi_{I_i^*}$), and $\widehat{p}_m(I_i^*)/h_i(I_i^*)
  \leq {\eps/}2$ when $\checkmark_2$ holds.

  For an interval $I^*_i$ containing a breakpoint, we simply set $g_i$ to be
  the constant function with total mass $\widehat{p}_m(I_i^*)$ on $I_i^*$.
  As before, $g_i$ satisfies \ref{item:AK}--\ref{item:nonneg-1}.
  The contribution of such $g_i$'s (there are at most $t-1$ of them) to $\sum_i
  \dtv(p,g_i)$ is at most $(t-1)\cdot 2\eps/t = O(\eps)$, using the fact that
  $\partit P'$ is $(\widehat{p}_m,4\eps/t)$-uniform when $\checkmark_1$ and
  $\checkmark_2$ hold.
\end{proof}

When event $\checkmark_2$ holds, $p$ and $\widehat{p}_m$ satisfy the $(\partit
P_{I^*_i}, \eps/(t(d+1)), \eps/4)$-inequalities.
But this is the same as $g_i$ and $\widehat{p}_m+(g_i-p)$ satisfying the
$(\partit P_{I^*_i}, \eps/(t(d+1)), \eps)/4$-inequalities, because
$p-\widehat{p}_m = g_i - {(}\widehat{p}_m + g_i - p)$.
{
Therefore \cref{claim:ineq-feas} tells us that constraint \ref{item:phat} is
satisfied.
Constraints \ref{item:robust} are satisfied for similar reasons as 
in Section~\ref{sec:learn-deg-d-close}.
Together with \cref{prop:solvable}}, the LP in the subroutine
\texttt{Find-Single-Polynomial} will be feasible, provided the partition
$\partit P^*$ is chosen correctly in the dynamic program.

%
%

We have the following restatement of {\cref{lem:ad-dist}}, and a robust
version as a corollary {(which follows by combining \cref{lem:ad-dist} and
the proof of \cref{prop:info})}.

\begin{lemma}[{\cref{lem:ad-dist}} restated]
  Let $\partit P$ be a $(p,\eta)$-partition of $I \subseteq [-1,1)$ into
    $r$ intervals.
  Let $\widehat{p}_m$ be a subdistribution on $I$ such that $\widehat{p}_m$ and $p$ satisfy
  the $(\partit P, \eta, \eps)$-inequalities.
  If $f:I\to \R$ and $\widehat{p}_m$ also satisfy the $(\partit P, \eta,
  \eps)$-inequalities, then
  \[ \norm{\widehat{p}_m-f}_{\mathcal A_d}^{(I)} \leq \sqrt{\eps r
  {(d+1)}}\cdot \eta + {\rm error}, \]
  {where the {\rm error} is $O({(d+1)}\eta)$.}
\end{lemma}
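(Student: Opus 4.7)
The restated lemma is identical in substance to \cref{lem:ad-dist} proved above, so my plan is to follow the same two-step structure: first reduce from an arbitrary test set $S\in \mathcal{A}_{d+1}$ to a set $T$ whose endpoints are aligned with the partition $\partit P$, paying an additive $O((d+1)\eta)$ ``snapping'' cost; and second bound $|\widehat{p}_m(T)-f(T)|$ directly from the $(\partit P,\eta,\eps)$-inequalities between $f$ and $\widehat{p}_m$, using Cauchy--Schwarz to collect the per-block contributions.

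Concretely, I would fix $S = J_1\cup\cdots\cup J_{d+1}$ and, for each $J_k=[a_k,b_k)$, replace $a_k$ by the nearest partition point $i_\ell\leq a_k$ and $b_k$ by the nearest $i_\ell \geq b_k$, obtaining $T = J'_1\cup\cdots\cup J'_{d+1}$. Each such adjustment modifies at most two cells $[i_\ell,i_{\ell+1})$. By $(p,\eta)$-uniformity we have $p([i_\ell,i_{\ell+1}))\leq \eta$, and the $(\partit P,\eta,\eps)$-inequalities between $p$ and $\widehat{p}_m$ then give $\widehat{p}_m([i_\ell,i_{\ell+1}))\leq \eta + \sqrt{\eps}\cdot \eta = O(\eta)$; the same bound transfers to $f$ via the $(\partit P,\eta,\eps)$-inequalities between $\widehat{p}_m$ and $f$. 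Summing over the at most $2(d+1)$ affected cells contributes $|\widehat{p}_m(S)-\widehat{p}_m(T)| + |f(S)-f(T)| \leq O((d+1)\eta)$, which will become the stated $\mathrm{error}$ term.

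After snapping, $T$ decomposes as $[i_{L_1},i_{R_1})\cup\cdots\cup [i_{L_s},i_{R_s})$ for some $s\leq d+1$ with $0\leq L_1\leq R_1\leq \cdots\leq R_s\leq r$. Applying the $(\partit P,\eta,\eps)$-inequalities blockwise yields
\[
|\widehat{p}_m(T)-f(T)| \;\leq\; \sum_{j=1}^s \sqrt{\eps(R_j-L_j)}\cdot \eta.
\]
Maximizing $\sum_j \sqrt{R_j-L_j}$ subject to $\sum_j (R_j-L_j)\leq r$ and $s\leq d+1$ is a standard concavity-of-$\sqrt{\cdot}$ (equivalently Cauchy--Schwarz) computation, giving $\sqrt{(d+1)r}$. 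Combining the two steps and taking a supremum over $S\in \mathcal{A}_{d+1}$ (which contains $\mathcal{A}_d$) yields the claimed bound $\sqrt{\eps r(d+1)}\cdot\eta + O((d+1)\eta)$. I do not anticipate any real obstacle: the only delicate point is the rounding step, where one must be careful to invoke the $(\partit P,\eta,\eps)$-inequalities for both pairs $(\widehat{p}_m,p)$ and $(\widehat{p}_m,f)$ in order to transfer the $O(\eta)$ per-cell mass bound from $p$ to both $\widehat{p}_m$ and $f$.
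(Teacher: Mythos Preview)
Your proposal is correct and follows essentially the same argument as the paper's own proof of \cref{lem:ad-dist}: snap the endpoints of $S$ to partition points at a cost of $O((d+1)\eta)$ using the $(\partit P,\eta,\eps)$-inequalities for both pairs, then bound the aligned contribution blockwise and optimize $\sum_j\sqrt{R_j-L_j}\leq\sqrt{(d+1)r}$ via Cauchy--Schwarz. There is no substantive difference between your route and the paper's.
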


\begin{corollary}
  \label{cor:stat-dist-bound}
  {Let $p$ be a degree-$d$ subdistribution on $I$.}
  Let $\partit P$ be a $(p,\eta)$-partition of $I \subseteq [-1,1)$ into
    $r$ intervals.
  Let $\widehat{p}_m$ be a subdistribution on $I$ such that $\widehat{p}_m$ and
  $p$ satisfy $(\partit P, \eta, \eps)$-inequalities.
  If $h:I\to \R$ and $\widehat{p}_m+ w$ also satisfy $(\partit P, \eta,
  \eps)$-inequalities, then
  \[ \dtv^{(I)}(p,h) \leq 3\tau {(1+\eps)}
+ \sqrt{\eps r{(d+1)}}\cdot \eta +
  {\rm error}, \]
  where $2\tau = \norm w_1$ {and ${\rm error} = O({(d+1)}\eta)$}.
\end{corollary}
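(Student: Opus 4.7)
The plan is to combine Lemma~\ref{lem:ad-dist} with a triangle-inequality argument modeled on the proof of Proposition~\ref{prop:info}. First I will observe that the hypothesis ``$h$ and $\widehat{p}_m+w$ satisfy the $(\partit{P},\eta,\eps)$-inequalities'' is equivalent, after subtracting $w$ from both functions inside each absolute value, to the statement that ``$h-w$ and $\widehat{p}_m$ satisfy the $(\partit{P},\eta,\eps)$-inequalities.'' This reformulation puts $\widehat{p}_m$ on one side and a target function on the other, which is exactly the form that Lemma~\ref{lem:ad-dist} requires.

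Applying Lemma~\ref{lem:ad-dist} with $f=h-w$ will yield
\[ \|\widehat{p}_m-(h-w)\|_{\mathcal{A}_{d+1}}^{(I)} \leq \sqrt{\eps r(d+1)}\cdot\eta + O((d+1)\eta), \]
and a second application of the same lemma with $f=p$ (legal because $p$ is itself degree-$d$ and the pair $(\widehat{p}_m,p)$ already satisfies the $(\partit{P},\eta,\eps)$-inequalities by hypothesis) will give the same bound for $\|\widehat{p}_m-p\|_{\mathcal{A}_{d+1}}^{(I)}$. The triangle inequality in the $\|\cdot\|_{\mathcal{A}_{d+1}}$-seminorm then gives
\[ \|p-h\|_{\mathcal{A}_{d+1}}^{(I)} \leq \|p-\widehat{p}_m\|_{\mathcal{A}_{d+1}}^{(I)} + \|\widehat{p}_m-(h-w)\|_{\mathcal{A}_{d+1}}^{(I)} + \|w\|_{\mathcal{A}_{d+1}}^{(I)}, \]
where the last term is at most $\|w\|_1 = 2\tau$. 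Since both $p$ and the output polynomial $h$ are degree-$d$ on $I$, the difference $p-h$ has at most $d$ sign changes, so $\{x\in I : p(x)>h(x)\}$ is a union of at most $d+1$ intervals and hence $\dtv^{(I)}(p,h) = \|p-h\|_{\mathcal{A}_{d+1}}^{(I)}$. Combining the three bounds will give the claimed inequality with a $2\tau$ coefficient in place of the stated $3\tau(1+\eps)$.

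The main obstacle will be recovering the precise $3\tau(1+\eps)$ coefficient instead of the crude $2\tau$ that the above argument directly produces. This extra slack arises from the same renormalization book-keeping used in the proof of Theorem~\ref{thm:agno-kis1}: the function $h$ is effectively produced by the LP to have total mass $\widehat{p}_m(I)$ on $I$ rather than $p(I)$, and the ratio $\widehat{p}_m(I)/p(I)$ lies in $[1-\eps/2,1+\eps/2]$ by the $(\partit{P},\eta,\eps)$-inequalities applied to the single interval $I$. Routing the triangle inequality through the rescaled version of $p$ will produce both an extra $\tau$ summand and the multiplicative $(1+\eps)$ factor; carefully tracking these corrections is the only delicate step.
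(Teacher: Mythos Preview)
Your approach is essentially the paper's: the paper says the corollary ``follows by combining \cref{lem:ad-dist} and the proof of \cref{prop:info},'' and that is precisely what you do---two invocations of Lemma~\ref{lem:ad-dist} (once with $h-w$ in the role of the lemma's ``$h$,'' once with $p$ itself) followed by the triangle inequality in $\|\cdot\|_{\mathcal A_{d+1}}$, then the observation that two degree-$d$ functions have at most $d+1$ sign changes so $\dtv^{(I)}(p,h)\le \|p-h\|_{\mathcal A_{d+1}}^{(I)}$. This is correct and matches the intended argument. (Note that you are implicitly assuming $h$ is degree-$d$ as well; the corollary does not say this explicitly, but it is the case in every application since $h$ is the LP output.)

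Two small corrections to your final paragraph. First, your bound $2\tau + 2\sqrt{\eps r(d+1)}\,\eta + O((d+1)\eta)$ already \emph{implies} the stated bound $3\tau(1+\eps) + \sqrt{\eps r(d+1)}\,\eta + O((d+1)\eta)$, since the extra copy of $\sqrt{\eps r(d+1)}\,\eta$ is absorbed by the $O((d+1)\eta)$ error term in every application (where $\eta=\Theta(\eps/(t(d+1)))$ and $r\le t(d+1)/\eps$). So there is no ``missing'' $\tau$ to recover, and no delicate renormalization step is needed: your argument is already complete. Second, your diagnosis of where the $3$ comes from is off. In the proof of Theorem~\ref{thm:agno-kis1} the third copy of $\opt$ arises not from mass renormalization but from routing through the optimal degree-$d$ approximator $r$ and then adding $\dtv(p,r)=\opt$ at the end; here, since $p$ is itself assumed degree-$d$, that extra step is vacuous and the constant $3$ is simply slack in the stated corollary.
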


\begin{proof}[{\bf Proof of \cref{thm:piece-poly}}]
  Since $p$ is $\tau$-close to a $t$-piecewise degree-$d$ distribution, there
  are a partition $\{J_1, \dots, J_t\}$ of $[-1,1)$ and degree-$d$ polynomials
  $g_i:J_i\to \R$ such that $\sum_{1\leq i\leq t} \tau_i \leq \tau$, where
  $\tau_i = \dtv(p, g_i)$.
  Let $\partit P^* = \{I_1^*, \dots, I_{2t-1}^*\}$ be the coarsening of
  $\partit P'$ as in the proof of \cref{prop:solvable}.

  When $\checkmark_1$ and $\checkmark_2$ hold, it follows by a simple induction
  on $i\in \{0, \dots, 2t-1\}$ that the algorithm will output a
  $(2t-1)$-piecewise degree-$d$ distribution $h$ satisfying
  \begin{equation}
    \label{eq:stat-dist-bound-ineq}
    \dtv(p,h) \leq \sum_{1\leq i\leq t} \left( 3\tau_i {(1+\eps)} +
      \sqrt{\eps r_i {(d+1)}}
\cdot \frac\eps{t(d+1)} + O\left({(d+1)}\cdot
        \frac\eps{t(d+1)}\right) \right) + O(\eps) .
  \end{equation}
  The first term comes from \cref{cor:stat-dist-bound} (with $\eta =
  O(\eps/(t{(d+1)}))$), and the second term comes from the $t-1$ intervals containing
  the breakpoints (see the proof of \cref{prop:solvable}).
  Here $r_i$ denotes the number of intervals from $\partit P$ contained in
  $I_i^*$.
  Therefore the RHS of \eqref{eq:stat-dist-bound-ineq} is at most \[ 3\tau
    {(1+\eps)} + \sum_{1\leq i\leq t} \sqrt{\eps r_i (d+1)}\cdot
    \frac\eps{t(d+1)} + O(\eps) . \]
  The second term of this expression is bounded by $\eps$ using Cauchy--Schwarz
  and the fact that $\partit P$ contains $t(d+1)/\eps$ intervals.
\end{proof}

%
%
%
%
%
%
%
%
%


\subsection{Learning $k$-mixtures of well-behaved
$(\tau,t)$-piecewise degree-$d$ distributions.} \label{sec:mix}
In this subsection we prove Theorem~\ref{thm:main2} under
the additional restriction that the target polynomial $p$ is
well-behaved:

\begin{theorem} \label{thm:learn-wb-mix}
Let $p$ be an ${\frac \eps {64 k t(d+1)}}$-well-behaved $k$-mixture
of $(\tau,t)$-piecewise degree-$d$ 
distributions over $[-1,1)$.  There is
an algorithm that runs in $\poly(k,t,d+1,1/\eps)$ time,
uses $\tilde{O}((d+1)kt/\eps^2)$ samples from $p$, 
and with probability at least $9/10$ outputs a $(2kt-1)$-piecewise degree-$d$
hypothesis $h$ such that $\dtv(p,h) \leq 3 \opt_{t,d}
{(1+\eps)} + O(\eps).$
\end{theorem}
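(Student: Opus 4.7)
The plan is to reduce directly to \cref{thm:piece-poly} by the standard observation that a $k$-mixture of $t$-piecewise degree-$d$ distributions is itself a $kt$-piecewise degree-$d$ distribution. Concretely, suppose $p = \sum_{i=1}^k \mu_i p_i$ where each component $p_i$ is $(\tau,t)$-piecewise degree-$d$. For each $i$, let $q_i$ be a $t$-piecewise degree-$d$ distribution witnessing that $p_i$ is $(\tau,t)$-piecewise degree-$d$, so $\dtv(p_i, q_i)\leq \tau$, and let $\partit P_i$ be its associated partition of $[-1,1)$ into $t$ intervals. Let $\partit P^\star$ be the common refinement of $\partit P_1,\dots,\partit P_k$; this partition has at most $k(t-1)+1 \leq kt$ intervals, and on each interval of $\partit P^\star$ every $q_i$ is a single degree-$d$ polynomial. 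Hence $q := \sum_{i=1}^k \mu_i q_i$ is a $kt$-piecewise degree-$d$ distribution, and by convexity of total variation distance $\dtv(p,q) \leq \sum_i \mu_i \dtv(p_i, q_i) \leq \tau$. Consequently $\opt_{kt,d} \leq \tau$.

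Next I would simply apply \cref{thm:piece-poly} to $p$ with the parameter $t$ of that theorem replaced by $kt$. The hypothesis of \cref{thm:piece-poly} requires that $p$ be $\eps/(64 (kt) (d+1))$-well-behaved, which is exactly the hypothesis assumed here. Running \texttt{Learn-WB-Piecewise-Poly}$(kt, d, \eps)$ therefore uses $\tilde O(kt(d+1)/\eps^2)$ samples, runs in time $\poly(kt, d+1, 1/\eps) = \poly(k,t,d+1,1/\eps)$, and with probability at least $9/10$ outputs a $(2kt-1)$-piecewise degree-$d$ hypothesis $h$ satisfying
\[
  \dtv(p,h) \leq 3\,\opt_{kt,d}(1+\eps) + O(\eps) \leq 3\tau(1+\eps) + O(\eps).
\]
This matches every parameter in the statement: the number of pieces ($2kt-1$), the sample complexity, the running time, the success probability, and the accuracy (with $\opt_{t,d}$ in the statement being understood as the mixture-witness error $\tau$, equivalently $\opt_{kt,d}$).

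There is essentially no obstacle beyond this reduction: all the analytical work, namely the dynamic program combining the calls to \texttt{Find-Single-Polynomial}, the $(\partit P, \eta, \eps)$-inequalities, the VC-dimension control for unions of $O(d)$ intervals, and the correctness argument for the LP, has already been carried out in the proof of \cref{thm:piece-poly} and needs no modification when $t$ is replaced by $kt$. The only point that deserves an explicit sentence is the remark that the common-refinement argument produces a genuine $kt$-piecewise degree-$d$ distribution (not just an $O(kt)$-piecewise one), which is what justifies the bound $\opt_{kt,d}\leq \tau$ and hence the $3\tau(1+\eps)+O(\eps)$ accuracy guarantee.
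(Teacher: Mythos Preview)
Your proposal is correct and follows essentially the same approach as the paper: the paper proves a structural lemma (\cref{lem:mix}) showing, via the common refinement, that any $k$-mixture of $(\tau,t)$-piecewise degree-$d$ distributions is itself $(\tau,kt)$-piecewise degree-$d$, and then invokes \cref{thm:piece-poly} with $t$ replaced by $kt$. Your remark about interpreting $\opt_{t,d}$ in the statement as $\tau$ (equivalently $\opt_{kt,d}$) is also in line with the paper, which in its ``Alternate Phrasing'' of \cref{thm:piece-poly} works with $\tau$ rather than $\opt$.
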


As we shall see, the algorithm of the previous subsection in fact suffices for
this result.  The key to extending Theorem~\ref{thm:piece-poly} to yield
Theorem~\ref{thm:learn-wb-mix} is the following structural result, which says
that any $k$-mixture of $(\tau,t)$-piecewise degree-$d$ distributions
must itself be an $(\tau,kt)$-piecewise degree-$d$ distribution.

\begin{lemma} \label{lem:mix}
Let $p_1,\dots,p_k$ each be an
$(\tau,t)$-piecewise degree-$d$ distribution
over $[-1,1)$ and let $p = \sum_{j=1}^k \mu_j p_j$ be a $k$-mixture
of components $p_1,\dots,p_k.$  Then $p$ is a
$(\tau,kt)$-piecewise degree-$d$ distribution.
\end{lemma}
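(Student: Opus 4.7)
The plan is to exhibit, for the given mixture $p$, a concrete $kt$-piecewise degree-$d$ distribution $q$ that lies within total variation distance $\tau$ of $p$, thereby witnessing that $p$ is $(\tau, kt)$-piecewise degree-$d$.

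First, for each $j\in [k]$, I would invoke the definition of $(\tau,t)$-piecewise degree-$d$ to fix a $t$-piecewise degree-$d$ distribution $q_j$ with $\dtv(p_j, q_j) \leq \tau$, together with the associated partition $\mathcal{I}_j = \{I_{j,1}, \dots, I_{j,t}\}$ of $[-1,1)$ on whose pieces $q_j$ is a degree-$d$ polynomial. I would then form the candidate hypothesis $q := \sum_{j=1}^k \mu_j q_j$, which is a non-negative function integrating to $1$ and hence a bona fide distribution on $[-1,1)$.

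Next, I would argue that $q$ is $kt$-piecewise degree-$d$ by taking the common refinement $\mathcal{J}$ of the partitions $\mathcal{I}_1, \dots, \mathcal{I}_k$. By iterating the two-partition common-refinement construction from the preliminaries, $\mathcal{J}$ has at most $1 + \sum_{j=1}^k (t-1) = kt - k + 1 \leq kt$ intervals. On any interval $J \in \mathcal{J}$, each $q_j$ agrees with a single univariate degree-$d$ polynomial (since $J$ is contained in some piece of $\mathcal{I}_j$), so $q$ restricted to $J$ is a $\mu$-weighted sum of degree-$d$ polynomials, hence itself a degree-$d$ polynomial. Thus $q$ is a $kt$-piecewise degree-$d$ distribution in the sense required.

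Finally, I would bound $\dtv(p, q)$ by the standard convexity/triangle argument for mixtures:
\[
\dtv(p, q) \;=\; \tfrac{1}{2}\int \left|\littlesum_{j=1}^k \mu_j \bigl(p_j(x) - q_j(x)\bigr)\right| dx \;\leq\; \littlesum_{j=1}^k \mu_j \cdot \tfrac{1}{2}\int |p_j(x) - q_j(x)|\, dx \;=\; \littlesum_{j=1}^k \mu_j \dtv(p_j, q_j) \;\leq\; \tau,
\]
using that the $\mu_j$ are non-negative and sum to $1$. Combined with the previous paragraph, this shows $p$ is $(\tau, kt)$-piecewise degree-$d$, completing the proof. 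No step here is really an obstacle; the only thing to be careful about is the counting in the common refinement and checking that linear combinations preserve the degree bound, both of which are routine.
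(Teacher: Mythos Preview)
Your proposal is correct and follows essentially the same approach as the paper's proof: both take the common refinement of the $k$ partitions, observe that the weighted sum $q=\sum_j \mu_j q_j$ is piecewise degree-$d$ on that refinement, and bound $\dtv(p,q)$ by convexity of total variation under mixtures. Your interval count $kt-k+1\le kt$ is in fact slightly sharper than the paper's (which just uses the cruder $kt$ bound from the preliminaries), but otherwise the arguments coincide.
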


The simple proof is essentially the same as the proof of Lemma~3.2
of \cite{CDSS13soda} and is given in Appendix~\ref{ap:z}.

We may rephrase Theorem~\ref{thm:piece-poly} as follows:

\medskip

\noindent {\bf Alternate Phrasing of Theorem~\ref{thm:piece-poly}.}
\emph{Let $p$ be an ${\frac \eps {64 t (d+1)}}$-well-behaved 
$(\tau,t)$-piecewise degree-$d$ pdf over $[-1,1).$
Algorithm {\tt Learn-WB-Piecewise-Poly}$(t,d,\eps)$ runs in
$\poly(t,d+1,1/\eps)$ time, uses
$\tilde{O}(t(d+1)/\eps^2)$ samples from $p$, and with probability
at least $9/10$ outputs a $(2t-1)$-piecewise degree-$d$ distribution $q$
such that $\dtv(p,q) \leq 3 \tau {(1+\eps)}+ O(\eps).$
}

\medskip

Theorem~\ref{thm:learn-wb-mix} follows immediately from 
Theorem~\ref{thm:piece-poly} and 
Lemma~\ref{lem:mix}.

\subsection{Proof of Theorem~\ref{thm:main2}.} \label{sec:kill-wb}

In this subsection we show how to remove the well-behavedness assumption
from Theorem~\ref{thm:learn-wb-mix} and thus prove Theorem~\ref{thm:main2}.
More precisely we prove the following theorem which is a more
detailed version of Theorem~\ref{thm:main2}:

\begin{theorem} \label{thm:main-detail}
Let $p$ be any $k$-mixture
of $(\tau,t)$-piecewise degree-$d$ 
distributions over $[-1,1)$.  There is
an algorithm that runs in $\poly(k,t,d+1,1/\eps)$ time,
uses $\tilde{O}((d+1)kt/\eps^2)$ samples from $p$, 
and with probability at least $9/10$ outputs a $(2kt-1)$-piecewise degree-$d$
hypothesis $h$ such that $\dtv(p,h) \leq {4 \opt_{t,d} 
(1+\eps)}+ O(\eps).$
\end{theorem}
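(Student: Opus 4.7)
The plan is to reduce Theorem~\ref{thm:main-detail} to the well-behaved case (Theorem~\ref{thm:learn-wb-mix}) by first isolating the ``heavy atoms'' of the target distribution $p$. The only obstruction to directly applying the well-behaved algorithm is that $p$ may have point masses exceeding the required $\frac{\eps}{64kt(d+1)}$-well-behavedness threshold. A key structural observation is that any $kt$-piecewise degree-$d$ pdf $q^{*}$ approximating $p$ is atomless, so the total atomic mass $m$ of $p$ satisfies $m\leq \opt_{kt,d}(p)$ (take $A$ to be the set of atoms in the definition $\dtv=\sup_{A}|p(A)-q^{*}(A)|$). This provides the budget for the extra $\opt$ appearing in the $4\opt_{t,d}(1+\eps)+O(\eps)$ bound, compared to the $3\opt_{t,d}(1+\eps)+O(\eps)$ bound of Theorem~\ref{thm:learn-wb-mix}.

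First, I would draw $N_1 = \tilde O(kt(d+1)/\eps^{2})$ samples and let $\hat A$ consist of the points appearing at least $\alpha N_1$ times, with threshold $\alpha = \Theta(\eps^{2}/(kt(d+1)))$. A multiplicative Chernoff bound combined with a union bound ensures that, with high probability, every atom of $p$-mass at least $2\alpha$ lies in $\hat A$ and every point of $\hat A$ has mass at least $\alpha/2$. This choice of $\alpha$ guarantees that the conditioned, renormalized distribution $\tilde p_R := p|_{[-1,1)\setminus\hat A}/w$, where $w := p([-1,1)\setminus\hat A)$, is $\frac{\eps}{64kt(d+1)}$-well-behaved whenever $w\geq \Theta(\eps)$; samples from $\tilde p_R$ are obtained by rejection sampling from $p$.

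In the main branch ($w \geq \Theta(\eps)$), invoke Theorem~\ref{thm:learn-wb-mix} on $\tilde p_R$ (with target class $\Pcal_{kt,d}$, justified by Lemma~\ref{lem:mix}) to produce a $(2kt-1)$-piecewise degree-$d$ hypothesis $\tilde h$. Rather than outputting $\tilde h$ directly, I output the padded hypothesis $h := w\tilde h + (1-w)U$, where $U$ is the uniform density on $[-1,1)$; this remains $(2kt-1)$-piecewise degree-$d$ since the uniform term only shifts the polynomial on each existing piece, and it has total mass $1$. The degenerate branch $w < \Theta(\eps)$ is trivial: here $p$ is almost entirely atomic, forcing $\opt_{kt,d}(p) = \Omega(1)$, so any fixed piecewise polynomial hypothesis already satisfies the claimed bound. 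In either branch the overall running time is polynomial in $k,t,d,1/\eps$ by Theorem~\ref{thm:learn-wb-mix}.

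The main obstacle is the error analysis. Writing $p = p_R + p_{\hat A}$ with $p_R := p|_{[-1,1)\setminus\hat A}$ and $p_{\hat A}$ the atomic part on $\hat A$, one bounds
\[ \|p - h\|_1 \;\leq\; \|p_R - w\tilde h\|_1 + \|p_{\hat A}\|_1 + (1-w)\|U\|_1. \]
Using $\|p^{c} - q^{*}\|_1 = 2\opt_{kt,d}(p) - m$ (where $p^{c}$ is the continuous part of $p$) together with a triangle inequality yields $\|p_R - wq^{*}\|_1 \leq 2\opt_{kt,d}(p)$, which implies $\opt_{kt,d}(\tilde p_R) \leq \opt_{kt,d}(p)/w$; Theorem~\ref{thm:learn-wb-mix} then gives $\|p_R - w\tilde h\|_1 \leq 6\opt_{kt,d}(p)(1+\eps) + O(\eps)$. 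The remaining two terms sum to $2(1-w) \leq 2\opt_{kt,d}(p)$. Halving yields $\dtv(p,h) \leq 4\opt_{kt,d}(p)(1+\eps) + O(\eps)$, as required. The delicate point is that the renormalization factor $1/w$ appearing in $\opt_{kt,d}(\tilde p_R)$ exactly cancels against the factor of $w$ introduced by the padding $w\tilde h$, avoiding an undesirable $1/w$ blow-up in the final bound; getting this cancellation cleanly is what pins down the constant $4$ rather than something larger.
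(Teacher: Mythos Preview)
Your proposal is correct and follows essentially the same strategy as the paper: isolate heavy atoms with a \texttt{Find-Heavy}-style step, condition on their complement, and invoke the well-behaved learner (Theorem~\ref{thm:piece-poly}/\ref{thm:learn-wb-mix}) on the resulting distribution. The paper packages this as Theorem~\ref{thm:no-wb} and then combines with Lemma~\ref{lem:mix}.

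Two differences in execution are worth noting. First, the paper simply assumes $\opt_{kt,d}<1/4$ up front (otherwise the conclusion is vacuous), which forces $w>3/4$; this lets it use the larger threshold $\gamma=O(\eps/(kt(d+1)))$ rather than your $\alpha=\Theta(\eps^2/(kt(d+1)))$ and eliminates the branch on $w$. Second, the paper proves the stronger bound $\opt_{kt,d}(p_{[-1,1)\setminus S})\le \opt_{kt,d}(p)$ directly (via the decomposition $\dtv(p,q)=A+B$ with $B=\tfrac12 p(S)$ and the observation that $A'\le A+B$), and then finishes with the triangle inequality $\dtv(h,p)\le \dtv(h,p_{[-1,1)\setminus S})+\dtv(p_{[-1,1)\setminus S},p)\le 3\opt(1+\eps)+O(\eps)+\opt$. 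This avoids your padding $h=w\tilde h+(1-w)U$ entirely: the well-behaved learner's output is already the final hypothesis. Your route with the $1/w$ bound and the cancellation against the padding is correct but introduces avoidable machinery.

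One small gap: you define $h=w\tilde h+(1-w)U$ using the \emph{true} value $w=p([-1,1)\setminus\hat A)$, which the algorithm does not know. You need to replace $w$ by an empirical estimate $\hat w$ from your $N_1$ samples; since $N_1=\tilde O(kt(d+1)/\eps^2)\ge \tilde O(1/\eps^2)$ this costs only an additive $O(\eps)$ in the final bound.
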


To prove Theorem~\ref{thm:main-detail} we will need the following 
simple procedure, which (approximately) outputs
all the points in $[-1,1)$ that are $\gamma$-heavy under a distribution
$p$:

\begin{framed}
\noindent {\bf Algorithm {\tt Find-Heavy}:}

\medskip

\noindent {\bf Input:}  parameter $\gamma>0$, sample access
to distribution $p$ over $[-1,1)$

\noindent {\bf Output:} With probability at least $99/100$, a set $S
\subset [-1,1)$ such that for all $x \in [-1,1)$,

\begin{enumerate}

\item if $\Pr_{x \sim p}[x] \geq 2 \gamma$ then $x \in S$;

\item if $\Pr_{x \sim p}[x] < \gamma/2$ then $x \notin S$.

\end{enumerate}

\noindent
Draw $m = \tilde{O}(1/\gamma)$ samples from $p$.
For each $x \in [-1,1)$ let $\widehat{p}(x)$ equal $1/m$
times the number of occurrences of $x$ in these $m$ draws.
Return the set $S$ which contains all $x$ such that 
$\widehat{p}(x) \geq \gamma.$

\end{framed}

It is clear that the set $S$ returned by {\tt Find-Heavy}$(\gamma)$
has $|S| \leq 1 /\gamma$.  We now prove that
{\tt Find-Heavy} performs as
claimed:

\begin{lemma} \label{lem:FH}
With probability at least $99/100$, {\tt Find-Heavy}$(\gamma)$
returns a set $S$ satisfying conditions (1) and (2) in the ``Output''
description.
\end{lemma}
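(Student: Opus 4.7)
The plan is to verify conditions (1) and (2) separately, using a multiplicative Chernoff bound together with a suitable union bound over candidate points. For each $x \in [-1,1)$, let $N_x$ be the number of the $m = \tilde\Theta(1/\gamma)$ samples equal to $x$, so that $N_x \sim \mathrm{Bin}(m, p_x)$ where $p_x := \Pr_{x\sim p}[x]$. By construction $x \in S$ iff $N_x \geq \gamma m$.

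For condition (1): the set $H := \{x : p_x \geq 2\gamma\}$ has cardinality at most $1/(2\gamma)$. For each $x\in H$, $\E[N_x] \geq 2\gamma m$, so the multiplicative Chernoff bound gives
\[
\Pr[N_x < \gamma m] = \Pr[N_x < \tfrac{1}{2}\E[N_x]] \leq \exp(-c\,\gamma m)
\]
for an absolute constant $c>0$. Taking $m = C\log(1/\gamma)/\gamma$ with $C$ large enough makes this bound at most $\gamma/400$, and a union bound over the $\leq 1/(2\gamma)$ points of $H$ shows that every heavy point lies in $S$ with probability $\geq 199/200$.

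For condition (2), for any individual $x$ with $p_x < \gamma/2$ the usual binomial tail bound gives
\[
\Pr[N_x \geq \gamma m] \leq \binom{m}{\gamma m} p_x^{\gamma m} \leq \left(\frac{e p_x}{\gamma}\right)^{\gamma m} \leq (e/2)^{\gamma m}.
\]
The main obstacle here is that $p$ may have a countable infinity of atoms of mass in $(0,\gamma/2)$, so one cannot naively union bound. My plan is to resolve this by a dyadic partition of the light atoms by mass: let $B_j := \{x : p_x \in [2^{-j-1}, 2^{-j})\}$, and note that since the total mass of $p$ is at most $1$, $|B_j| \leq 2^{j+1}$. Combining $|B_j| \leq 2^{j+1}$ with the per-atom bound $(e\cdot 2^{-j}/\gamma)^{\gamma m}$ and writing $\beta_j := 2^{-j}/\gamma \leq 1/2$, the expected number of light atoms landing in $S$ contributed by bucket $j$ is at most
\[
\frac{2}{\gamma}\cdot e^{\gamma m}\cdot \beta_j^{\gamma m - 1},
\]
which decays geometrically in $j$. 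Summing over $j \geq \lceil \log_2(2/\gamma)\rceil$ is therefore dominated by the first term and yields an overall bound of order $(1/\gamma)(e/2)^{\gamma m}$; this is at most $1/200$ once $m = \tilde\Theta(1/\gamma)$ with an appropriate constant, and Markov's inequality then gives that with probability $\geq 199/200$ no light point enters $S$.

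A final union bound combining the two events yields overall success probability at least $99/100$, as required. The bound $|S|\leq 1/\gamma$ noted before the lemma is immediate from the definition of $S$. The only technically delicate step is condition (2), and the dyadic bucketing is exactly what converts the potentially infinite union over light atoms into a rapidly convergent series with the claimed $\tilde O(1/\gamma)$ sample complexity.
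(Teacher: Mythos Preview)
Your overall strategy matches the paper's: a Chernoff/union bound over the at most $1/(2\gamma)$ heavy points for condition (1), and a dyadic bucketing of the light atoms by mass for condition (2). The paper's proof is structured the same way.

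However, there is a genuine gap in your argument for condition (2). The per-point bound you use,
\[
\Pr[N_x \geq \gamma m] \;\leq\; \binom{m}{\gamma m} p_x^{\gamma m} \;\leq\; \Bigl(\tfrac{e p_x}{\gamma}\Bigr)^{\gamma m},
\]
is vacuous whenever $e p_x/\gamma \geq 1$, i.e.\ for $p_x \in [\gamma/e,\gamma/2)$. Your leading bucket has $\beta_{j_0} \in (1/4,1/2]$, so the ``dominant first term'' you write down is of order $\tfrac{1}{\gamma}(e/2)^{\gamma m}$. Since $e/2 > 1$, this expression \emph{grows} with $m$ rather than shrinking, so it can never be made $\leq 1/200$ by taking $m = \tilde\Theta(1/\gamma)$.

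The paper handles exactly this issue by splitting off the top few dyadic levels: for $p_x \in [\gamma/2^{c},\gamma/2]$ with a fixed constant $c$ chosen so that $2^c > e$, there are only $O(1/\gamma)$ such points, and for each the standard multiplicative Chernoff bound $\Pr[N_x \geq 2\,mp_x] \leq \exp(-\Omega(mp_x)) \leq \exp(-\Omega(\gamma m))$ is nontrivial and suffices for a union bound. The $(e/2^r)^{\Theta(\gamma m)}$ bound is then applied only for $r \geq c$, where $e/2^r < 1$ and your geometric-series argument goes through. With this one adjustment, your proof is correct and essentially identical to the paper's.
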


We give the straightforward proof in Appendix~\ref{ap:z}.

To prove Theorem~\ref{thm:main-detail}
it suffices to prove the following result
(which is an extension of Theorem~\ref{thm:piece-poly} 
that does not require the well-behavedness condition on $p$):

\begin{theorem}
\label{thm:no-wb}
Let $p$ be a pdf over $[-1,1)$.
There is an algorithm {\tt Learn-Piecewise-Poly}$(t,d,\eps)$
which runs in poly$(t,d+1,1/\eps)$
time, uses $\tilde{O}(t(d+1)/\eps^2)$ samples from $p$,
and with probability at least $9/10$ outputs a 
$(2t-1)$-piecewise degree-$d$ distribution $q$ such that 
$\dtv(p, q) \leq {4} \opt_{t,d}{(1+\eps)} + O(\eps)$.
where $\opt_{t,d}$ is the smallest variation distance between $p$ and
any $t$-piecewise degree-$d$ distribution.
\end{theorem}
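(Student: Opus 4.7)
The strategy is a reduction to the well-behaved setting of \thmref{thm:piece-poly} by isolating the ``heavy'' points of $p$ (the ones that prevent a straightforward invocation of \lemref{lem:part-approx-unif}). I would begin by calling \texttt{Find-Heavy}$(\gamma)$ with $\gamma = \Theta(\eps/(t(d+1)))$: by \lemref{lem:FH}, with probability at least $99/100$ this returns a set $S$ of size at most $1/\gamma = O(t(d+1)/\eps)$ containing every point of $p$-mass $\geq 2\gamma$ and no point of mass $< \gamma/2$. Let $\lambda := p(S)$, let $\tilde p$ denote the conditional distribution of $p$ given $x \notin S$, and let $p_S$ denote the restriction of $p$ to $S$, so that $p = (1-\lambda)\tilde p + \lambda \cdot (p_S/\lambda)$. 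Draw an auxiliary batch of $\tilde O(1/\eps^2)$ samples to obtain an estimate $\hat\lambda$ with $|\hat\lambda - \lambda| \leq \eps$.

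A case split controls the trivial regime: since any $t$-piecewise degree-$d$ pdf $r$ is absolutely continuous and therefore satisfies $r(S)=0$, we have $\dtv(p,r) \geq p(S) = \lambda$, hence $\opt_{t,d}(p) \geq \lambda$. If $\hat\lambda$ is large (so that $\opt_{t,d}(p)$ is $\Omega(1)$), the algorithm outputs a trivial hypothesis such as the uniform distribution, for which the desired bound holds automatically. Otherwise $1-\lambda = \Theta(1)$, and I would simulate samples from $\tilde p$ by drawing from $p$ and rejecting those that fall in $S$; this blows up the sample complexity by at most a factor of $2$, keeping it at $\tilde O(t(d+1)/\eps^2)$. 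Because every point outside $S$ has $p$-mass below $2\gamma$, it has $\tilde p$-mass below $2\gamma/(1-\lambda) \leq \eps/(64 t(d+1))$ for a suitably chosen constant in the definition of $\gamma$; hence $\tilde p$ meets the well-behavedness hypothesis of \thmref{thm:piece-poly}. Applying \texttt{Learn-WB-Piecewise-Poly}$(t,d,\eps)$ to the simulated samples from $\tilde p$ produces a $(2t-1)$-piecewise degree-$d$ distribution $\tilde h$ with
\[ \dtv(\tilde p,\tilde h) \leq 3\opt_{t,d}(\tilde p)(1+\eps) + O(\eps), \]
and I would output $q := \tilde h$ as the final hypothesis.

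The analysis rests on two quantitative comparisons. First, the decomposition $p = (1-\lambda)\tilde p + \lambda\cdot(p_S/\lambda)$ immediately gives $\dtv(p,\tilde p) \leq \lambda$. Second, taking any $t$-piecewise degree-$d$ pdf $r^*$ with $\dtv(p,r^*) \leq \opt_{t,d}(p) + \eps/100$, the fact that $r^*(S)=0$ lets us restrict attention to sets $A\subseteq [-1,1)\setminus S$, where a direct computation yields
\[ \tilde p(A) - r^*(A) = \frac{[p(A) - r^*(A)] + \lambda r^*(A)}{1-\lambda}, \]
whence $\opt_{t,d}(\tilde p) \leq (\opt_{t,d}(p) + \lambda)/(1-\lambda)$. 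Combining these bounds with the triangle inequality and the key inequality $\lambda \leq \opt_{t,d}(p)$ gives
\[ \dtv(p,q) \leq \dtv(p,\tilde p) + \dtv(\tilde p,\tilde h) \leq \lambda + 3\opt_{t,d}(\tilde p)(1+\eps) + O(\eps), \]
which, with careful bookkeeping in the regime $1-\lambda = \Omega(1)$, can be brought to the form $4\opt_{t,d}(p)(1+\eps) + O(\eps)$.

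The principal obstacle is not conceptual but is genuinely delicate: the three sources of multiplicative loss---the factor $3$ from applying \thmref{thm:piece-poly} to $\tilde p$, the renormalization inflation $1/(1-\lambda)$ appearing in the bound on $\opt_{t,d}(\tilde p)$, and the additive term $\dtv(p,\tilde p) \leq \lambda$---must all be measured in units of $\opt_{t,d}(p)$. The essential ingredient that makes this possible is the \emph{a priori} bound $\lambda \leq \opt_{t,d}(p)$, which ensures that the heavy-point contribution and the renormalization inflation are each only a small constant times $\opt_{t,d}(p)$ whenever $\opt_{t,d}(p)$ is itself small. Separately handling the ``trivial'' regime where $\lambda$ (and hence $\opt_{t,d}(p)$) is large is what prevents these blowups from spiraling when $\lambda \to 1$.
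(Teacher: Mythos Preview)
Your reduction is exactly the paper's: run \texttt{Find-Heavy} with $\gamma=\Theta(\eps/(t(d+1)))$, pass to the conditional distribution $\tilde p = p_{[-1,1)\setminus S}$, observe that $\tilde p$ is well-behaved, apply \texttt{Learn-WB-Piecewise-Poly}, and pull the error back via $\dtv(p,\tilde p)\le\lambda\le\opt_{t,d}(p)$. (The extra estimation of $\hat\lambda$ and the explicit case split are harmless; the paper absorbs this by noting that the conclusion is vacuous when $\opt_{t,d}\ge 1/4$.)

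The one substantive gap is your bound on $\opt_{t,d}(\tilde p)$. From the identity $\tilde p(A)-r^*(A)=\bigl[(p(A)-r^*(A))+\lambda r^*(A)\bigr]/(1-\lambda)$ you extract $\opt_{t,d}(\tilde p)\le(\opt_{t,d}(p)+\lambda)/(1-\lambda)$. Plugging this into $\lambda+3\opt_{t,d}(\tilde p)(1+\eps)+O(\eps)$ together with $\lambda\le\opt_{t,d}(p)$ yields at best
\[
\opt_{t,d}(p)+\frac{6\,\opt_{t,d}(p)(1+\eps)}{1-\opt_{t,d}(p)}+O(\eps),
\]
which is about $7\,\opt_{t,d}(p)$ already when $\opt_{t,d}(p)$ is a small constant; no amount of bookkeeping brings this down to $4\,\opt_{t,d}(p)(1+\eps)+O(\eps)$.

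What you are missing is the sharper inequality the paper actually uses: for the \emph{same} optimal $r$ one has $\dtv(\tilde p,r)\le\dtv(p,r)$. The $L_1$ argument is clean: since $r(S)=0$,
\[
\|\tilde p-r\|_1=\int_{S^c}\Bigl|\tfrac{p}{1-\lambda}-r\Bigr|
\le\int_{S^c}|p-r|+\tfrac{\lambda}{1-\lambda}\int_{S^c}p
=\int_{S^c}|p-r|+\lambda,
\]
while $\int_{S}|p-r|=\int_S p=\lambda$; adding gives $\|\tilde p-r\|_1\le\|p-r\|_1$. Hence $\opt_{t,d}(\tilde p)\le\opt_{t,d}(p)$, so \thmref{thm:piece-poly} gives $\dtv(\tilde p,\tilde h)\le 3\,\opt_{t,d}(p)(1+\eps)+O(\eps)$, and now the triangle inequality delivers exactly $4\,\opt_{t,d}(p)(1+\eps)+O(\eps)$. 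The point is that the extra $\lambda$ you pay when renormalizing $p$ on $S^c$ is exactly offset by the $\lambda$ that was already contributing to $\dtv(p,r)$ from $S$; your set-function bound loses this cancellation.
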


Using the arguments of Section~\ref{sec:mix},
Theorem~\ref{thm:main-detail} follows from Theorem~\ref{thm:no-wb}
exactly as Theorem~\ref{thm:learn-wb-mix} follows from
Theorem~\ref{thm:piece-poly}.

\medskip

\noindent {\bf Proof of Theorem~\ref{thm:no-wb}.}
The algorithm 
{\tt Learn-Piecewise-Poly$(t,d,1/\eps)$}
works as follows:  it first runs
{\tt Find-Heavy}$(\gamma)$ where $\gamma = 
O({\frac \eps {t(d+1)}})$ to obtain a set $S \subset [-1,1).$
It then runs\\ {\tt Learn-WB-Piecewise-Poly-$(t,d,1/\eps)$}
but using the distribution $p_{[-1,1)\setminus S}$
(i.e. $p$ conditioned on $[-1,1) \setminus S$)
in place of $p$ throughout the algorithm.  
Each time a draw from $p_{[-1,1)\setminus S}$ is required,
it simply draws repeatedly from $p$ until a point outside of $S$
is obtained.

Let $p$ be any distribution over $[-1,1).$  
Since the conclusion of the theorem is trivial if $\opt_{t,d} \geq
1/4$, we may assume that $\opt_{t,d}< 1/4.$

Consider an execution of {\tt Learn-Piecewise-Poly$(t,d,1/\eps)$}.  
We assume that
conditions (1) and (2) of {\tt Find-Heavy} indeed hold for the set $S$
that it constructs.  Let $S' \supseteq S$ be defined as
$S' = \{x \in [-1,1): \Pr_{x \sim p}[x] \geq \gamma/2\}.$
Since every $t$-piecewise degree-$d$ distribution $q$ has
$\dtv(p,q) \geq \Pr_{x \sim p}[x \in S']$ (because $p$ assigns probability
$\Pr_{x \sim p}[x \in S']$ to $S'$ whereas $q$ assigns probability 0
to this finite set of points), it must be the case that
$\Pr_{x \sim p}[x \in S] \leq \Pr_{x \sim p}[x \in S'] \leq \opt_{t,d}.$
Hence a draw from $p_{[-1,1) \setminus S}$ is indeed a valid
draw from $p_{[-1,1) \setminus S}$ except with failure probability
at most $\opt_{t,d}< 1/4.$ It follows easily from this and the sample complexity
bound of Theorem~\ref{thm:piece-poly}  that the sample complexity
of algorithm {\tt Learn-Piecewise-Poly$(t,d,1/\eps)$} is as claimed.

Verifying correctness is also straightforward.  
{Recall that $\opt_{t,d}$ denotes the infimum of $\dtv(p,q)$ 
where $q$ is
any $t$-piecewise degree-$d$ distribution.  Fix a $q$ which achieves
$\dtv(p,q)=\opt_{t,d}$; we claim that this $q$ also satisfies
$\dtv(p_{[-1,1)\setminus S},q) \leq \opt_{t,d}.$ (To see this, note that we may
write $\dtv(p,q)$ as $A + B$ where $A$ is the contribution from points in
$[-1,1)\setminus S$ and $B$ is the contribution from $S$.  Since $\Pr_{x  
\sim q}[x \in B]$ is zero it must be the case that $B = {\frac 1 2} \Pr_{x
\sim p}[S]$, where the ``${\frac 1 2}$'' is the factor relating $L_1$
norm and total variation distance. 
Now write $\dtv(p_{[-1,1)\setminus S},q)$ as $A' + B'$ where
$A'$ is the contribution from points in $[-1,1)\setminus S$ and $B$ is the
contribution from $S$.  Clearly $B'$ is now 0, and $A'$ can be at most   
$B={\frac 1 2} \Pr_{x \sim p}[S]$ larger than $A$.)
By Lemma~\ref{lem:FH} we have that $p_{[-1,1)\setminus S}$ is $O({\frac \eps
{t(d+1)}})$-well-behaved. Hence by Theorem~\ref{thm:piece-poly}, when {\tt
Learn-WB-Piecewise-Poly}$(t,d,1/\eps)$ is run on $p_{[-1,1) \setminus S}$
it succeeds with high probability to give a hypothesis $h$ such that
$\dtv(h,p_{[-1,1) \setminus S}) \leq 3\opt_{t,d}{(1+\eps)}+
O(\eps)$. 
Since $\dtv(p,p_{[-1,1) \setminus S}) \leq \opt_{t,d}$
using the triangle inequality we get that $\dtv(h,p) \leq
4 \opt_{t,d}{(1+\eps)}+ O(\eps)$, and Theorem~\ref{thm:no-wb} is
proved. \qed
}

\ignore{OLD VERSION:

By Lemma~\ref{lem:FH} we have
that $p_{[-1,1]\setminus S}$ is 
$O({\frac \eps {t(d+1)}})$-well-behaved.
Since $\dtv(p,p_{[-1,1] \setminus S}) \leq \opt_{t,d}$
and $p$ is $(\tau,t)$-piecewise degree-$d$, by the triangle
inequality the variation distance
between $p_{[-1,1] \setminus S}$ and the closest $t$-piecewise
degree-$d$ distribution must be at most $2\opt_{t,d}.$
Hence by Theorem~\ref{thm:piece-poly}, when
{\tt Learn-WB-Piecewise-Poly}$(t,d,1/\eps)$ is run on
$p_{[-1,1] \setminus S}$ it succeeds with high probability
to give a hypothesis $h$ such that
$\dtv(h,p_{[-1,1] \setminus S}) \leq 6 \opt_{t,d}{(1+\eps)}+ O(\eps)$.
Using the triangle inequality again we get that $\dtv(h,p) \leq 7 
\tau {(1+\eps)}+ O(\eps)$, and .
Theorem~\ref{thm:no-wb} is proved.
\qed

END OLD VERSION
}

\bigskip

\section{Applications} \label{sec:applic}

In this section we use Theorem~\ref{thm:main-detail} to obtain a wide range
of concrete learning results for natural and well-studied classes of 
distributions over both continuous and discrete domains.
Throughout this section we do not aim to exhaustively cover all
possible applications of Theorem~\ref{thm:main-detail}, 
but rather to give some selected applications that are indicative 
of the generality and power of our methods.

We first (Section~\ref{sec:continuous}) give a range of applications
of Theorem~\ref{thm:main-detail} to {semi-agnostically} learn various natural classes of
continuous distributions.  These include {non-parametric 
classes such as concave, log-concave, 
and $k$-monotone densities, mixtures of these densities}, 
and {parametric classes such as} mixtures of univariate Gaussians.

Next, turning to discrete distributions
we first show (Section~\ref{sec:discrete})
how the $d=0$ case of Theorem~\ref{thm:main-detail} can be easily
adapted to learn \emph{discrete} distributions that are well-approximated
by piecewise flat distributions.  Using this general result, 
we improve prior results on learning mixtures of discrete
$t$-modal distributions,
mixtures of discrete monotone hazard rate (MHR) distributions,
and mixtures of discrete log-concave distributions (including
mixtures of Poisson Binomial Distributions), in most cases giving
essentially optimal results in terms of sample complexity.
{While we have not pursued this direction in the current
paper, which focuses chiefly on continuous distributions,
we suspect that with additional work Theorem~\ref{thm:main-detail}
can be adapted to discrete domains in its full generality (of polynomials
of degree $d$ for arbitrary $d$).  We conjecture that such 
an adaptation may give essentially optimal sample complexity bounds 
for all of the classes of discrete
distributions that we discuss in this paper.}

\subsection{Applications to Distributions over Continuous Domains.}
\label{sec:continuous}
{
In this section we apply our general approach to obtain efficient 
learning algorithms for mixtures of many different types of continuous 
probability distributions.  We focus chiefly on distributions that are 
defined by various kinds of ``shape restrictions'' on the pdf. Nonparametric density 
estimation for shape restricted classes has been a subject of study in 
statistics since the 1950s (see \cite{BBBB:72} for an early book on the topic), 
and has applications to a range of areas including 
reliability theory {(see~\cite{Reb05aos} and references therein)}. 
The shape restrictions that have been studied in this area include 
monotonicity and concavity \ignore{and convexity }of
pdfs~\cite{Grenander:56, Brunk:58, PrakasaRao:69, Wegman:70, HansonP:76, 
Groeneboom:85, Birge:87, Birge:87b}. 
More recently, motivated by statistical applications 
(see e.g. Walther's recent survey~\cite{Walther09}), 
researchers in this area have considered other types of shape 
restrictions including log-concavity and $k$-monotonicity 
\cite{BW07aos, DumbgenRufibach:09, BRW:09aos, GW09sc, BW10sn, KoenkerM:10aos}.
}

As we will see, our general method provides a single unified approach that gives a 
highly-efficient algorithm (both in terms of sample complexity and 
computational complexity) for all 
the aforementioned shape restricted densities (and mixtures thereof). 
{In most cases the sample complexities
of our efficient algorithms are optimal up to log factors.
}

\subsubsection{Concave and Log-concave Densities.} \label{ssec:logconcave}

Let $I \subseteq \R $ be a (not necessarily finite) interval.
Recall that a function $g: I \to \R$ is called {\em concave} if 
for any $x, y \in I$ and $ \lambda \in [0,1]$ it holds 
$g\left( \lambda x + (1-\lambda)y \right) \ge \lambda g(x)+(1-\lambda) g(y).$
A function $h: I \to \R_+$ is called {\em log-concave} 
if $h(x) = \exp\left( g(x) \right)$, where $g:I \to \R$ is concave.

In this section we show that our general technique yields nearly-optimal 
efficient algorithms to learn (mixtures of) concave and 
(more generally) log-concave densities. (Because of the concavity of 
the $\log$ function it is easy to see that every 
positive and concave function is log-concave.) 
In particular, we show the following:

\begin{theorem} \label{thm:lc}
Let $f: I \to \R_{+}$ be any $k$-mixture of  log-concave densities, 
where $I = [a, b]$ is an arbitrary (not necessarily finite) interval.
There is an algorithm that runs in $poly(k/\eps)$ time, draws $\tilde{O}(k / \eps^{5/2})$ samples from $f$, and with probability at least
$9/10$ outputs a hypothesis distribution $h$ such that $\dtv(f, h) \le \eps$.
\end{theorem}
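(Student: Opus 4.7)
The plan is to reduce Theorem~\ref{thm:lc} to our main algorithmic result, Theorem~\ref{thm:main-detail}, by establishing a structural approximation lemma for log-concave densities. Specifically, I would prove that any log-concave density $g : I \to \R_+$ is $\eps$-close in total variation distance to a piecewise linear density on $O(1/\sqrt{\eps})$ intervals; equivalently, $g$ is $(\eps, O(1/\sqrt{\eps}))$-piecewise degree-$1$. Given this, each of the $k$ log-concave components $f_i$ in the mixture $f = \sum_i \mu_i f_i$ is $(\eps/k, O(\sqrt{k/\eps}))$-piecewise degree-$1$ (or more cleanly, $(\eps, O(1/\sqrt{\eps}))$-piecewise degree-$1$ with appropriate $\eps$), so $f$ is itself a $k$-mixture of such distributions. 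Applying Theorem~\ref{thm:main-detail} with $d=1$ and $t = O(1/\sqrt{\eps})$ then yields a sample complexity of $\tilde{O}((d+1) \cdot k \cdot t/\eps^2) = \tilde{O}(k/\eps^{5/2})$, which is the claimed bound.

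Before invoking Theorem~\ref{thm:main-detail}, I need to handle the fact that the domain $I$ may be infinite. The standard reduction is to use $\tilde{O}(k/\eps)$ samples to identify a bounded subinterval $[L,R] \subseteq I$ that captures all but an $O(\eps)$ fraction of the mass of $f$. Because log-concave distributions have subexponential tails, we can locate $L$ and $R$ as appropriate empirical quantiles; outside $[L,R]$ our hypothesis can simply place no mass, incurring at most $O(\eps)$ additional variation distance. After rescaling $[L,R]$ to $[-1,1)$, Theorem~\ref{thm:main-detail} applies directly.

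The main obstacle, and the only real content beyond plugging into the generic machinery, is the structural lemma (Lemma~\ref{lem:lc-struct}) asserting that a single log-concave density is $(\eps, O(1/\sqrt{\eps}))$-piecewise linear. The natural route is: partition the effective support of $g$ according to the behavior of $(\log g)'$, exploiting concavity of $\log g$ so that the derivative is monotone nonincreasing. On each piece choose an interval short enough that $\log g$ varies by at most $O(\sqrt{\eps})$, so $g$ itself varies multiplicatively by $1 + O(\sqrt{\eps})$ and is therefore well approximated by its linearization (the second-order Taylor remainder is $O(\eps)$ relative to the local mass). A budget argument using the monotonicity of $(\log g)'$ and the fact that $\int g = 1$ shows that $O(1/\sqrt{\eps})$ such intervals suffice in total. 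The delicate part will be the intervals near the boundary of the support (or near ``tail'' regions where $g$ is very small), where one must control the approximation error on pieces that carry little mass; a separate treatment of pieces of mass below $\eps/t$ (which collectively contribute only $O(\eps)$ variation distance regardless of approximation) handles this issue.

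Finally, assembling the pieces: combine the truncation step, the structural lemma, and Theorem~\ref{thm:main-detail} (with $d=1$, $t = O(1/\sqrt{\eps})$, and error parameter $\eps$ replaced by $\Theta(\eps)$), and invoke the semi-agnostic guarantee to absorb the $O(\eps)$ approximation error from the structural lemma and the $O(\eps)$ truncation error into a final hypothesis within total variation distance $\eps$ of $f$ with probability at least $9/10$. The running time and sample complexity both come directly from the corresponding bounds of Theorem~\ref{thm:main-detail}.
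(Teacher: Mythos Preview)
Your high-level plan is exactly the paper's: deduce Theorem~\ref{thm:lc} from Theorem~\ref{thm:main2}/\ref{thm:main-detail} via the structural Lemma~\ref{lem:lc-struct}, which says every log-concave density is $(\eps,\tilde O(1/\sqrt{\eps}))$-piecewise degree-$1$. The infinite-domain truncation and the final assembly are fine and mirror what the paper does (the paper absorbs truncation into the proof of the lemma itself via standard log-concave tail bounds, but your separate handling is equivalent).

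The gap is in your sketch of Lemma~\ref{lem:lc-struct}. You write that if $\log g$ varies by $O(\sqrt{\eps})$ on a piece then ``$g$ itself varies multiplicatively by $1+O(\sqrt{\eps})$ and is therefore well approximated by its linearization (the second-order Taylor remainder is $O(\eps)$).'' This conflates the Taylor expansion of $y\mapsto e^y$ with a linear-in-$x$ approximation of $g$. Expanding $e^{\log g}$ to first order in $\log g$ gives remainder $O((\Delta\log g)^2)=O(\eps)$, but the resulting approximant $g(c)\big(1+(\log g(x)-\log g(c))\big)$ is not linear in $x$ unless $\log g$ is. Concretely, if $\log g$ is a tent of height $\sqrt{\eps}$ on an interval (concave, oscillation $O(\sqrt{\eps})$), the chord of $g$ has relative $L_1$ error $\Theta(\sqrt{\eps})$, not $O(\eps)$. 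So bounding the oscillation of $\log g$ alone is insufficient; you also need to control how far $\log g$ is from affine on each piece. The paper achieves this with a two-stage construction: first partition (the monotone half of) the support into $\tilde O(1/\eps)$ intervals on which $g$ drops by a factor $1-\eps$ (these have nonincreasing lengths by Fact~\ref{fact:lc}); then merge consecutive intervals into super-intervals subject to \emph{both} a multiplicative-drop bound $(1-\eps)^{1/\sqrt{\eps}}$ \emph{and} a length-ratio bound $|I_j|\ge(1-\sqrt{\eps})|I_{i+1}|$. The length-ratio condition is precisely what forces $\log g$ to be nearly affine on each super-interval, and a counting argument shows $\tilde O(1/\sqrt{\eps})$ super-intervals suffice (Claims~\ref{claim:tail-pl} and~\ref{claim:pl}). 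Your proposal does not yet contain an analogue of this second condition, and without it the argument does not go through.
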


We note that the above sample complexity is information-theoretically 
optimal (up to logarithmic factors). In particular, it is known 
(see e.g. Chapter 15 of~\cite{DL:01})
that learning a single concave density {(recall that a concave density is necessarily log-concave)} over $[0,1]$ 
requires $\Omega(\eps^{-5/2})$ samples. This lower bound can be easily 
generalized to show that learning a $k$-mixture of log-concave distributions 
over $[0,1]$ requires $\Omega(k/\eps^{5/2})$ samples. 
As far as we know, ours is the first {computationally} efficient algorithm with 
{essentially} optimal sample complexity for this problem.

\medskip

To prove our result we proceed as follows: We show that any log-concave 
density $f: I \to \R_{+}$ has an ${(\eps,t)}$-piecewise 
linear {(degree-1)} decomposition
for $t = \tilde{O}(1/\sqrt{\eps})$. A continuous version of the argument 
in Theorem~4.1 of~\cite{CDSS13soda}
can be used to show the existence 
of an ${(\eps, t)}$-piecewise {\em constant} {(degree-0)}
decomposition with $t = \tilde{O}(1/\eps)$. Unfortunately, the 
latter bound is essentially tight, hence cannot
lead to an algorithm with sample complexity better than $\Omega(\eps^{-3}).$ 

{
Classical approximation results (see e.g.~\cite{Dudley:74, Novak:88}) provide
optimal piecewise linear decompositions of concave functions.
While these results have a dependence on the domain size of the function,
they can rather easily be adapted to establish the existence of 
$(\eps, t)$-piecewise linear decompositions for concave densities with {$t = O(1/\sqrt{\eps})$}.
However, we are not aware of prior work establishing the existence
of piecewise linear decompositions for \emph{log-concave} densities.
We give such a result by proving the following structural lemma:
}

\ignore{
the existence of the desired piecewise linear decompositions for concave 
densities.
(It should be noted however that these old results are for concave functions, note densities and there is a dependence on the domain size of the function.
One needs to use the fact that we have a density to get if for our case.) We are not aware of any result like that for log-concave. We prove it. In particular, 
we show the following structural lemma:
}

\begin{lemma} \label{lem:lc-struct}
Let $f: I \to \R_{+}$ be any log-concave density, where $I = [a, b]$ 
is an arbitrary (not necessarily finite) interval.
There exists an ${(\eps, t)}$-piecewise linear decomposition {of $f$}
for $t = \tilde{O}(1/\sqrt{\eps})$.
\end{lemma}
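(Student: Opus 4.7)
The plan is to establish the lemma in three conceptual steps: (1) truncate $f$ to a bounded essential support where both the mass outside is negligible and the log-range of $f$ is controlled; (2) split the essential support at the mode and reduce to two monotone log-concave sub-problems; and (3) on each monotone piece, construct an adaptive partition into $\tilde O(1/\sqrt\eps)$ intervals and approximate $f$ by the linear interpolant of its endpoint values on each subinterval.

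For step (1), let $M := \sup_I f$, attained at a mode $m$. Define $[A,B] := \{x \in I : f(x) \geq c\eps M\}$ for a small absolute constant $c$; log-concavity of $f$ ensures this is an interval. From $\int_I f = 1$ one gets $B - A = O(1/(\eps M))$, and a standard tail-decay argument (the tangent line to the concave function $g := \log f$ at $B$ forces exponential decay of $f$ on $[B,\infty)$, which combined with $\int f = 1$ limits the tail mass) yields $\int_{I\setminus[A,B]} f \leq \eps/4$. On $[A,B]$ the log-range of $f$ is $R = O(\log(1/\eps))$. Step (2) is immediate: $f$ is monotone on each side of $m$ by log-concavity, so it suffices (with an $\eps/4$ error budget per side) to approximate $f$ restricted to $[m,B]$ where $g$ is concave and non-increasing, and to do the symmetric construction on $[A,m]$.

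For step (3), the key is to choose the partition of $[m,B]$ carefully. A simple partition where $g$ drops by exactly $\sqrt\eps$ on each piece yields $R/\sqrt\eps = \tilde O(1/\sqrt\eps)$ intervals, but achieves only $\sqrt\eps$ relative $L^1$ error per piece --- insufficient by a factor of $\sqrt\eps$. The fix is an adaptive partition that places additional breakpoints to rule out the extremal ``sharp-corner'' shapes of $g$ (flat-then-linear concave profiles) that defeat the cruder analysis: on top of the $\sqrt\eps$-log-drop breakpoints, one subdivides any remaining interval on which the left-hand slope of $g$ is much larger in magnitude than the average slope across the interval. Because the total variation of $g'$ on $[m,B]$ is $O(\log(1/\eps))$ (again derived from the tail bound of step (1)), this refinement adds at most $\tilde O(1/\sqrt\eps)$ extra breakpoints, preserving the total count. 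On each subinterval $J_j$ define $L_j$ to be the linear interpolant of $f$ at the endpoints of $J_j$.

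The main obstacle, and the core technical content, is the per-piece $L^1$ error bound $\int_{J_j}|f - L_j| = O(\eps)\int_{J_j} f$. Under the two conditions enforced by the adaptive partition, $f = e^g$ is uniformly within a multiplicative factor $1 + O(\eps)$ of a genuine exponential on $J_j$, and a second-order Taylor expansion of $e^u$ for $|u| \leq \sqrt\eps$ shows that this exponential differs from its endpoint-interpolating line by at most $O(\eps)\cdot \max_{J_j} f$ in sup norm. Integrating and summing yields total $L^1$ error $O(\eps)\int f = O(\eps)$; combining with the $\eps/2$ truncation budget from step (1) and rescaling $\eps$ by an absolute constant then gives the desired $(\eps, t)$-piecewise linear decomposition with $t = \tilde O(1/\sqrt\eps)$.
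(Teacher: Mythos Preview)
Your high-level plan (truncate to an interval of log-range $O(\log(1/\eps))$, split at the mode, build an adaptive partition and take linear interpolants) matches the paper's strategy, and steps (1)--(2) are essentially the same as the paper's reduction. The difficulty is entirely in step (3), and there your argument has a genuine gap.

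First, the claim that the total variation of $g'$ on $[m,B]$ is $O(\log(1/\eps))$ is false. Since $g$ is concave, $\mathrm{TV}(g')=g'(m^+)-g'(B^-)$, and $|g'(B^-)|$ can be arbitrarily large: take $g$ to be constant on $[0,1-\delta]$ and linear with slope $-K/\delta$ on $[1-\delta,1]$ with $K=\log(1/(c\eps))$. Then $B=1$, the tail mass is tiny, yet $|g'(B^-)|=K/\delta\to\infty$ as $\delta\to0$. So your count of the extra breakpoints is not justified. Second, and more seriously, your refinement criterion does not do the work you need. On a subinterval $[a,b]$ where $g$ is concave and non-increasing, concavity already forces $|g'(a^+)|\le |\text{average slope}|$, so the stated condition is vacuous; and with only the $\sqrt\eps$-log-drop condition, the deviation of $g$ from its secant can be $\Theta(\sqrt\eps)$ (take $g$ flat on the left half and linear on the right half), so $f$ need not be within $1+O(\eps)$ of any exponential. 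Thus the per-piece bound $\int_{J_j}|f-L_j|=O(\eps)\int_{J_j}f$ does not follow from your hypotheses.

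The paper's fix is a different two-scale construction. Stage (a) partitions into intervals $I_i=[x_{i-1},x_i)$ on which $f$ drops by a factor exactly $(1-\eps)$, so $f(x_i)=f(x_0)(1-\eps)^i$; a key structural fact (from the log-concavity characterization in Fact~\ref{fact:lc}) is that the lengths $|I_i|$ are non-increasing. Stage (b) groups consecutive $I_i$'s into super-intervals $J_\ell$ subject to two conditions: at most $1/\sqrt\eps$ sub-intervals (so the total log-drop is $\le\sqrt\eps$) \emph{and} the sub-interval lengths within $J_\ell$ vary by at most a factor $(1-\sqrt\eps)$. The length-monotonicity then bounds the number of super-intervals by $\tilde O(1/\sqrt\eps)$. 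The per-piece error bound comes from writing $\tilde f(x_m)$ exactly in terms of the ratio $\sum_{k\le m}|I_k|/\sum_{k\le j}|I_k|$; the length-comparability condition pins this ratio to $(m-i)/(j-i)$ up to $O(\sqrt\eps)$, and a second-order Taylor expansion of $(1-\eps)^{m-i}$ then gives $|\tilde f(x_m)-f(x_m)|=O(\eps)f(x_m)$. The length-comparability condition is precisely what replaces your (vacuous) slope condition and makes the $O(\eps)$ per-piece bound go through.
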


{We note that our proof of Lemma~\ref{lem:lc-struct} is 
significantly different from the aforementioned known arguments 
establishing the existence of piecewise linear approximations  
for concave functions. In particular, these proofs critically exploit 
concavity, namely the fact that for a concave function $f$, 
the line segment $(x, f(x))$, $(y, f(y))$ lies below the graph of the function.}
{Before giving the proof of our lemma, we note that 
the $\tilde{O}(1/\sqrt{\eps})$ bound is best possible (up to 
log factors) even for concave densities.  This can be 
verified by considering the concave density over $[0,1]$ whose graph 
is given by the upper half of a circle.
We further note that the \cite{DL:01} $\Omega(1/\eps^{5/2})$ lower bound
implies that no significant strengthening can be achieved by using our 
general results for learning piecewise degree-$d$ polynomials for $d>1$.
} 

\medskip

\noindent Theorem~\ref{thm:lc} follows as a direct corollary of 
Lemma~\ref{lem:lc-struct} and Theorem~\ref{thm:main2}.

\medskip

\noindent {\bf Proof of Lemma~\ref{lem:lc-struct}:}
We begin by recalling the following fact which is a basic property 
(in fact an alternate characterization) of log-concave densities:

\begin{fact}(\cite{An:95}, Lemma~1)  \label{fact:lc}
Let $f: \R \to \R_{+}$ be log-concave. Suppose that $\{x \mid f(x) >0 \} = (a, b).$ Then, for all $x_1, x_2 \in (a, b)$
with $x_1< x_2$ and all $\delta \ge 0$ such that $x_1+\delta, x_2+\delta \in (a, b)$ we have
\[  \frac{f(x_1+\delta)}{f(x_1)} \ge \frac{f(x_2+\delta)}{f(x_2)}.\]
\end{fact}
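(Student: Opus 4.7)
The plan is to pass to $g := \log f$, which is concave on $(a,b)$ by hypothesis; since $f > 0$ on $(a,b)$ and all four arguments $x_1, x_2, x_1+\delta, x_2+\delta$ lie in $(a,b)$, taking logs is legitimate. The target inequality then becomes
\[ g(x_1+\delta) - g(x_1) \ \ge\ g(x_2+\delta) - g(x_2), \]
which we may rewrite as $g(x_1+\delta) + g(x_2) \ge g(x_1) + g(x_2+\delta)$. So it suffices to show that for any concave function $g$ on an interval, the increment $g(t+\delta) - g(t)$ is non-increasing in $t$ for fixed $\delta \ge 0$.

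The $\delta = 0$ case is trivial, so assume $\delta > 0$. I would set $\alpha := \delta/(x_2 - x_1 + \delta) \in (0,1)$ and verify by direct algebra that
\begin{align*}
x_1 + \delta &= (1-\alpha)\, x_1 + \alpha\, (x_2 + \delta), \\
x_2 &= \alpha\, x_1 + (1-\alpha)\, (x_2 + \delta),
\end{align*}
so that both interior points lie on the chord joining $x_1$ and $x_2 + \delta$, with convex-combination weights interchanged. Applying concavity of $g$ to each convex combination gives
\[ g(x_1+\delta) \ge (1-\alpha) g(x_1) + \alpha g(x_2+\delta), \qquad g(x_2) \ge \alpha g(x_1) + (1-\alpha) g(x_2+\delta), \]
and adding these two inequalities yields precisely $g(x_1+\delta) + g(x_2) \ge g(x_1) + g(x_2+\delta)$, which is the rearranged target. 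Exponentiating recovers the claimed ratio inequality for $f$.

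There is no genuine obstacle here: the proof is simply an unpacking of the definition of concavity, made compact by the observation that the two interior points $x_1 + \delta$ and $x_2$ use \emph{complementary} weights $\alpha$ and $1-\alpha$ on the same pair of endpoints. The only point requiring care is verifying at the outset that $x_1$, $x_1+\delta$, $x_2$, $x_2+\delta$ all lie in $(a,b)$, so that $g = \log f$ is well-defined and concave at every point used — this is immediate from the hypotheses $x_1 < x_2$ in $(a,b)$ together with $x_1+\delta, x_2+\delta \in (a,b)$.
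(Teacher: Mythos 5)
Your proof is correct. Note, however, that the paper does not prove this statement at all---it is imported verbatim as Lemma~1 of the cited reference \cite{An:95}---so there is no in-paper argument to compare against. Your derivation (pass to $g=\log f$, write $x_1+\delta$ and $x_2$ as convex combinations of $x_1$ and $x_2+\delta$ with complementary weights $\alpha$ and $1-\alpha$, and add the two concavity inequalities) is the standard proof of this characterization, and the algebra checks out, including the verification that all four points lie in $(a,b)$ so that $\log f$ is defined wherever it is used.
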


Let $f$ be an arbitrary log-concave density over $\R$. 
{Well known concentration bounds for log-concave densities
(see \cite{An:95}) imply that} $1-\eps$
fraction of the total probability mass lies in a {\em finite} 
interval $[a, b]$. Let $m \in [a, b]$ be a mode of $f$
so that $f$ is non-decreasing in $[a, m]$ and 
non-increasing in $[m, b]$.  (Recall the well-known
fact \cite{An:95} that every log-concave density is unimodal, so such a mode
must exist.) 
It suffices to analyze the second {portion of the density}, i.e., 
a non-increasing log-concave (sub)-distribution over $[m, b]$. 
We may further assume without loss of generality that $[m, b] = [0,1]$. 
(It will be clear that in what follows nothing changes in the calculations
as a result of this assumption -- the length of the interval is irrelevant.)

So let $f: [0,1] \to \R_{+}$ be a 
non-increasing log-concave density and let $c = f(0) = 
\max_{x \in [0,1]} f(x).$ 
It follows from elementary calculus that $f$ is continuous in its support.
We assume {without loss of generality} that $f$ is strictly decreasing in this domain.
{(It follows from Fact~\ref{fact:lc} that for any non-increasing log-concave density over $[0, 1]$ 
there exists $x_0 \in [0,1]$ such that 
$f$ is constant in $[0, x_0]$ and strictly decreasing in $[x_0, 1]$.)}

We proceed 
to construct the desired piecewise-linear approximation in two stages:
\begin{enumerate}

\item[(a)] Let $r, s \in \Z_{+}$ with 
$r=\Theta((1/\eps) \log(1/\eps))$ and 
$s = 
{
\lceil \log_{1/(1-\eps)} \frac{f(0)}{f(1)}\rceil = 
}
\lceil \log_{1-\eps} \frac{f(1)}{f(0)}\rceil$.

We divide the domain $[0,1]$ into $t' \eqdef  \min \{ r, s \}  = 
O((1/\eps) \log(1/\eps))$ intervals 
(disjoint except at the endpoints) $\mathcal{I} = \{I_i\}_{i=1}^{t'}$, where 
$I_i = [x_{i-1}, x_i]$, $i \in [t']$. The point $x_i \in [0,1]$ 
is the point that satisfies

\begin{equation} \label{eqn:geom}
f(x_i) =  \max\{ f(x_0) (1-\eps)^i ,  f(1)\}.
\end{equation}

Since the function is strictly decreasing and continuous, such a point 
exists and is unique. 
Note that the definition with the ``max'' above addresses the case 
that $s \le r$. In this case, we will have that
$x_{t'} = x_s = 1.$ If $s>r$, then we will have 
that $f(x_i) =  f(x_0) (1-\eps)^i$ for $i \in [t']$ and $x_{t'} < 1.$

We now proceed to establish a couple of useful properties of this 
decomposition.  The first property is that the length of the 
intervals $I_i$ is non-increasing as a function of $i$ for $i \in [t']$.
\begin{claim} \label{claim:ni}
For all $i \in [t'-1]$ we have that $|I_i| \ge |I_{i+1}|$.
\end{claim}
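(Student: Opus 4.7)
The plan is to apply the shift-ratio property of log-concave functions (Fact~\ref{fact:lc}) to compare the rate of decay of $f$ over two consecutive intervals $I_i$ and $I_{i+1}$. By construction the function drops by the ratio $\rho_j := f(x_j)/f(x_{j-1})$ on $I_j$, and for indices $j$ in the ``generic'' range this ratio equals exactly $1-\eps$ (while at the terminal index it can only be larger, due to the truncation at $f(1)$). Because log-concavity forces the relative decrease over a fixed shift $\delta$ to be non-increasing in the starting point, a longer right-hand interval would force too much decay on $I_{i+1}$, contradicting the prescribed geometric schedule.

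Concretely, I would fix $i \in [t'-1]$, set $\delta := |I_i| = x_i - x_{i-1}$, and apply Fact~\ref{fact:lc} with $x_1 = x_{i-1}$ and $x_2 = x_i$ (both in the support of $f$) to obtain
\[
\frac{f(x_i+\delta)}{f(x_i)} \;\le\; \frac{f(x_{i-1}+\delta)}{f(x_{i-1})} \;=\; \frac{f(x_i)}{f(x_{i-1})} \;=\; \rho_i \;=\; 1-\eps .
\]
Suppose for contradiction $|I_{i+1}| > |I_i| = \delta$. Then $x_i+\delta < x_{i+1}$, so strict monotonicity of $f$ on its support yields $f(x_i+\delta) > f(x_{i+1}) = \rho_{i+1}\,f(x_i) \ge (1-\eps)\,f(x_i)$, contradicting the displayed inequality. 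Hence $|I_i| \ge |I_{i+1}|$.

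The step I expect to need the most care is the boundary case $i+1 = t'$ when $s \le r$, because then $f(x_{t'}) = f(1)$ rather than $f(x_0)(1-\eps)^{t'}$, so $\rho_{t'}$ is not forced to equal $1-\eps$ by definition. I would dispatch this by unpacking the ceiling in $s = \lceil \log_{1-\eps}(f(1)/f(x_0)) \rceil$, which gives $f(x_0)(1-\eps)^{s} \le f(1)$, and hence
\[
\rho_{t'} \;=\; \frac{f(1)}{f(x_{t'-1})} \;\ge\; \frac{f(x_0)(1-\eps)^{t'}}{f(x_0)(1-\eps)^{t'-1}} \;=\; 1-\eps,
\]
which is exactly what the contradiction above requires. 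Beyond this case check, the argument uses only Fact~\ref{fact:lc} and strict monotonicity of $f$ on its support (which the paper has assumed without loss of generality), so no further log-concave machinery is needed.
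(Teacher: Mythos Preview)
Your proof is correct and takes essentially the same approach as the paper: both apply Fact~\ref{fact:lc} with the shift $\delta = |I_i|$ at the pair $x_{i-1} < x_i$, combine with the observation that $f(x_{i+1})/f(x_i) \ge f(x_i)/f(x_{i-1})$, and conclude via monotonicity of $f$. The only cosmetic differences are that the paper argues directly (obtaining $f(x_i+|I_{i+1}|) \ge f(x_i+|I_i|)$ and invoking non-increasingness) rather than by contradiction, and it uses the inequality $\rho_{i+1} \ge \rho_i$ abstractly rather than pinning down the explicit value $1-\eps$, which lets it avoid the separate boundary-case check you carry out for $i+1=t'$.
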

\begin{proof}
Consider two consecutive intervals $I_i = [x_{i-1}, x_i]$ 
and $I_{i+1} = [x_{i}, x_{i+1}]$, $i \in [t'-1]$.
It is easy to see that by the definition of the intervals we have that 
\[  \frac{f(x_{i+1})}{f(x_i)} \ge \frac{f(x_{i})}{f(x_{i-1})} \]
or equivalently
\[  \frac{f(x_i + |I_{i+1}| )}{f(x_i)} \ge 
\frac{f(x_{i-1} + |I_{i}|) }{f(x_{i-1})} .\]
Since $x_{i-1} < x_i$, by Fact~\ref{fact:lc} we have
\[  \frac{f(x_{i-1} + |I_{i}| )}{f(x_{i-1})} \ge 
\frac{f(x_{i} + |I_{i}|) }{f(x_{i})} .\]
Combining the above two inequalities yields 
that $f(x_i + |I_{i+1}| ) \ge f(x_{i} + |I_{i}|)$.
Since $f$ is non-increasing we conclude 
that $x_i + |I_{i+1}| \le x_{i} + |I_{i}|$ 
and the proof is complete.
\end{proof}

The second property is that the probability mass that $f$ puts in the interval $[x_{t'}, 1]$ is bounded by $\eps$.
\begin{claim} \label{claim:tail-pc}
We have that $f([x_{t'}, 1]) \le \eps$.
\end{claim}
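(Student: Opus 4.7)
My plan is to split on whether $t' = s$ or $t' = r$. If $s \leq r$, then $t' = s$ and by the definition of $x_s$ via the ``$\max$'' in \eqref{eqn:geom} we have $x_{t'} = 1$, so the interval $[x_{t'},1]$ is degenerate and the bound is trivial. For the rest of the argument I focus on the remaining case $s > r$, in which $t' = r$ and $f(x_{t'}) = f(0)(1-\eps)^r$ strictly.

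The main idea is to continue the geometric partition past $x_{t'}$ with steps of length $|I_{t'}|$ and show that $f$ keeps decaying by factor at least $(1-\eps)$ per step. Concretely, set $y_0 = x_{t'}$ and $y_k = y_{k-1} + |I_{t'}|$. Applying Fact~\ref{fact:lc} with $x_1 = x_{t'-1}$, $x_2 = x_{t'}$, and $\delta = |I_{t'}|$ (and inductively with $x_1 = y_{k-2}$, $x_2 = y_{k-1}$) gives $f(y_k) \leq (1-\eps)^k f(x_{t'})$ for every $k \geq 0$. Combined with monotonicity of $f$ on each sub-interval $[y_{k-1}, y_k]$, summing the resulting geometric series yields
\[ f([x_{t'},1]) \;\leq\; \sum_{k\geq 1} |I_{t'}| f(y_{k-1}) \;\leq\; \frac{|I_{t'}|\, f(x_{t'})}{\eps} . \]

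It therefore suffices to show $|I_{t'}|\, f(x_{t'}) = O(\eps^2)$. Here I would combine Claim~\ref{claim:ni} (the $|I_i|$ are non-increasing, so $|I_i| \geq |I_{t'}|$) with the fact that $f$ integrates to at most $1$:
\[ 1 \;\geq\; \sum_{i=1}^{t'} |I_i|\, f(x_i) \;\geq\; |I_{t'}|\, f(0) \sum_{i=1}^{t'} (1-\eps)^i . \]
Since $r = \Theta((1/\eps)\log(1/\eps))$ with a sufficiently large hidden constant, one has $(1-\eps)^{t'} \leq 1/2$, so the geometric sum on the right is $\Omega(1/\eps)$, which gives $|I_{t'}| f(0) \leq O(\eps)$. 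Hence $|I_{t'}| f(x_{t'}) = |I_{t'}| f(0)(1-\eps)^r \leq O(\eps)(1-\eps)^r$, and taking the constant in $r$ large enough that $(1-\eps)^r \leq \eps$ delivers the desired $f([x_{t'},1]) \leq \eps$.

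The main subtlety I anticipate is extracting the right strength from the integrability constraint. A naive lower bound $f(x_i) \geq f(x_{t'})$ would only give $\sum_i |I_i| f(x_i) \geq t'\cdot|I_{t'}|\,f(x_{t'})$, which (after dividing by $\eps$ at the end) falls short by a $\log(1/\eps)$ factor. It is essential to keep the exact identity $f(x_i) = f(0)(1-\eps)^i$, which inflates the lower bound by a full $1/\eps$ factor; this is precisely what cancels the $1/\eps$ introduced by the tail geometric series, and the $(1-\eps)^r$ savings from choosing $r$ with a large constant then provides the final $\eps$-factor needed.
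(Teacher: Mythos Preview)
Your proof is correct and takes a genuinely different route from the paper's.

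The paper also splits on $t'=s$ versus $t'=r$, and in the latter case it \emph{continues the level-set partition} past $x_{t'}$, defining further points $x_i$ (for $t'<i\le s$) by the same rule $f(x_i)=f(0)(1-\eps)^i$. It then uses the two-sided estimate $c(1-\eps)^i|I_i|\le f(I_i)\le c(1-\eps)^{i-1}|I_i|$ and a ``shift-and-compare'' trick: the tail sum $\sum_{i>t'}c(1-\eps)^{i-1}|I_i|$ is rewritten as $(1-\eps)^{t'}$ times $\sum_j (1-\eps)^{j-1}|I_{j+t'}|$, and then $|I_{j+t'}|\le |I_j|$ (the full strength of Claim~\ref{claim:ni} across \emph{all} $s$ intervals) lets one bound this by the total-mass constraint $c\sum_i(1-\eps)^i|I_i|\le 1$. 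This delivers the sharp constant $\eps$ directly, once $(1-\eps)^{t'}\le \eps/2$.

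You instead partition $[x_{t'},1]$ by \emph{equal-width} steps $|I_{t'}|$ and invoke Fact~\ref{fact:lc} directly to propagate the ratio $f(y_k)/f(y_{k-1})\le 1-\eps$, then control $|I_{t'}|f(0)$ via integrability and $|I_i|\ge |I_{t'}|$ (so you only use Claim~\ref{claim:ni} on the first $t'$ intervals). Your approach is arguably more elementary---it avoids defining the continued level-set partition and the shift trick---at the cost of a constant factor that you must absorb by enlarging the constant in $r=\Theta((1/\eps)\log(1/\eps))$; the paper's argument gets the exact constant without that adjustment. One minor point worth making explicit in your write-up: Fact~\ref{fact:lc} requires the shifted points to lie in the support, so once some $y_K$ leaves the support (or exceeds $1$) you should truncate the sum there, which is harmless since $f\equiv 0$ beyond that point.
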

\begin{proof}
We consider two cases. If $t' = s$, then $x_{t'} = 1$ and the desired probability is zero.

It thus suffices to analyze the case $t' = r$. In this case $x_{t'} < 1$ 
and for all $i \in [t']$ it holds
$f(x_i) =  f(x_0) (1-\eps)^i$. Note that $f(x_{t'}) = f(0) (1-\eps)^{t'} 
\le f(0) \eps/2 = c\eps/2.$
For the purposes of the analysis, suppose we decompose $[x_{t'}, 1]$ into a 
sequence of intervals $\{I_i\}_{i > t'}$, where $I_{i} = [x_{i-1}, x_i]$ and 
point $x_i$ is defined by (\ref{eqn:geom}).
That is, we have a total of $s$ intervals $I_1, \ldots, I_s$ partitioning $[0,1]$ where by Claim~\ref{claim:ni}
$|I_1| \ge |I_2| \ge \ldots \ge | I_s|.$ 
Clearly, $\littlesum_{i=1}^s f(I_i) = 1$ and since $f$ is non-increasing
\begin{equation} \label{eqn:ineq}
c (1-\eps)^i |I_i|  \le    f(x_i) |I_i|  \le f(I_i) \le  f(x_{i-1}) |I_i|  
=  c (1-\eps)^{i-1} |I_i|.
\end{equation}
Combining the above yields 
\begin{equation} \label{eqn:ub}
c \cdot \littlesum_{i=1}^s (1-\eps)^i |I_i| \le 1.
\end{equation}
We want to show that $f([x_{t'}, 1])  =   \littlesum_{i=t'+1}^s f(I_i) 
\le \eps.$ Indeed, we have
\begin{equation} \label{eq:handy}
\littlesum_{i=t'+1}^s f(I_i) \le  
\littlesum_{i=t'+1}^s c (1-\eps)^{i-1} |I_i| \le \frac{c\eps}{2(1-\eps)} 
\cdot \littlesum_{i=1}^{s-t'} (1-\eps)^i |I_{i+t'}|
\end{equation}
where the first inequality uses (\ref{eqn:ineq}) and the second uses the 
fact that $(1-\eps)^{t'} \le \eps/2$.
By Claim~\ref{claim:ni} it follows that $|I_{i+t'}| \le |I_{i}|$ 
which yields 
\[  
\littlesum_{i=t'+1}^s f(I_i) \le \frac{c\eps}{2(1-\eps)} \cdot 
\littlesum_{i=1}^{s-t'} (1-\eps)^i |I_{i}| \le  
\frac{c\eps}{2(1-\eps)} \cdot \littlesum_{i=1}^{s} (1-\eps)^i |I_{i}|  
\le \eps
\]
where the last inequality follows from (\ref{eqn:ub}) for $\eps \le 1/2$.
\end{proof}

In fact, it is now easy to show that $\mathcal{I}$ is an $(O(\eps), t')$-flat 
decomposition of $f$, but we will not make direct use of this in 
the subsequent analysis.

\item[(b)] In the second step, we group consecutive intervals of 
$\mathcal{I}$ (in increasing order of $i$) to obtain an $(O(\eps), t)$ 
piecewise linear decomposition $\mathcal{J} = \{J_{\ell}\}_{\ell = 1}^t$ 
for $f$, where $t = \tilde{O}(\eps^{-1/2}).$ 
Suppose that we have constructed the super-intervals
$J_1, \ldots, J_{\ell-1}$ and that $\cup_{s=1}^{\ell-1} J_s = 
\cup_{k=1}^{i} I_k  = [x_0, x_{i}].$ 
{If $i=t'$ then $t$ is set to $\ell-1$, and if $i \leq t'$ then}
the super-interval $J_{\ell}$ contains the intervals $I_{i+1}, \ldots, I_j$, 
where $j \in \Z_{+}$ is the maximum value {which is $\leq t'$} 
and satisfies:

\begin{enumerate}
\item[(1)] $f(x_j) \ge f(x_i) (1-\eps)^{1/\sqrt{\eps}}$, and

\item[(2)] $|I_j| \ge (1-\sqrt{\eps}) |I_{i+1}| $.
\end{enumerate}
Within each super-interval $J_{\ell} = \cup_{k=i+1}^j{I_k} = [x_i, x_j]$ 
we approximate $f$ by the linear function $\tilde{f}$ 
satisfying $\tilde{f}(x_i) = f(x_i)$ and  $\tilde{f}(x_j) = f(x_j)$.
This completes the description of the construction.


We proceed to show correctness.  {Our first claim is that it is sufficient,
in the construction described in (b) above, to take only $t = \tilde{O}(
\eps^{-1/2})$ super-intervals, because the
probability mass under $f$ that lies to the right of the rightmost of these
super-intervals is at most $\eps$:
}

\begin{claim} \label{claim:tail-pl}
{Suppose that}
$t = \Omega(\eps^{-1/2} \log(1/\eps) )$ and $J_t = [x_u, x_v]$ 
is the rightmost super-interval.
Then, $f([x_v, 1]) \le \eps$.
\end{claim}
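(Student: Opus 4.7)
The plan is to decompose $f([x_v, 1]) = f([x_v, x_{t'}]) + f([x_{t'}, 1])$. The second piece is at most $\eps$ by Claim~\ref{claim:tail-pc}, so the work reduces to bounding $f([x_v, x_{t'}])$, i.e., the mass over those original intervals $I_k$ lying strictly beyond the last super-interval $J_t$ (the case $x_v = x_{t'}$ is immediate from Claim~\ref{claim:tail-pc}).

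To track the construction I would write $J_\ell = [x_{a_\ell}, x_{b_\ell}] = I_{a_\ell+1}\cup\cdots\cup I_{b_\ell}$, set $n_\ell := b_\ell - a_\ell$, and define $F_\ell := f(x_{a_\ell}) = c(1-\eps)^{a_\ell}$ and $L_\ell := |I_{a_\ell+1}|$. Let $M := \lfloor 1/\sqrt{\eps}\rfloor$. I would then classify $J_\ell$ as \emph{type-1} if $n_\ell = M$ (condition (1) became binding at $I_{b_\ell+1}$) and as \emph{type-2} if $n_\ell < M$ (so condition (2) must have been binding at $I_{b_\ell+1}$). Let $t_1, t_2$ count the two types, with $t_1 + t_2 = t$. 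The stopping rules then translate into: a type-1 transition satisfies $F_{\ell+1}/F_\ell = (1-\eps)^M \le e^{-\sqrt{\eps}/2}$ (for $\eps\le 1/4$, using $M\ge 1/(2\sqrt{\eps})$), a type-2 transition satisfies $L_{\ell+1}/L_\ell < 1-\sqrt{\eps}\le e^{-\sqrt{\eps}}$, and in both cases $F_{\ell+1}\le F_\ell$ and $L_{\ell+1}\le L_\ell$ by Claim~\ref{claim:ni}.

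Multiplying these one-step estimates across all $t$ super-intervals gives
\[ F_{t+1}L_{t+1} \le F_1 L_1\cdot e^{-\sqrt{\eps}(t_1/2 + t_2)} \le F_1 L_1\cdot e^{-\sqrt{\eps}\,t/2}. \]
The $i=1$ case of inequality (\ref{eqn:ub}) from Claim~\ref{claim:tail-pc} gives $c(1-\eps)|I_1|\le 1$, so $F_1 L_1 = c|I_1|\le 2$. With $t = C\eps^{-1/2}\log(1/\eps)$ for a suitably large absolute constant $C$, this yields $F_{t+1}L_{t+1}\le 2\eps^{C/2}$. To convert this into a mass bound, note that since both $f$ and $(|I_k|)_k$ are non-increasing (Claim~\ref{claim:ni}), for $k > b_t$ we have $f(x_{k-1}) \le F_{t+1}(1-\eps)^{k-b_t-1}$ and $|I_k|\le L_{t+1}$, so $f(I_k)\le f(x_{k-1})|I_k|$ and a geometric sum gives
\[ f([x_v, x_{t'}]) \le \littlesum_{k=b_t+1}^{t'} f(x_{k-1})|I_k| \le L_{t+1}F_{t+1}\cdot \littlesum_{j\ge 0}(1-\eps)^j = \frac{L_{t+1}F_{t+1}}{\eps}\le 2\eps^{C/2-1}. \]
Choosing $C\ge 4$ makes this at most $\eps$, whence $f([x_v,1])\le 2\eps$; the precise $\le\eps$ in the statement is then obtained by running the overall construction with $\eps$ replaced by a suitable constant multiple of it.

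The main obstacle is that the two stopping conditions shrink different quantities---condition (1) shrinks the function value $F_\ell$ while condition (2) shrinks the first-interval length $L_\ell$---so neither type of transition alone controls the tail mass. The key observation that unlocks the proof is that the tail mass is governed by the \emph{product} $F_{t+1}L_{t+1}/\eps$, so both shrinkage mechanisms contribute multiplicatively; each super-interval drops this product by a factor $\lesssim e^{-\sqrt{\eps}/2}$ regardless of its type, which is precisely why $t = \Theta(\sqrt{1/\eps}\log(1/\eps))$ super-intervals suffice to reduce the tail to $O(\eps)$.
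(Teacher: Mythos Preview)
Your proof is correct and takes a genuinely different route from the paper's. The paper argues by pigeonhole: with $t$ super-intervals, at least $t/2$ of them terminate because condition~(1) fails (so together they consume $\ge (t/2)\cdot(1/\sqrt{\eps})$ of the base intervals $I_k$, forcing $v=t'$), or at least $t/2$ terminate because condition~(2) fails (so $|I_v|\le (1-\sqrt{\eps})^{t/2}|I_1|\le \eps^3|I_1|$); each case is then finished by a separate geometric-sum estimate. Your argument avoids this case split by observing that the tail mass is controlled by the single quantity $F_{t+1}L_{t+1}/\eps$, and that \emph{every} super-interval, regardless of which condition binds, shrinks the product $F_\ell L_\ell$ by a factor $\le e^{-\sqrt{\eps}/2}$. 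This is a cleaner and more unified argument; the paper's version is perhaps more transparent about why each condition individually forces a small tail. One cosmetic point: with $C=4$ you get $2\eps^{C/2-1}=2\eps$, not $\le \eps$; you need $C$ slightly larger (or just absorb the constant, as you do in your final sentence).
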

\begin{proof}
Consider a generic {super-interval} 
$J_{\ell} = \cup_{k=i+1}^{j} I_k$. 
Since $j$ is the maximum value that satisfies both (1) and (2) we 
conclude that either 
\begin{equation} \label{eqn:cond1}
j+1 - i > 1/\sqrt{\eps}
\end{equation}
(this inequality follows from the negation of (1) and the definition of $f(x_i)$, $f(x_j)$) 
or 
\begin{equation} \label{eqn:cond2}
|I_{j+1}| < (1-\sqrt{\eps}) |I_{i+1}| .
\end{equation}

Suppose we {have} $t = \Omega ( \eps^{-1/2} \log(1/\eps) )$ 
super-intervals. Then, either (\ref{eqn:cond1}) is satisfied 
for at least $t/2$ super-intervals or  (\ref{eqn:cond2}) 
is satisfied for at least $t/2$ super-intervals. 
Denote the rightmost super-interval by $J_t = [x_u, x_v]$. 
In the first case, for an appropriate constant in the big-Omega,
we have $v = t'$ and the desired result follows from Claim~\ref{claim:tail-pc}. 

In the second case, 
{for an appropriate constant in the big-Omega}
we will have $|I_v| \le \eps^{{3}} 
|I_1|$. To show that $f([x_v, 1]) \le \eps$ 
in this case,
we consider further partitioning the interval $[x_v, 1]$ into a 
 sequence of intervals $\{I_i\}_{i > v}$, 
where $I_{i} = [x_{i-1}, x_i]$ and point $x_i$ is defined by (\ref{eqn:geom}).
By Claim~\ref{claim:ni} we will have that $|I_i| \le |I_v|$, $i>v$. 
We can therefore bound the desired quantity by
\[
{\littlesum_{i=v+1}^s f(I_i) \leq}
\littlesum_{i=v+1}^s c (1-\eps)^{i-1} |I_i| \le 
\littlesum_{i=v+1}^s c (1-\eps)^{i-1} \eps^{{3}} 
|I_{{1}}| \le \eps^{{3}} c |I_1| 
\littlesum_{i=1}^{\infty} (1-\eps)^{i-1}
\le \frac{\eps^{{3}}}{{(1-\eps)^2}} \cdot \frac{1-\eps}{\eps} 
\leq \eps,  \]
where 
{the first inequality used the first inequality of (\ref{eq:handy}) and}
the {penultimate} inequality uses the fact that $c (1-\eps) |I_1| 
\le p(I_1) \le 1.$
This completes the proof of the claim.

\end{proof}

The main claim we are going to establish for the piecewise-linear approximation $\mathcal{J}$ is the following:
\begin{claim} \label{claim:pl}
For any super-interval $J_{\ell} = \cup_{k=i+1}^{j} I_k$ and any $i \le m \le j$ we have that 
\[  | \tilde{f}(x_m) - f(x_m) | = O(\eps) f(x_m).\]
\end{claim}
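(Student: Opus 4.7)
The plan is to pass to log-space. Set $g = \log f$, which is concave and non-increasing on $J_\ell = [x_i, x_j]$, and let $\tilde g$ denote its linear interpolant between $x_i$ and $x_j$. I will first establish the additive bound $|g(x_m) - \tilde g(x_m)| = O(\eps)$ at each grid point; exponentiating then gives the multiplicative bound $|f(x_m) - e^{\tilde g(x_m)}| = O(\eps) f(x_m)$, and a separate direct estimate controls $|\tilde f(x_m) - e^{\tilde g(x_m)}|$ via the gap between the arithmetic and geometric means of $f(x_i)$ and $f(x_j)$. A triangle inequality then yields the claim. The key structural inputs are: (a) by Claim~\ref{claim:ni} and condition~(2) of the super-interval construction, every sub-interval length $|I_{i+\ell}|$, $1\le \ell\le j-i$, lies in $[(1-\sqrt\eps)|I_{i+1}|,\,|I_{i+1}|]$; and (b) condition~(1) bounds $j-i\le 1/\sqrt\eps$.

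For the grid-point bound, write $T_k = \sum_{\ell=1}^{k}|I_{i+\ell}|$ and $T_{\mathrm{tot}}=T_{j-i}$. Using the geometric progression $g(x_{i+k}) = g(x_i) + k\log(1-\eps)$ together with $\tilde g(x_{i+k}) = g(x_i) + (T_k/T_{\mathrm{tot}})(j-i)\log(1-\eps)$, I get
\[ g(x_{i+k}) - \tilde g(x_{i+k}) \;=\; \log(1-\eps)\cdot\bigl[k - (j-i)T_k/T_{\mathrm{tot}}\bigr]. \]
Input~(a) gives $|kT_{\mathrm{tot}} - (j-i)T_k|\le k(j-i)\sqrt\eps\cdot|I_{i+1}|$ and $T_{\mathrm{tot}}\ge(j-i)(1-\sqrt\eps)|I_{i+1}|$, so the bracketed quantity has absolute value at most $O(k\sqrt\eps)$. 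Combined with $|\log(1-\eps)|=O(\eps)$ and input~(b), this yields $|g(x_m) - \tilde g(x_m)|=O(\eps)$ for every $i\le m\le j$.

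Passing back to $f$ and $\tilde f$: concavity of $g$ gives $\tilde g\le g$, hence $e^{\tilde g(x_m)}\le f(x_m)$, so $|f(x_m) - e^{\tilde g(x_m)}|=e^{\tilde g(x_m)}(e^{g(x_m)-\tilde g(x_m)}-1) = O(\eps) f(x_m)$. To compare $\tilde f(x_m)$ with $e^{\tilde g(x_m)}$, set $\lambda = (x_m-x_i)/(x_j-x_i)\in [0,1]$ and $s = 1 - f(x_j)/f(x_i)$; condition~(1) forces $s\le 1 - (1-\eps)^{1/\sqrt\eps}=O(\sqrt\eps)$. Writing $\tilde f(x_m)=f(x_i)(1-\lambda s)$ and $e^{\tilde g(x_m)}=f(x_i)(1-s)^\lambda$, the Taylor expansion $(1-\lambda s) - (1-s)^\lambda = \tfrac12\lambda(1-\lambda)s^2 + O(s^3)$ (nonnegative by AM--GM) gives $|\tilde f(x_m) - e^{\tilde g(x_m)}|\le O(s^2)f(x_i)=O(\eps)f(x_i)$. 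Since $f(x_m)\ge f(x_j)=(1-O(\sqrt\eps))f(x_i)$, we have $f(x_i)=O(f(x_m))$, and combining both bounds by the triangle inequality gives $|\tilde f(x_m)-f(x_m)|=O(\eps)f(x_m)$. The main pitfall to avoid is that a naive chain of monotonicity bounds, $|\tilde f(x_m)-f(x_m)|\le f(x_i)-f(x_j)=O(\sqrt\eps)f(x_m)$, only delivers $O(\sqrt\eps)$; the improvement to $O(\eps)$ requires precisely the log-space accounting together with the second-order Taylor estimate above, which in turn leverages both conditions~(1) and~(2) in the super-interval construction.
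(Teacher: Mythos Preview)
Your proof is correct and takes a genuinely different route from the paper's. The paper works entirely in linear space: it writes
\[
\tilde f(x_m) = f(x_i) + \bigl(f(x_j)-f(x_i)\bigr)\cdot \frac{\sum_{k=i+1}^m |I_k|}{\sum_{k=i+1}^j |I_k|},
\]
Taylor-expands $(1-\eps)^{j-i}$ and $(1-\eps)^{m-i}$ to second order (with $O(\eps^{3/2})$ remainder since $j-i\le 1/\sqrt\eps$), and compares directly, using exactly the same length-ratio bound $\frac{m-i}{j-i}(1-\sqrt\eps)\le T_{m-i}/T_{\mathrm{tot}}\le \frac{m-i}{j-i}\cdot\frac{1}{1-\sqrt\eps}$ that you derived.

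Your approach instead splits the error into two cleanly separated pieces: (i) the gap $|g-\tilde g|$ in log-space, which is a pure ``length-ratio'' error because $g(x_{i+k})$ is \emph{exactly} affine in the index $k$; and (ii) the arithmetic--geometric mean gap $|\tilde f - e^{\tilde g}| = O(s^2)f(x_i)$. This decomposition makes transparent why both conditions (1) and (2) are needed and isolates the concavity of $g$ (which the paper's proof never invokes explicitly). The paper's argument is more hands-on and bundles the two error sources into a single second-order Taylor computation; yours is arguably more conceptual at the cost of an extra triangle-inequality step.
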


Assuming the above claim it is easy to argue that $\mathcal{J}$ is indeed an $(O(\eps), t)$ piecewise linear approximation to $f$.
Let $\tilde{f}$ be the piecewise linear function over $[0,1]$ which is linear over $J_{\ell}$ (as described above) and identically zero
in the interval $[x_v, 1]$.

Indeed, we have that
\begin{eqnarray*}
\| \tilde{f} - f \|_1 &\le& \sum_{\ell=1}^t  \int_{J_{\ell}}  | \tilde{f}(y) - f(y)| dy + f([x_v, 1]) \\ 
&\le& \sum_{i=1}^v  \int_{y=x_{i-1}}^{x_i}  | \tilde{f}(y) - f(y)| dy + \eps \\
&\le& {\sum_{i=1}^v  O(\eps) f(x_{m}) |I_{i}| + \eps} \\    
&\le& \sum_{i=1}^v  O(\eps) f(x_{i-1}) |I_{i}| + \eps \\    
&=& O(\eps)                      
\end{eqnarray*}
where {the second inequality used Claim~\ref{claim:tail-pl}}, 
{the third inequality used Claim~\ref{claim:pl}},
{the fourth inequality used the fact that $f$ is non-increasing},
and the {final inequality used the fact that}
\[
\sum_{i=1}^v  f(x_{i-1}) |I_{i}| 
\le
{
{\frac 1 {1-\eps}}\sum_{i=1}^v  f(x_i) |I_{i}| 
\le
{\frac 1 {1-\eps}}\sum_{i=1}^v  f(I_{i})
}
\le 1/(1-\eps),
\]
which follows by the definition of the $f(x_i)$'s.
\ignore{\inote{This 
basically follows from (2), clean it up in later pass.}.
}

We are now ready to give the proof of the claim.

\noindent \begin{proof}[Proof of Claim~\ref{claim:pl}]
If $\tilde{f}$ is the approximating line between $x_i$ and $x_j$ we can 
write
\[ 
\tilde{f}(x_m) = f(x_i) + (f(x_j) - f(x_i)) 
\cdot \frac{\littlesum_{k=i+1}^m {|}I_k{|}  }{\littlesum_{k=i+1}^j 
{|}I_k{|} }.
\]
Note that $f(x_j) - f(x_i) = f(x_i) \left( (1-\eps)^{{j-i}}-1 \right)$. 
We also recall that 
\[  (1-\eps)^{j-i} = 1- \eps(j-i) + \eps^2 (j-i)^2/2 + O( \eps^3 (j-i)^3).\]
Since $i, j$ are in the same super-interval, we have 
that $j-i \le 1/\sqrt{\eps}$, which implies that the
above error term is $O(\eps^{3/2})$.  We will use this approximation 
henceforth, which is also valid for any $m \in [i,j]$.

Also by condition (2) defining the lengths of the intervals in the same 
super-interval
and the monotonicity of the lengths themselves, we obtain
\[ 
\frac{m-i}{j-i} \cdot (1-\sqrt{\eps}) \le  
\frac{\littlesum_{k=i+1}^m {|}I_k {|}  }{\littlesum_{k=i+1}^j 
{|}I_k {|} } 
\le \frac{m-i}{j-i} \cdot \frac{1}{1-\sqrt{\eps}}.  
\]
By carefully combining the above inequalities we obtain the desired result.
In particular, we have that 
\[  \tilde{f}(x_m) \le f(x_i) \left[  1- \eps \big(1+O(\sqrt{\eps})\big)(m-i) 
+(\eps^2/2)(j-i)(m-i)\big(1+O(\sqrt{\eps})\big) +O(\eps^{3/2})     \right]. \]
Also
\[ 
f(x_m) = f(x_i) \left[  1-\eps(m-i) +(\eps^2/2)(m-i)^2 + O(\eps^{3/2})
\right].  
\]
Therefore, {using the fact that $j-i,m-i \leq 1/\sqrt{\eps}$, we get 
that}
\[  \tilde{f}(x_m) - f(x_m) \le O(\eps) f(x_i). \]
{In an analogous manner we obtain that
\[  f(x_m) - \tilde{f}(x_m) \le O(\eps) f(x_i). \]
}
By the definition of a {super-interval}, 
the maximum and minimum values of $f$ within the {super-interval}
are within a $1+o(1)$ factor of each other.
This completes the proof of Claim~\ref{claim:pl}.
\end{proof}

\end{enumerate}
 
This completes the proof of Lemma~\ref{lem:lc-struct}.
\qed

\subsubsection{$k$-monotone Densities.}
Let $I = [a, b] \subseteq \R$ be a (not necessarily finite) interval.
A function $f: I \to \R_{+}$ is {said to be} \emph{$1$-monotone} 
if it is non-increasing.
It is \emph{$2$-monotone} if it is non-increasing and convex, and 
\emph{$k$-monotone} for $k \ge 3$ if 
$(-1)^j f^{(j)}$ is non-negative, 
non-increasing and convex for $j=0, \ldots, k-2.$
The problem of density estimation for $k$-monotone densities has been extensively investigated
in the mathematical statistics community during the past few years (see~\cite{BW07aos, GW09sc, 
BW10sn, S10ams} and references therein)
due to its significance in both theory and applications~\cite{BW07aos}. 
For example, as pointed out in~\cite{BW07aos}, the problem
of learning an unknown $k$-monotone density arises in 
a generalization of Hampel's bird-watching problem~\cite{Hampel87}.

The aforementioned papers from the statistics community 
focus on analyzing the rate of convergence 
of the Maximum Likelihood Estimator (MLE) under various metrics. 
In this section we show that our approach yields an efficient algorithm 
to learn bounded $k$-monotone densities over $[0, 1]$
{(i.e., $k$-monotone densities $p$ such that 
$\sup_{x \in [0,1]} p(x) = O(1)$)},
and mixtures thereof, with sample complexity $\tilde{O}(k/\eps^{2+1/k})$. 
This bound is provably optimal {(up to log factors)}
{for $k=1$ by \cite{Birge:87} and for $k=2$ (see e.g. Chapter 15 of \cite{DL:01}})
and we conjecture that it is {similarly} tight for all values of $k$.

\ignore{
\inote{It seems to be that finding the optimal learning algorithm is open. In fact, the sample complexity we get
is what these guys conjecture for the MLE but cannot prove. Also: I think one of the papers shows a tight bound on the cover size, from which
we get the same sample complexity upper bound non-constructively. Do we know a lower bound on sample complexity?}

\inote{For monotone we know the tight answer from Birg{\'e}~\cite{Birge:87, Birge:87b}
Let $f: [a, a+L] \to [0, H]$ be non-decreasing (or non-increasing). Then, there exists an algorithm to learn $f$ 
up to variation distance $\eps$ using $O(\log(1+HL)/\eps^3)$  samples and this bound is information-theoretically optimal (up to a constant factor).
The upper bound is shown in~\cite{Birge:87b} by showing that any such density $f$ has an $(\eps, t)$-piecewise constant approximation
with $O(\log(1+HL)/\eps)$ pieces. The important thing is that this decomposition is in fact {\em  oblivious}, i.e. it is the same for any monotone $f$.
As a consequence, there is a very simple algorithm to learn.
(For other classes of distributions such decompositions may exist but are not oblivious and a fundamental difficulty is to find/approximate such a decomposition using samples.)} 
}

Our main algorithmic result for $k$-monotone densities is the following:

\begin{theorem} \label{thm:kmon}
Let $k \in \Z_{+}$ and $f:[0,1] \to \R_{+}$ be a $t$-mixture of 
bounded $k$-monotone densities.
There is an algorithm that runs in $\poly(k, t, 1/\eps)$ time, 
uses $\tilde{O}(t{k}/\eps^{2+1/k})$ samples, 
and outputs a hypothesis distribution $h$ such that $\dtv(h, f) \le \eps$.
\end{theorem}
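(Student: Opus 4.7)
The plan is to combine Theorem~\ref{thm:main-detail} with a structural approximation result for $k$-monotone densities drawn from the approximation-theory literature. The key structural fact we invoke (from Kopotun--Leviatan \cite{KonL04, KonL07}) states that any bounded $k$-monotone density $f:[0,1]\to \R_{+}$ can be approximated in $L_1$ to within error $\eps$ by a piecewise polynomial of degree $k-1$ with only $s = O(1/\eps^{1/k})$ pieces; equivalently, $f$ is $(\eps,s)$-piecewise degree-$(k-1)$ with $s = O(1/\eps^{1/k})$.

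Given this structural ingredient, the argument proceeds in three short steps. First, I would invoke Lemma~\ref{lem:mix} (or rather its obvious generalization from ``exact'' to ``$\eps$-close'' piecewise polynomial decompositions, which follows by taking the common refinement of the partitions for each mixture component) to conclude that a $t$-mixture of bounded $k$-monotone densities on $[0,1]$ is itself $(O(\eps),ts)$-piecewise degree-$(k-1)$ with $ts = O(t/\eps^{1/k})$ pieces. Second, I would apply Theorem~\ref{thm:main-detail} with parameters $T := ts = O(t/\eps^{1/k})$, $d := k-1$, and accuracy $\eps$. The sample complexity bound provided by Theorem~\ref{thm:main-detail} is $\tilde{O}(T(d+1)/\eps^2)$, which evaluates to
\[
\tilde{O}\!\left(\frac{t}{\eps^{1/k}} \cdot \frac{k}{\eps^2}\right) \;=\; \tilde{O}\!\left(\frac{tk}{\eps^{2+1/k}}\right),
\]
as desired, and the running time is $\poly(k,t,1/\eps)$ as required. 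Third, the semi-agnostic guarantee of Theorem~\ref{thm:main-detail} gives a hypothesis $h$ with $\dtv(f,h) \leq 4\cdot O(\eps)\cdot(1+\eps) + O(\eps) = O(\eps)$, which by a standard rescaling of $\eps$ yields the claimed bound.

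The genuinely nontrivial step is the first one: establishing the $(\eps, O(1/\eps^{1/k}))$-piecewise degree-$(k-1)$ approximation of a bounded $k$-monotone density. For the special cases $k=1$ and $k=2$ the required bound (namely $s = O(1/\eps)$ and $s = O(1/\eps^{1/2})$ respectively) can be obtained by direct and well-known piecewise-constant (for $k=1$, via Birg\'e's oblivious decomposition \cite{Birge:87b}) or piecewise-linear arguments (for $k=2$, exploiting concavity and an argument similar in spirit to the proof of Lemma~\ref{lem:lc-struct}), which recover the $\tilde O(t/\eps^3)$ and $\tilde O(t/\eps^{5/2})$ rows of Table~1. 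For general $k$, however, no such elementary proof is known to us, and this is where we genuinely rely on the Kopotun--Leviatan machinery; the difficulty in their argument is essentially to locate the ``correct'' adaptive breakpoints along the domain at which the polynomial pieces should be joined, and to control the pointwise approximation error on each piece using bounds on the divided differences of a $k$-monotone function. Once this structural result is in hand, however, the remainder of the proof of Theorem~\ref{thm:kmon} is a routine invocation of our general learner.
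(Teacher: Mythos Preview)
Your proposal is correct and follows essentially the same route as the paper: both reduce to the structural Lemma~\ref{lem:kmon-struct} (derived from the Kopotun--Leviatan result \cite{KonL04,KonL07}) that a bounded $k$-monotone density admits an $(\eps,O(\eps^{-1/k}))$-piecewise degree-$(k-1)$ approximation, and then plug this into the general learner Theorem~\ref{thm:main2}/\ref{thm:main-detail} to obtain the stated sample complexity. Your explicit invocation of Lemma~\ref{lem:mix} is harmless but unnecessary, since Theorem~\ref{thm:main-detail} is already stated for mixtures.
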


\noindent The above theorem follows as a corollary of Theorem~\ref{thm:main2} and the following structural result:

\begin{lemma}[{Implicit in }\cite{KonL04, KonL07}] \label{lem:kmon-struct}
Let $f:[0,1] \to \R_{+}$ be a $k$-monotone density such that 
$\sup_{x} |f(x)| = O(1)$. There exists an $(\eps, t)$-piecewise degree-$(k-1)$
approximation of $f$ with $t = O(\eps^{1/k})$.
\end{lemma}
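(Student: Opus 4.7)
The plan is to derive the lemma from the shape-preserving piecewise polynomial approximation results of Kopotun and Leviatan in \cite{KonL04, KonL07}, which are tailored precisely to the setting of $k$-monotone functions on a compact interval. The core output of their machinery that I would invoke is the following: for every integer $n \geq 1$ and every bounded $k$-monotone $f:[0,1]\to \R_+$, there is a partition of $[0,1]$ into $n$ subintervals and a $k$-monotone piecewise polynomial $q$ of degree $k-1$ on that partition, with
\[ \|f - q\|_1 \;\leq\; C(k)\cdot \|f\|_\infty \cdot n^{-k}. \]
Setting $n = \lceil (C(k)\|f\|_\infty/\eps)^{1/k}\rceil = O(\eps^{-1/k})$ then makes this $L_1$ error at most $\eps$.

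Given such a $q$, the remainder is routine. Because $q$ is $k$-monotone it is in particular non-negative, so I can renormalize by $\tilde q := q/\int q$. Since $|\int q - 1| = |\int q - \int f| \leq \|q-f\|_1 \leq \eps$, this renormalization perturbs $q$ by only $O(\eps)$ in $L_1$; hence $\tilde q$ is a bona fide density which is still $O(\eps)$-close to $f$ in $L_1$, and via the identity $\dtv(f,\tilde q) = \tfrac12\|f-\tilde q\|_1$ also $O(\eps)$-close in total variation. As $\tilde q$ is a $t$-piecewise degree-$(k-1)$ density with $t = n = O(\eps^{-1/k})$ pieces, this establishes the lemma (absorbing the constants into the $O(\eps)$).

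The main obstacle is extracting an unconditional $L_1$ rate of $n^{-k}$ from the Kopotun--Leviatan framework using only the boundedness hypothesis, since their theorems are customarily phrased in terms of the $k$-th Ditzian--Totik modulus $\omega_k^{\varphi}(f,\cdot)$ with weight $\varphi(x)=\sqrt{x(1-x)}$. The key observation making this go through is that $k$-monotonicity itself provides the needed smoothness: the derivatives $f^{(j)}$ for $j \leq k-2$ exist, alternate in sign, and are each monotone, so they can be controlled in $L_1$ by $\|f\|_\infty$ on intervals bounded away from the endpoints, which bounds the Ditzian--Totik modulus of order $k$ by $O(n^{-k})$. The endpoint behaviour introduced by $\varphi$ is absorbed by choosing the breakpoints to be Chebyshev-spaced (so that their widths shrink like $1/n^2$ near $0$ and $1$), which is exactly the partition used in the Kopotun--Leviatan construction. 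Once this connection is made, the $L_1$-estimate quoted above follows from their pointwise Jackson-type bounds by integration, and the argument sketched in the previous paragraphs completes the proof.
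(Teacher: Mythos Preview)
Your overall plan --- deduce the lemma from the Kopotun--Leviatan piecewise polynomial approximation results, then renormalize to get a genuine density --- is exactly what the paper does. However, the paper's invocation of \cite{KonL07} is considerably more direct than yours. The paper quotes Theorem~1 of \cite{KonL07} in the form: for $f \in \Delta^s_+ W^r_1(I)$ there is an $n$-piece degree-$(\nu-1)$ piecewise polynomial $h$ with $\|h-f\|_1 = O(n^{-\max\{r,s\}})$, whenever $\nu \geq \max\{r,s\}$. One then simply takes $s=k$, $r=1$, $\nu=k$, and checks two things: (i) $\|f'\|_1 = f(0)-f(1) = O(1)$, which is immediate since $f$ is non-increasing and bounded; and (ii) the $k$-th finite difference $\Delta^k_\tau f$ has constant sign, which follows from convexity of $(-1)^{k-2} f^{(k-2)}$ (for odd $k$ one applies the theorem to $-f$). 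Nothing more is needed --- in particular, no control of higher derivatives in $L_1$ and no discussion of the Ditzian--Totik modulus.

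Your final paragraph, by contrast, tries to extract the $n^{-k}$ rate by bounding $\omega_k^\varphi(f,\cdot)$ via the derivatives $f^{(j)}$ for $j\le k-2$. This is both unnecessary and not clearly correct as sketched: the $k$-th order modulus naturally wants $k$-th order information, and merely controlling $f^{(j)}$ for $j \le k-2$ in $L_1$ away from the endpoints does not obviously yield $\omega_k^\varphi(f,1/n)_1 = O(n^{-k})$. The whole point of the Kopotun--Leviatan theorem is that the shape constraint $\Delta^k_+$ combined with only $W^1_1$ regularity already gives the rate $n^{-k}$; you should cite that conclusion directly rather than attempt to rederive it through the modulus. (Your renormalization step to turn the approximator into an actual density is a nice touch that the paper glosses over.)
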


\noindent {As we now explain} 
the above lemma can be deduced from recent work in approximation 
theory~\cite{KonL04, KonL07}. To state the relevant
theorem we need some terminology: Let $s \in \Z_+$, and for a real function 
$f$ over interval $I$, let 
$\Delta^s_{\tau}f(t) = \littlesum_{i=0}^s (-1)^{s-i}\binom{s}{i}f(t+i\tau)$ 
be the $s$th difference of the function $x$ with step $\tau>0$, where
$[t, t+s\tau] \subseteq I.$ For $r \in \Z_{+}^{\ast}$, let $W^r_1 (I)$ 
be the set of real functions $f$ over $I$ that are absolutely continuous 
in every compact subinterval of $I$ and satisfy $\|f^{(r)}\|_1 = O(1).$ 
We denote by $\Delta^s_+ W^r_1 (I)$ the subset of functions $f$ in $W^r_1(I)$ 
that satisfy $\Delta^s_{\tau}f(t) \ge 0$ for all $\tau>0$ such 
that $[t, t+s\tau] \subseteq I.$ (Note that if $f$ is $s$-times 
differentiable the latter condition is tantamount to
saying that $f^{(s)} \ge 0$.) We have the following:

\begin{theorem}[Theorem 1 in~\cite{KonL07}] \label{thm:kl}
Let $s \in \Z_{+}$, $r, \nu, n \in \Z_{+}^{\ast}$ such that $\nu \ge \max\{ r,s \}$. 
For any $f \in \Delta^s_+ W^r_1(I) $ there exists a piecewise degree-$(\nu-1)$ polynomial approximation $h$ to $f$ with $n$ pieces such that 
$\|h-f\|_1  = O(n^{-{\max\{r, s\}}}).$ 
\end{theorem}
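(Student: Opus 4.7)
The plan is to prove the theorem by constructing an adaptive partition of the interval $I$ and then producing a degree-$(\nu-1)$ polynomial approximation on each piece that respects the shape constraint $\Delta^s_+$ locally. Since the target rate is $\|h-f\|_1 = O(n^{-\max\{r,s\}})$, and we are allowed $n$ pieces, the per-piece $L_1$ error must be $O(n^{-\max\{r,s\}-1})$ on subintervals of average length $1/n$. The overall argument splits naturally into a smoothness component (driven by $r$) and a shape component (driven by $s$), and the bound $n^{-\max\{r,s\}}$ will come out of whichever component dominates.

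First, I would handle the unconstrained smoothness estimate. For any interval $J \subseteq I$ and any $g \in W_1^r(J)$, a Whitney-type (Jackson) inequality gives a polynomial $q_J$ of degree $\nu-1 \geq r-1$ with $\|g - q_J\|_{L_1(J)} \leq C |J|^r \|g^{(r)}\|_{L_1(J)}$. Partitioning $I$ into $n$ intervals $J_1,\dots,J_n$ of length $\Theta(1/n)$ and summing gives total error $O(n^{-r} \|f^{(r)}\|_{L_1(I)}) = O(n^{-r})$, which already handles the regime $r \geq s$, modulo the shape constraint. For the regime $s > r$, I would use that $f \in \Delta^s_+$ already carries $s$-fold ``monotonicity'' which alone yields an unconstrained piecewise polynomial rate of $O(n^{-s})$: a classical piecewise Taylor argument on any $C^{s-1}$ portion, combined with Marchaud-type bounds for the $s$-th modulus of smoothness of functions with $\Delta_\tau^s f \geq 0$, delivers this rate on a single piece, after which one sums over an adaptive partition that equidistributes the local $s$-th moduli.

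The hard part is ensuring the piecewise polynomial $h$ itself lies in $\Delta^s_+$, not merely that $\|h-f\|_1$ is small. On each subinterval $J_i$ the natural candidate (a truncated Taylor polynomial or best $L_1$ polynomial) need not satisfy $\Delta_\tau^s h \geq 0$. To fix this I would follow the Kopotun--Leviatan strategy: instead of using an arbitrary approximant, use a \emph{shape-preserving} quasi-interpolant built from a positive linear operator (for example a modified Bernstein-type operator, or the operators $\mathcal{K}_n^{(s)}$ used in their series of papers on $s$-monotone approximation), and then perturb it by a low-order correction polynomial to recover the $W_1^r$-rate without destroying shape. Concretely, on each $J_i$ I would (i) produce a degree-$(\nu-1)$ polynomial $q_i$ with $\Delta^s_+ q_i \geq 0$ that matches $f$ to the right order via the shape-preserving operator, and (ii) control the per-interval error by simultaneously using $\|f^{(r)}\|_{L_1(J_i)}$ and the local $s$-th differences, taking the better of the two.

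Finally, I would assemble the pieces into a global $h$. Because the $s$-monotonicity condition $\Delta_\tau^s h \geq 0$ is a condition on windows of length $s\tau$ that may straddle several breakpoints, it is \emph{not} automatic that shape preservation piece-by-piece yields global shape preservation; this is the main technical obstacle. I would address it by (a) choosing the breakpoints so that consecutive pieces share matching values of $h, h', \dots, h^{(s-1)}$ (forcing enough smoothness across breakpoints that the $s$-th difference inequality reduces to the per-piece conditions), or alternatively (b) using a small ``transition'' construction near each breakpoint, where one replaces $h$ by a convex combination of the two neighboring polynomials on a sub-piece of length $o(1/n)$, whose error is absorbed into the $O(n^{-\max\{r,s\}})$ budget. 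Either approach requires a careful accounting of how the $s$-th difference operator interacts with piecewise-defined functions, and the book-keeping for $s \geq 3$ is where most of the work lies; this is precisely where the full machinery of \cite{KonL04, KonL07} is needed.
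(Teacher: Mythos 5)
First, a point of reference: the paper does not prove this statement at all --- it is quoted verbatim as Theorem~1 of \cite{KonL07} and used as a black box (the only in-paper content is the short derivation of Lemma~\ref{lem:kmon-struct} from it). So there is no in-paper argument to compare against; your proposal has to stand on its own as a reconstruction of the Kopotun--Leviatan result.

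On its own terms, the proposal has two problems. The larger one is that you have organized the entire argument around a requirement that the statement does not impose: the theorem asks only that $h$ be a piecewise polynomial with $n$ pieces approximating $f$ in $L_1$; it does \emph{not} ask that $h$ itself lie in $\Delta^s_+$. The shape hypothesis is on $f$, and its role is to \emph{improve the rate} of unconstrained approximation from $n^{-r}$ to $n^{-\max\{r,s\}}$, not to constrain the approximant. Consequently, what you identify as ``the hard part'' and ``the main technical obstacle'' (shape-preserving quasi-interpolants, matching derivatives across breakpoints, transition pieces so that $\Delta^s_\tau h\ge 0$ survives windows straddling knots) is work the theorem does not require --- and it displaces the step that actually carries the theorem. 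That step is your one-sentence claim that $\Delta^s_+$ membership ``alone yields an unconstrained piecewise polynomial rate of $O(n^{-s})$'' via ``Marchaud-type bounds... after which one sums over an adaptive partition that equidistributes the local $s$-th moduli.'' In the interesting regime $s>r$ (which is exactly the regime the paper uses: $r=1$, $s=k$) this \emph{is} the theorem: with only $\|f^{(r)}\|_1=O(1)$, no fixed-knot scheme can achieve $n^{-s}$, so everything hinges on a free-knot construction and on a quantitative statement of how $s$-monotonicity controls the local degree of polynomial approximation (roughly, a bound on $\sum_i E_{\nu-1}(f,J_i)_1$ over an adaptively chosen partition in terms of global quantities such as $\|f'\|_1$ or $\mathrm{Var}(f)$, with the exponent $s$ emerging from the one-sided $s$-th differences). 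You assert this bound rather than prove it, and the Whitney/Jackson estimate you do write down ($\|g-q_J\|_{L_1(J)}\le C|J|^r\|g^{(r)}\|_{L_1(J)}$ on equispaced pieces) only recovers the $n^{-r}$ rate, i.e.\ the easy half $r\ge s$. As written, the proposal therefore proves the easy case and postpones the essential case to an unproved lemma, while spending its technical effort on a shape-preservation property that the statement (and the application in Section~\ref{sec:applic}) never uses.
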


\noindent (In fact, it is shown in~\cite{KonL07} that the above bound is 
quantitatively optimal up to constant factors.)
Let $f: [0, 1] \to \R_{+}$ be a $k$-monotone density such that 
$\sup |f| = O(1).$ It is easy to see that Lemma~\ref{lem:kmon-struct} 
follows from Theorem~\ref{thm:kl} for the following setting of 
parameters: $s=k$, $r=1$ and $\nu = \max \{r, s\} = k.$ Indeed, 
since $(-1)^{k-2} f^{(k-2)}$ is convex, it follows
that $\Delta^k_{\tau}f(t)$ is nonnegative for even $k$ and nonpositive for 
odd $k$.

Since $f$ is a non-increasing bounded density, it is clear that 
$\|f'\|_1 = |\int_{{0}}^1 f'(t)dt| =  f(0) - f(1)  = O(1).$ Hence, for even $k$ 
Theorem~\ref{thm:kl} is applicable to $f$ and yields 
Lemma~\ref{lem:kmon-struct}.
For odd $k$, Lemma~\ref{lem:kmon-struct} follows by applying Theorem~\ref{thm:kl} to the function {$-f$}.

\ignore{
\inote{ The reason I am phrasing the above theorem like that is because it is 
not clear what the right dependence on ($H = $ max value of $f$) really is. 
It would be nice to get the right dependence on the maximum value $H$. It 
should be $(\log H)^{1/k}$. At least this is the case for monotone (see 
discussion above) and agrees with the nonconstructive cover bound. If we do, 
then we can crap on kernel methods, which get linear in $H$ even for the 
monotone case (this is also mentioned in Birge's paper and the DL book).}
}

\ignore{
\inote{The above sample complexity is tight for $k=1, 2$. This follows from Birge for $k=1$ and for $k=2$ from~\cite{DL:01}.
We conjecture it is tight for all $k$.}
}

\subsubsection{{Mixtures of Univariate Gaussians.}}
\label{sec:mix-gauss}

As a final example {illustrating} the power and generality of {Theorem~\ref{thm:main2}},
we now show how it very easily yields a computationally efficient
and essentially optimal (up to logarithmic factors) sample complexity 
algorithm for learning mixtures of $k$ univariate Gaussians.
As will be evident from the proof, similar results could be obtained
via our techniques for a wide range of mixture distribution learning problems
for different types of parametric univariate distributions
beyond Gaussians.

\begin{lemma} \label{lem:mixGauss}
Let $p=N(\mu,\sigma^2)$ be a univariate Gaussian.  Then $p$ is an
$(\eps,3)$-piecewise degree-$d$ distribution for $d = O(\log(1/\eps)).$
\end{lemma}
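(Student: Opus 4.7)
The plan is to realize $p$ as a 3-piecewise distribution by setting it to zero on the two tails and approximating the Gaussian density by a single low-degree polynomial on its ``bulk.'' Concretely, I would take $R = \Theta(\sqrt{\log(1/\eps)})$ and $I = [\mu - R\sigma, \mu + R\sigma]$, so that standard Gaussian tail bounds give $\Pr_{X \sim p}[X \notin I] \leq \eps/10$; the two outer intervals will carry the zero polynomial (which is degree-$d$ for any $d$).

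To construct the middle piece, I would change variables $u = (x-\mu)/\sigma \in [-R, R]$ and truncate the Maclaurin series of $g(u) = e^{-u^2/2}$ at degree $2K$. For $K + 1 \geq R^2/2$ the tail of the series is alternating with monotonically decreasing terms, so the remainder at $|u| \leq R$ is bounded by $(R^2/2)^{K+1}/(K+1)!$; Stirling's formula shows that choosing $K = O(R^2 + \log(R/\eps)) = O(\log(1/\eps))$ drives this bound below $\eps/(10R)$. Rescaling by the Gaussian normalizer yields a polynomial $\tilde p$ of degree $d = 2K = O(\log(1/\eps))$ such that $\|p - \tilde p\|_{L^\infty(I)} \leq \eps/(10R\sigma)$, and integrating gives $\int_I |p - \tilde p|\, dx \leq \eps/5$.

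Since $\tilde p$ need not integrate to $1$ on $I$, I would renormalize, setting $q = \tilde p / c$ on $I$ and $q = 0$ on the tails, where $c = \int_I \tilde p \in [1 - \eps/3, 1]$. The rescaling adds only $O(\eps)$ additional $L^1$ error, and combined with the tail mass of $p$ outside $I$ and the uniform approximation bound one obtains $\dtv(p, q) \leq \eps$ after tuning the absolute constants in $R$ and $K$. The resulting $q$ is a valid 3-piecewise degree-$O(\log(1/\eps))$ distribution.

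The main obstacle is ensuring that $\tilde p$ (and hence $q$) is \emph{nonnegative} throughout $I$, since $p$ decays to magnitude $\Theta(\eps/\sigma)$ at the endpoints of $I$, which is only a constant factor larger than our uniform approximation error. The saving grace is unimodality of the Gaussian: $p(x) \geq p(\mu \pm R\sigma) = \Theta(\eps/\sigma)$ for every $x \in I$, and the error $\eps/(10R\sigma)$ is strictly smaller for any $R \geq 1$, so $\tilde p(x) \geq p(x) - \eps/(10R\sigma) > 0$ on $I$ as required. (Should one prefer to avoid this pointwise check, one can instead approximate a slight upward shift $p + \eps/(R\sigma)\cdot \mathbf{1}_I$ and renormalize, at the cost of larger absolute constants.)
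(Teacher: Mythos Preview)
Your proposal is correct and follows essentially the same approach as the paper: three pieces with zero on the tails and a degree-$O(\log(1/\eps))$ Taylor polynomial on the bulk $[\mu-R\sigma,\mu+R\sigma]$ with $R=\Theta(\sqrt{\log(1/\eps)})$. The only differences are cosmetic---the paper bounds the Taylor remainder via Hermite-polynomial estimates on $p^{(d+1)}$ rather than your alternating-series argument, and it skips the nonnegativity/normalization step that you (correctly) carry out.
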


{Since Theorem~\ref{thm:main-detail} is easily seen to extend to
{semi}-agnostic learning of $k$-mixtures of $t$-piecewise
degree-$d$ distributions,}
Lemma~\ref{lem:mixGauss} 
immediately gives the following {semi}-agnostic learning
result for mixtures of $k$ one-dimensional Gaussians:
\ignore{
\rnote{Strictly speaking Theorem~\ref{thm:main-detail}
does not quite give this.  Theorem~\ref{thm:main-detail} assumes
that the target is actually a $k$-mixture of $(\eps,t)$-piecewise
degree-$d$ distributions, so it would apply
to mixtures of Gaussians.  For the quasi-agnostic statement below,
where the target is only *close* to a mixture of Gaussians,
one would need a small bit of additional argumentation.  
I guess the way to do this would be to say that 
if the target distribution $q$ is $\eps$-close
to a $k$-mixture of $(\eps,t)$-piecewise
degree-$d$ distributions, then in fact it is
exactly a $k$-mixture of $(2\eps,t)$-piecewise
degree-$d$ distributions.  (This is true, right?)  What is the best
way to handle this expositionally:  with a note after the statement
of Theorem~\ref{thm:main-detail} explaining this, or dealing
with it here?  

I guess it's best to do it near the statement
of Theorem~\ref{thm:main-detail}, and then say there once and for all
that all the learning results we obtain with our methods are ``robust'' --
i.e. when we state a result for learning a particular class 
$\mathfrak{C}$ of distributions (like
mixtures of $t$-modal distributions or mixtures of Gaussians or whatever)
to accuracy $O(\eps)$,
actually the algorithm learns any distribution that is $\eps$-close
to the class.  We may even want to introduce a specific term 
``robust learning'' or something like that for this concept, 
and use it for all the applications.  Let me know what you think.}

}

\begin{theorem} \label{cor:mixGauss}
Let $p$ be any distribution that has $\dtv(p,q) \leq \eps$ where $q$
is any one-dimensional mixture of $k$ Gaussians.
There is a $\poly(k,1/\eps)$-time algorithm that uses $\tilde{O}(k/\eps^2)$
samples and with high probability outputs a hypothesis $h$
such that $\dtv(h,p) \leq O(\eps).$
\end{theorem}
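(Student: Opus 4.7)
The plan is to combine the structural result of Lemma~\ref{lem:mixGauss} with the semi-agnostic guarantee of Theorem~\ref{thm:main-detail}. Recall that for any pdf $p$ over $[-1,1)$, Theorem~\ref{thm:main-detail} uses $\tilde O(t(d+1)/\eps^2)$ samples and outputs a hypothesis $h$ with $\dtv(p,h) \leq 4\opt_{t,d}(1+\eps) + O(\eps)$, where $\opt_{t,d}$ is the variation distance from $p$ to its closest $t$-piecewise degree-$d$ distribution. Thus it suffices to exhibit a $(3k)$-piecewise degree-$d$ distribution that is $O(\eps)$-close to $p$, for $d = O(\log(1/\eps))$.

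Write $q = \sum_{i=1}^k \mu_i p_i$ for the reference $k$-mixture of Gaussians $p_i = N(\mu_i,\sigma_i^2)$, which by hypothesis satisfies $\dtv(p,q) \leq \eps$. Applying Lemma~\ref{lem:mixGauss} to each $p_i$ with accuracy $\eps$ yields degree $d = O(\log(1/\eps))$ and a $3$-piecewise degree-$d$ distribution $\tilde p_i$ with $\dtv(p_i,\tilde p_i) \leq \eps$. By Lemma~\ref{lem:mix}, the mixture $\tilde q := \sum_i \mu_i \tilde p_i$ is a $(3k)$-piecewise degree-$d$ distribution, and
\[
\dtv(q,\tilde q) \leq \littlesum_i \mu_i\, \dtv(p_i,\tilde p_i) \leq \eps.
\]
Hence by the triangle inequality, $\opt_{3k,d} \leq \dtv(p,\tilde q) \leq 2\eps$.

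Applying Theorem~\ref{thm:main-detail} with $t = 3k$ and $d = O(\log(1/\eps))$ then yields a hypothesis $h$ using
\[
\tilde O\!\left(\frac{t(d+1)}{\eps^2}\right) \;=\; \tilde O\!\left(\frac{k\log(1/\eps)}{\eps^2}\right) \;=\; \tilde O(k/\eps^2)
\]
samples in $\poly(k,1/\eps)$ time, with
\[
\dtv(p,h) \;\leq\; 4\opt_{3k,d}(1+\eps) + O(\eps) \;\leq\; 8\eps(1+\eps) + O(\eps) \;=\; O(\eps),
\]
which is the claimed bound.

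The one technical point I expect to be the main obstacle is that Theorem~\ref{thm:main-detail} is stated for distributions on the bounded interval $[-1,1)$, whereas a Gaussian mixture is supported on all of $\R$. This is handled by a standard preprocessing step: use $\tilde O(1/\eps)$ additional samples from $p$ together with empirical quantiles to locate an interval $I=[A,B]$ capturing all but an $O(\eps)$ fraction of the mass of $p$ (this is easy because any Gaussian mixture has exponentially decaying tails, so the empirical $\eps$-quantiles concentrate sharply around true ones), then linearly rescale $I$ to $[-1,1)$. Variation distances, piecewise structure, and polynomial degrees are preserved by this rescaling; the truncated tails contribute only an additive $O(\eps)$ to the final error, which is absorbed into the bound above.
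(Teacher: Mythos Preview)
Your proof is correct and follows essentially the same route as the paper, which simply observes that Lemma~\ref{lem:mixGauss} combined with the semi-agnostic guarantee of the main algorithm immediately yields the result (the paper gives no further detail). You are in fact more careful than the paper in flagging and handling the unbounded-domain issue; the only minor slip is that the semi-agnostic guarantee for an \emph{arbitrary} pdf $p$ that you invoke is stated as Theorem~\ref{thm:no-wb} rather than Theorem~\ref{thm:main-detail}, though the content is the same.
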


\noindent {It is straightforward to show that $\Omega(k/\eps^2)$ samples 
are information-theoretically necessary for learning a mixture of $k$
Gaussians, and thus our sample complexity is optimal up
to logarithmic factors.}

\medskip

\noindent {\bf Discussion.}   Moitra and Valiant \cite{MoitraValiant:10}
recently gave an algorithm for \emph{parameter estimation} (a stronger
requirement than the density estimation guarantees that we provide)
of any mixture of $k$ \emph{$n$-dimensional} Gaussians.  Their
algorithm has sample complexity that is exponential in $k$,
and indeed {they} prove that any algorithm that does
parameter estimation even for a mixture of $k$ one-dimensional Gaussians
must use $2^{\Omega(k)}$ samples.  In contrast, our result shows that
it is possible to perform \emph{density estimation} for any
mixture of $k$ one-dimensional Gaussians with a computationally
efficient algorithm that uses \emph{exponentially fewer}
(linear in $k$) samples than are required for parameter 
estimation.  Moreover,
unlike the parameter estimation results of \cite{MoitraValiant:10},
our density estimation algorithm is {semi}-agnostic:  it
succeeds even if the target distribution is $\eps$-far 
from a mixture of Gaussians.

\medskip

\noindent
{\bf Proof of Lemma~\ref{lem:mixGauss}:}
Without loss of generality we may take $p$ to be the standard Gaussian
$N(0,1),$ which has pdf
$p(x) = {\frac 1 {\sqrt 2 \pi}} e^{-x^2/2}.$
Let $I_1 = (-\infty,-C \sqrt{\log(1/\eps)}),$ $I_2 = 
[-C \sqrt{\log(1/\eps)},C\sqrt{\log(1/\eps)})$ and $I_3 = 
[C \sqrt{\log(1/\eps)},\infty)$
where $C>0$ is an absolute constant.
We define the distribution $q$ as follows:  $q(x)=0$ for all $x \in 
I_1 \cup I_3$, and $q(x)$ is given by the degree-$d$ Taylor expansion
of $p(x)$ about 0 for $x \in I_2$, where $d=O(\log(1/\eps)).$  
Clearly $q$ is a 3-piecewise degree-$d$ polynomial.
To see that $\dtv(p,q) \leq \eps$, we first observe that by a standard
Gaussian tail bound the regions $I_1$ and $I_3$ contribute at most $\eps/2$
to $\dtv(p,q)$ so it suffices to argue that
\begin{equation} \label{eq:int}
\int_{I_2} |p(x)-q(x)| dx \leq 
\eps/2.
\end{equation}

Fix any $x \in I_2$. Taylor's theorem gives that $|p(x)-q(x)| \leq 
p^{(d+1)}(x') x^{d+1}/(d+1)!$ for some $x' \in [0,x].$
Recalling that the $(d+1)$-st derivative $p^{(d+1)}(x')$ of the pdf of the
standard Gaussian equals
$H_{d+1}(x')p(x')$, where $H_{d+1}$ is the Hermite polynomial of
order $d+1$, standard bounds on the Hermite polynomials 
together with the fact that $|x| \leq C \sqrt{\log(1/\eps)}$ 
give that for $d = O(\log {\frac 1 \eps})$ we have
$|p(x) - q(x)| \leq \eps^2$ for all $x \in I_2$.
This gives the lemma.
\qed

\subsection{Learning discrete distributions.} \label{sec:discrete}

For convenience in this subsection we consider discrete distributions
over the $2N$-point finite domain 
\[
D := \left\{ - {\frac N N}, - {\frac {N-1} N}, \dots, - {\frac 1 N},
0, {\frac 1 N}, \dots, {\frac {N-1} N}\right\}.
\]
We say that a discrete distribution $q$ over domain $D$ is \emph{$t$-flat}
if there exists a partition of $D$ into $t$ intervals $I_1,\dots,I_t$
such that $q(i)=q(j)$ for all $i,j \in I_\ell$ for all $\ell=1,\dots,t.$
We say that a distribution $p$ over $D$ is 
\emph{$(\eps,t)$-flat} if $\dtv(p,q) \leq \eps$ for some
distribution $q$ over $D$ that is $t$-flat.

We begin by giving a simple reduction from learning $(\eps,t)$-flat
distributions over $D$ to learning $(\eps,t)$-piecewise
degree-0 distributions over $[-1,1].$
Together with Theorem~\ref{thm:main-detail} this reduction gives us an 
essentially optimal algorithm for learning discrete $(\eps,t)$-flat 
distributions (see Theorem~\ref{thm:opt-discrete}).
We then apply Theorem~\ref{thm:opt-discrete} to obtain highly efficient 
algorithms (in {most} cases with provably near-optimal sample complexity)
for various specific classes of discrete distributions {essentially resolving a number of open problems from previous works}.  

\subsubsection{A reduction from discrete to continuous.} 
\label{sec:disc-cont-red} 

Given a discrete distribution $p$ over $D$, we define
$\tilde{p}$ to be the distribution
over $[-1,1)$ defined as follows:  a draw from $\tilde{p}$ is obtained
by drawing a value $i/N$ from $p$, and then outputting $i + x/N$
where $x$ is distributed uniformly over $[0,1).$
It is easy to see that if distribution $p$ (over domain $D$)
is $t$-flat, then the distribution $\tilde{p}$ (over domain
$[-1,1)$) is $t$-piecewise degree-0.  Moreover, if
$p$ is $\tau$-close to some $t$-flat distribution $q$ over $D$,
then $\tilde{p}$ is $\tau$-close to $\tilde{q}$.

In the opposite direction, for $p$ a distribution over $[-1,1)$
we define $p^\ast$ to be the following distribution supported on $D$:
a draw from $p^\ast$ is obtained by sampling $x$ from $p$ and
then outputting the value obtained by rounding $x$ down to the next integer
multiple of $1/N.$
It is easy to see that if $p,q$ are distributions over $[-1,1)$ then
$\dtv(p,q)=\dtv(p^\ast,q^\ast).$
It is also clear that for $p$ a distribution over $D$ we have
$(\tilde{p})^\ast = p.$

With these relationships in hand, we may learn a $(\tau,t)$-flat
distribution $p$ over $D$ as follows:  run Algorithm
{\tt Learn-Piecewise-Poly}$(t,d=0,\eps)$ on the distribution
$\tilde{p}$.  Since $p$ is $(\tau,t)$-flat, $\tilde{p}$ is $\tau$-close
to some $t$-piecewise degree-0 distribution $q$ over $[-1,1)$, so
the algorithm with high probability constructs a hypothesis
$h$ over $[-1,1)$ such that 
$\dtv(h,\tilde{p}) \leq O(\tau + \eps)$.
The final hypothesis is $h^\ast$; for this hypothesis we have 
\[
\dtv(h^\ast,p) = \dtv(h^\ast,(\tilde{p})^\ast)=
\dtv(h,\tilde{p}) \leq O(\tau + \eps)
\]
as desired.

The above discussion and Theorem~\ref{thm:main-detail}
together give the following:

\begin{theorem} \label{thm:opt-discrete}
Let $p$ be a mixture of $k$ $(\tau,t)$-flat discrete distributions over $D$.
There is an algorithm which uses $\tilde{O}(kt/\eps^2)$ samples from $p$,
runs in time {$\poly(k, t, 1/\eps)$}, 
and with probability at least $9/10$ outputs a hypothesis distribution
$h$ over $D$ such that $\dtv(p,h) \leq O(\eps + \tau).$
\end{theorem}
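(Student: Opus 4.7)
The plan is to lift the problem from the discrete domain $D$ to the continuous domain $[-1,1)$ via the $\widetilde{(\cdot)}$ operation defined in Section~\ref{sec:disc-cont-red}, apply Theorem~\ref{thm:main-detail} with $d=0$, and then push the resulting hypothesis back down to $D$ via the $(\cdot)^\ast$ operation.

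First I would verify that the reduction handles mixtures correctly. Write $p = \sum_{i=1}^k \mu_i p_i$ where each $p_i$ is a $(\tau,t)$-flat discrete distribution over $D$. The operation $p \mapsto \tilde p$ is linear in the sense that $\tilde p = \sum_{i=1}^k \mu_i \widetilde{p_i}$, because drawing from $\tilde p$ amounts to first drawing $i/N$ from $p$ (equivalently, choosing component $i$ with probability $\mu_i$ and drawing from $p_i$) and then spreading it uniformly over $[i/N, (i+1)/N)$. Since each $p_i$ is $\tau$-close to some $t$-flat distribution $q_i$ over $D$, by the last observation of Section~\ref{sec:disc-cont-red} each $\widetilde{p_i}$ is $\tau$-close to $\widetilde{q_i}$, and $\widetilde{q_i}$ is a $t$-piecewise degree-0 distribution over $[-1,1)$. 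Thus $\tilde p$ is a $k$-mixture of $(\tau,t)$-piecewise degree-0 distributions over $[-1,1)$.

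Next I would simulate sample access to $\tilde p$ using sample access to $p$: each draw $i/N$ from $p$ is turned into a draw from $\tilde p$ by adding $U/N$ where $U$ is an independent uniform $[0,1)$ random variable. Feeding these simulated samples into the algorithm of Theorem~\ref{thm:main-detail}, with parameters $k$, $t$, $d=0$, and accuracy $\eps$, uses $\tilde{O}(kt/\eps^2)$ samples, runs in $\poly(k,t,1/\eps)$ time, and with probability at least $9/10$ outputs a $(2kt-1)$-piecewise degree-0 hypothesis $h$ over $[-1,1)$ satisfying $\dtv(h,\tilde p) \leq O(\tau + \eps)$.

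Finally I would output $h^\ast$. Using the facts (from Section~\ref{sec:disc-cont-red}) that $\dtv(f^\ast, g^\ast) = \dtv(f,g)$ for any pair of distributions on $[-1,1)$ and that $(\tilde p)^\ast = p$, we get
\[
\dtv(h^\ast, p) \;=\; \dtv(h^\ast, (\tilde p)^\ast) \;=\; \dtv(h, \tilde p) \;\leq\; O(\tau + \eps),
\]
as required. The only mild subtlety (not really an obstacle) is that the piecewise-constant hypothesis $h$ may have breakpoints that do not fall on integer multiples of $1/N$; this is harmless because the rounding step defining $h^\ast$ simply aggregates the probability that $h$ assigns to each interval $[j/N,(j+1)/N)$ into the atom $j/N$, and the equality $\dtv(h^\ast,(\tilde p)^\ast)=\dtv(h,\tilde p)$ continues to hold for this rounding-down coupling.
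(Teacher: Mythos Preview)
Your argument is correct and follows exactly the paper's approach: lift $p$ to $\tilde p$, apply Theorem~\ref{thm:main-detail} with $d=0$, and project back via $(\cdot)^\ast$. The only quibble is that the asserted equality $\dtv(h^\ast,(\tilde p)^\ast)=\dtv(h,\tilde p)$ is in general only the inequality $\leq$ (the paper makes the same slip); this inequality follows from the data-processing inequality for $(\cdot)^\ast$ and is all the argument needs.
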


We note that this is essentially a stronger version of Corollary~3.1 (the 
main technical result) of \cite{CDSS13soda}, which gave a similar guarantee
but with an algorithm that required $O(kt/\eps^3)$ samples.
{We also remark that $\Omega(kt/\eps^2)$ samples 
are information-theoretically required to learn an arbitrary  $k$-mixture 
of $t$-flat distributions. Hence, our sample complexity is optimal up to logarithmic factors (even for the case $\tau = 0$).

We would also like to mention the relation of the above theorem to a recent work by Indyk, Levi and Rubinfeld~\cite{ILR12}.
Motivated by a database application, \cite{ILR12} consider the problem of learning a $k$-flat distribution over $[n]$
{\em under the $L_2$ norm} and give an efficient algorithm that uses $O(k^2 \log (n) / \eps^4)$ samples.
Since the total variation distance is a stronger metric, Theorem~\ref{thm:opt-discrete} immediately implies an improved sample bound of 
$\tilde{O}(k/\eps^2)$ for their problem.  
}

\subsubsection{Learning specific classes of discrete distributions.}~
\label{sec:disc-ap}

\medskip

\noindent {\bf Mixtures of $t$-modal discrete distributions.}
Recall that
a distribution over an interval $I = [a,b] \cap D$
is said to be \emph{unimodal} if there is a value $y \in I$ such that
its pdf is monotone non-decreasing on $I \cap [-1,y]$ and monotone
non-increasing on $I \cap (y,1)$.
For $t>1$, a distribution $p$ over $D$ is $t$-modal if
there is a partition of $D$ into $t$ intervals $I_1,\dots,I_t$ such that
the conditional distributions $p_{I_1},\dots,p_{I_t}$ are each unimodal.

In \cite{CDSS13soda,DDSVV13soda} (building on \cite{Birge:87b})
it is shown that every $t$-modal distribution over $D$ is
$(\eps,t \log(N)/\eps)$-flat.  By using this fact together with 
Theorem~\ref{thm:opt-discrete} in place of Corollary~3.1 of \cite{CDSS13soda},
we improve the sample complexity of the \cite{CDSS13soda} algorithm
for learning mixtures of $t$-modal distributions and obtain the following:

\begin{theorem} \label{thm:mix-tmodal}
For any $t \geq 1$, let $p$ be any $k$-mixture of $t$-modal
distributions over $D$.  There is an algorithm that
runs in time $\poly(k,t,\log N, 1/\eps)$, 
draws $\tilde{O}(kt \log(N)/\eps^3)$ samples from $p$, and with probability
at least $9/10$ outputs a hypothesis distribution $h$ such that
$\dtv(p,h) \leq \eps$.  
\end{theorem}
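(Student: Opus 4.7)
The plan is to reduce this to Theorem~\ref{thm:opt-discrete} by establishing that any $k$-mixture of $t$-modal distributions over $D$ is well approximated by a mixture of flat distributions with a mild number of pieces, and then invoking the sample-optimal learner for flat mixtures that we have already proved.

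First I would invoke the structural result from \cite{CDSS13soda, DDSVV13soda} (which itself builds on Birg\'e's oblivious decomposition \cite{Birge:87b}) stating that every $t$-modal distribution over the $2N$-point domain $D$ is $(\eps, O(t \log(N)/\eps))$-flat. Since our excerpt already cites this fact in the paragraph preceding the theorem, I would simply quote it; no new structural work is required. Note that this decomposition depends on the distribution (it must adapt to the locations of the modes), but existence is all we need because Theorem~\ref{thm:opt-discrete} handles discovery of the pieces via its dynamic program.

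Next I would combine the structural fact with closure under mixture. Writing $p = \sum_{i=1}^k \mu_i p_i$ where each $p_i$ is $t$-modal over $D$, the structural fact gives, for each $i$, a $t' := O(t\log(N)/\eps)$-flat distribution $q_i$ with $\dtv(p_i, q_i) \leq \eps$. By the triangle inequality applied componentwise (as in Lemma~\ref{lem:mix}), $p$ is $\eps$-close to the $k$-mixture $\sum_i \mu_i q_i$, which is itself a $k$-mixture of $t'$-flat distributions. Thus $p$ fits exactly into the hypothesis of Theorem~\ref{thm:opt-discrete} with parameters $k$, $t'$, and $\tau = \eps$.

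Finally I would apply Theorem~\ref{thm:opt-discrete} with these parameters. That theorem draws $\tilde O(k t' / \eps^2) = \tilde O(k t \log(N) / \eps^3)$ samples, runs in $\poly(k, t', 1/\eps) = \poly(k, t, \log N, 1/\eps)$ time, and produces a hypothesis $h$ with $\dtv(p, h) \leq O(\eps + \tau) = O(\eps)$. Rescaling $\eps$ by the hidden constant gives the stated $\dtv(p,h) \leq \eps$ bound. The only potential subtlety is bookkeeping the $\tau$-term from Theorem~\ref{thm:opt-discrete} and confirming that the ``semi-agnostic'' conclusion truly gives $O(\eps)$ total error rather than something worse; but since the structural approximation error $\tau$ and the target accuracy are both $\eps$ up to constants, this is immediate. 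There is no genuine obstacle here—the entire theorem is a direct corollary obtained by plugging a known structural result into our main discrete learning theorem, and the strength of the bound comes entirely from the improved $1/\eps^2$ (rather than $1/\eps^3$) dependence in Theorem~\ref{thm:opt-discrete}.
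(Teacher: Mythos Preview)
Your proposal is correct and follows essentially the same approach as the paper: invoke the structural $(\eps, O(t\log(N)/\eps))$-flat approximation for $t$-modal distributions from \cite{CDSS13soda,DDSVV13soda}, then plug directly into Theorem~\ref{thm:opt-discrete}. The only minor difference is that your explicit mixture-closure step via Lemma~\ref{lem:mix} is unnecessary, since Theorem~\ref{thm:opt-discrete} is already stated for $k$-mixtures of $(\tau,t)$-flat distributions; but this is harmless.
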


We note that an easy adaptation of Birg\'{e}'s lower bound
\cite{Birge:87} for learning monotone distributions
(see the discussion at the end of Section~5 of \cite{CDSS13soda}) gives
that any algorithm for learning a $k$-mixture of $t$-modal
distributions over $D$ must use
$\Omega(k t \log(N/(kt))/\eps^3)$ samples, and hence the
sample complexity bound of Theorem~\ref{thm:mix-tmodal} is optimal
up to logarithmic factors.
We further note that even the $t=1$ case of this result compares 
favorably with the main result of \cite{DDS:12kmodallearn}, which gave
an algorithm for learning $t$-modal distributions over $D$ that uses
$O(t \log(N)/\eps^3) + \tilde{O}(t^3/\eps^3)$ samples.  
The \cite{DDS:12kmodallearn} result gave an optimal bound only
for small settings of $t$, specifically 
$t = \tilde{O}((\log N)^{1/3})$, and gave a 
quite poor bound as $t$ grows large; for example, at $t=(\log N)^2$
the optimal bound would be $O((\log N)^3/\eps^3)$ but the
\cite{DDS:12kmodallearn} result only gives $\tilde{O}((\log N)^9/\eps^3).$  
In contrast,
our new result gives an essentially optimal bound (up to log factors
in the optimal sample complexity) for \emph{all} settings of $t$.

\medskip

\noindent {\bf Mixtures of monotone hazard rate distributions.}
Let $p$ be a distribution supported on $D$. The \emph{hazard rate}
of $p$ is the function
$H(i) \eqdef {\frac {p(i)}{\littlesum_{j \geq i}
p(j)}}$; if $\littlesum_{j \geq i} p(j) = 0$ then we say $H(i) = +\infty.$
We say that $p$ has \emph{monotone hazard rate} (MHR) if
$H(i)$ is a non-decreasing function over $D.$

\cite{CDSS13soda} showed that every MHR distribution over $D$ 
is $(\eps,O(\log(N/\eps)/\eps))$-flat.
Theorem~\ref{thm:opt-discrete} thus gives us the following:

\begin{theorem} \label{thm:MHR}
Let $p$ be any $k$-mixture of MHR 
distributions over $D$.  There is an algorithm that
runs in time $\poly(k,\log N, 1/\eps)$, 
draws $\tilde{O}(k \log(N)/\eps^3)$ samples from $p$, and with probability
at least $9/10$ outputs a hypothesis distribution $h$ such that
$\dtv(p,h) \leq \eps$.  
\end{theorem}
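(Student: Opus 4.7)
The plan is to combine the structural result from \cite{CDSS13soda} (which says that every MHR distribution over $D$ is $(\eps, O(\log(N/\eps)/\eps))$-flat) with Theorem~\ref{thm:opt-discrete} (our essentially optimal discrete-piecewise-constant learner for mixtures) and the mixture-closure argument behind Lemma~\ref{lem:mix}.

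First, I fix a target $k$-mixture $p = \sum_{j=1}^k \mu_j p_j$ where each $p_j$ is an MHR distribution over $D$. Applying the \cite{CDSS13soda} structural bound to each component gives a $t$-flat discrete distribution $q_j$ with $\dtv(p_j, q_j) \leq \eps/2$ and $t = O(\log(N/\eps)/\eps)$. By the discrete analogue of Lemma~\ref{lem:mix} (which is trivial: the common refinement of $k$ partitions into $t$ intervals is a partition into at most $kt$ intervals, and on each piece the mixture is still constant), the distribution $q := \sum_j \mu_j q_j$ is $(kt)$-flat; by the triangle inequality $\dtv(p,q) \leq \eps/2$. Hence $p$ is a $(\eps/2, kt)$-flat distribution over $D$ with $kt = O(k \log(N/\eps)/\eps)$.

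Next, I invoke Theorem~\ref{thm:opt-discrete} on $p$ with parameters $\tau = \eps/2$ and ``number of flat pieces'' equal to $kt = O(k\log(N/\eps)/\eps)$, and accuracy parameter $\eps$. This yields, with probability at least $9/10$, a hypothesis $h$ over $D$ with $\dtv(p,h) \leq O(\eps + \tau) = O(\eps)$. Rescaling $\eps$ by a constant gives the stated $\eps$ accuracy. The sample complexity is
\[
\tilde O\!\left(\frac{kt}{\eps^2}\right) \;=\; \tilde O\!\left(\frac{k \log(N/\eps)}{\eps \cdot \eps^2}\right) \;=\; \tilde O\!\left(\frac{k \log N}{\eps^3}\right),
\]
and the running time is $\poly(k,t,1/\eps) = \poly(k, \log N, 1/\eps)$, as claimed.

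There is no real obstacle: both the structural ingredient (piecewise-constant approximation of MHR densities from \cite{CDSS13soda}) and the algorithmic ingredient (Theorem~\ref{thm:opt-discrete}) are in place, and the only glue needed is the observation that mixing preserves piecewise-constancy up to multiplying the number of pieces by $k$. The only place where one must be a little careful is tracking the two $\eps$-budgets — the $\tau$ coming from the structural approximation and the $\eps$ coming from the learner — but since both appear additively in the final variation-distance bound it suffices to set each to a constant fraction of the target accuracy.
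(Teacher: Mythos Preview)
Your proposal is correct and follows essentially the same approach as the paper: combine the \cite{CDSS13soda} structural result (each MHR distribution is $(\eps, O(\log(N/\eps)/\eps))$-flat) with Theorem~\ref{thm:opt-discrete}. The only cosmetic difference is that Theorem~\ref{thm:opt-discrete} is already stated for $k$-mixtures of $(\tau,t)$-flat distributions, so you could apply it directly to $p$ without first invoking the mixture-closure argument yourself; your explicit use of the common-refinement step is redundant but harmless.
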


In \cite{CDSS13soda} it is shown that any algorithm to learn $k$-mixtures
of MHR distributions over $D$ must use $\Omega(k \log(N/k)/\eps^3)$
samples, so Theorem~\ref{thm:MHR} is essentially optimal in its
sample complexity.

\medskip

\noindent {\bf Mixtures of discrete log-concave distributions.}
A probability distribution $p$ over $D$ is said to be \emph{log-concave}
if it satisfies the following conditions:
(i) if $i < j < k \in D$ are such that $p(i)p(k)>0$ then
$p(j) > 0$; and (ii) $p(k/N)^2 \geq p((k-1)/N)p((k+1)/N)$ for all $k \in 
\{-N+1,\dots,-1,0,1,\dots,N-2\}.$

In \cite{CDSS13soda} it is shown that every log-concave distribution
over $D$ is $(\eps,O(\log(1/\eps))/\eps)$-flat.  
Hence Theorem~\ref{thm:opt-discrete} gives:

\begin{theorem} \label{thm:logconcave}
Let $p$ be any $k$-mixture of log-concave 
distributions over $D$.  There is an algorithm that
runs in time $\poly(k,1/\eps)$, 
draws $\tilde{O}(k /\eps^3)$ samples from $p$, and with probability
at least $9/10$ outputs a hypothesis distribution $h$ such that
$\dtv(p,h) \leq \eps$.  
\end{theorem}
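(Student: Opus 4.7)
The plan is essentially a two-line reduction: combine the structural result from \cite{CDSS13soda} about log-concave densities with the main discrete learning theorem (Theorem~\ref{thm:opt-discrete}). Concretely, I will first invoke the fact stated just above the theorem that every log-concave distribution over $D$ is $(\eps', O(\log(1/\eps')/\eps'))$-flat for any $\eps' > 0$. Setting $\eps' = \eps$ (up to constants), each component of the target $k$-mixture is $\eps$-close (in total variation distance) to some $t$-flat distribution with $t = O(\log(1/\eps)/\eps)$.

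Next, I would feed this into Theorem~\ref{thm:opt-discrete}, which is designed precisely to handle a $k$-mixture of $(\tau,t)$-flat distributions and produces a hypothesis $h$ with $\dtv(p,h) \leq O(\eps + \tau)$ using $\tilde{O}(kt/\eps^2)$ samples and $\poly(k,t,1/\eps)$ time. Plugging in $\tau = \eps$ and $t = O(\log(1/\eps)/\eps)$ gives sample complexity
\[
\tilde{O}\!\left(k \cdot \frac{\log(1/\eps)}{\eps} \cdot \frac{1}{\eps^2}\right) = \tilde{O}(k/\eps^3),
\]
and running time $\poly(k, 1/\eps)$ (since $\log(1/\eps)$ is absorbed into $\poly$). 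Rescaling $\eps$ by an appropriate constant so that the final error bound becomes exactly $\eps$ (rather than $O(\eps)$) only affects the hidden constants and the $\tilde{O}$.

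There is essentially no obstacle here: the only thing to verify is that a $k$-mixture of distributions which are individually $(\eps,t)$-flat is itself a $k$-mixture of $(\eps,t)$-flat distributions, which is a tautology and precisely the setting handled by Theorem~\ref{thm:opt-discrete}. Thus the proof of Theorem~\ref{thm:logconcave} is a direct corollary, and the only ``real'' work — namely, the piecewise-constant approximation of log-concave densities — has been done in \cite{CDSS13soda}, while the efficient learning of piecewise-constant mixtures with the improved $1/\eps^2$ dependence is supplied by Theorem~\ref{thm:opt-discrete}.
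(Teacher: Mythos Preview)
Your proposal is correct and matches the paper's argument exactly: the paper simply observes that the \cite{CDSS13soda} structural result (every discrete log-concave distribution over $D$ is $(\eps, O(\log(1/\eps)/\eps))$-flat) plugged into Theorem~\ref{thm:opt-discrete} immediately yields the claimed bounds. There is nothing further to add.
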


As in the previous examples, this improves the \cite{CDSS13soda} sample
complexity by essentially a factor of $1/\eps$.  We note that as a
special case of Theorem~\ref{thm:logconcave} we get an
efficient $O(k/\eps^3)$-sample algorithm for learning any mixture of
$k$ \emph{Poisson Binomial Distributions}.  (A Poisson Binomial
Distribution, or PBD, is a random variable of the form
$X_1 + \cdots + X_N$ where the $X_i$'s are independent
0/1 random variables that may have arbitrary and non-identical
means.)  The main result of \cite{DDS12stoc} gave an efficient
$\tilde{O}(1/\eps^3)$-sample algorithm for learning a single PBD;
here we achieve the same sample complexity, with an efficient algorithm,
for learning any mixture of any constant number of PBDs.

\ignore{
{We remark here that an extension of our Theorem~\ref{thm:main2} to the discrete setting for $d\ge 1$ would yield
an (essentially) optimal bound of $\tilde{O}(k /\eps^{5/2})$ for Theorem~\ref{thm:logconcave}.}
}

\bigskip

\noindent {\bf Acknowledgements.} We would like to thank Dany Leviatan for useful correspondence regarding his recent works~
\cite{KonL04, KonL07}. 

\bibliographystyle{alpha}
\bibliography{allrefs}

\appendix

\section{Omitted proofs} \label{ap:z}

\subsection{Proof of Lemma~\ref{lem:part-approx-unif}.}
Recall Lemma~\ref{lem:part-approx-unif}:

\medskip

\noindent {\bf Lemma~\ref{lem:part-approx-unif}.}
\emph{
Given $0 < \kappa< 1$ and access to samples from an $\kappa/64$-well-behaved
distribution $p$ over $[-1,1)$,
the procedure {\tt Approximately-Equal-Partition} uses $\tilde{O}(1/\kappa)$
samples from $p$, runs in time $\tilde{O}(1/\kappa)$, and
with probability at least $99/100$ outputs a partition of
$[-1,1)$ into $\ell=\Theta(1/\kappa)$ intervals such that $p(I_j)
\in [{\frac 1 {2 \kappa}}, {\frac 3 \kappa}]$ for all $1 \leq j \leq \ell.$
}

\medskip

\noindent {\bf Proof of Lemma~\ref{lem:part-approx-unif}:}
Let $n$ denote $1/\kappa$ (we assume wlog that $n$ is an integer).
Let $S$ be a sample of $m = \Theta(n \log n)$ i.i.d. draws from $p$,
where $m$ is an integer multiple of $n$.
For $1 \leq i \leq m$
let $U_{(i)}$ denote the $i$-th order statistic of $S$, i.e.
the $i$-th largest element of $S$.
Let $U_{(0)} := -1.$

Our goal is to show that with high probability, for each
$j \in \{1,\dots,n\}$ we have
$p([U_{({\frac {j-1} n} \cdot m)}, U_{({\frac j n} \cdot m)})) \in
[{\frac 1 {2n}}, {\frac 2 n}]$.
This means that simply greedily taking the intervals $I_1$,
$I_2,\dots$ from left to right, where the left endpoint
of $I_0$ is $-1$, the left (closed) endpoint of the $j$-th
interval is the same as the right (open)
endpoint of the $(j-1)$st interval, and the $j$-th
interval ends at $U_{({\frac j n} \cdot m)}$,
the resulting intervals have
probability masses as desired.
(These intervals cover $[-1,U_{(m)}]$; an easy argument shows that
with probability at least $1 - 1/n$, the uncovered
region $(U_{(m)},1)$ has mass at most $1/n$ under $p$ , so we may
add it to the final interval.)

Let $P$ denote the cumulative density functions associated with $p$.
For $0 \leq \alpha < \beta \leq 1$
let $\#_S[\alpha,\beta)$ denote the number of elements $x \in S$ that have
$P(x) \in [\alpha,\beta).$
A multiplicative Chernoff bound and a union bound together straightforwardly
give that with
probability at least $99/100$, for each $i \in \{1,\dots,8n\}$ we have
$\#_S[{\frac {i-1} {8n}}, {\frac i {8n}}) \in [{\frac 1 {16}} \cdot {\frac m n},
\new{{\frac 3 {16}}} \cdot {\frac m n}].$
(Note that since $p$ is ${\frac 1 {64n}}$-well-behaved, the amount of mass
that $p$ puts on $P^{-1}([{\frac {i-1} {8n}},{\frac i
{8n}}))$ lies in $[{\frac 3 {32n}},{\frac 5 {32n}}].$)
As an immediate consequence of this we get that
$p([U_{({\frac {j-1} n} \cdot m)}, U_{({\frac j n} \cdot m)}]) \in
[{\frac 1 {2n}}, {\frac 2 n}]$
for each $j \in \{1,\dots,n\}$, which establishes the lemma.
\qed

\new{

\subsection{Proof of Theorem~\ref{thm:lower-bound-precise}.}
\label{ap:lower}
Recall Theorem~\ref{thm:lower-bound-precise}:

\medskip

\noindent {\bf Theorem~\ref{thm:lower-bound-precise}.}
\emph{Let $p$ be an unknown $t$-piecewise degree-$d$ distribution over
$[-1,1)$
where $t\geq 1,$ $d \geq 0$ satisfy $t+d > 1.$
Let $L$ be any algorithm which, given as input $t,d,\eps$ and access to independent samples from $p$,
outputs a hypothesis distribution $h$ such that
$\E[\dtv(p,h)] \leq \eps$, where the expectation is over  
the random samples drawn from $p$ and any internal randomness of   
$L$.  Then $L$ must use at
least $\Omega({\frac {t(d+1)}{(1+\log (d+1))^2}} \cdot {\frac 1 {\eps^2}})$
samples.}

\medskip

We first observe that if $d=0$ then the claimed 
$\Omega(t/\eps^2)$ lower bound follows easily from the standard
fact that this many samples are required to learn
an unknown distribution over the $t$-element
set $\{1,\dots,t\}$.  (This fact follows easily from Assouad's lemma; 
we will essentially prove it using Assouad's lemma 
in Section~\ref{sec:idea} below.) 
Thus we may assume below that $d > 0$; in fact,
we can (and do) assume that $d \geq C$ where $C$ may be taken to be
any fixed absolute constant.

In what follows we shall use Assouad's lemma to establish an
$\Omega({\frac {d}{(\log d)^2}} \cdot {\frac 1 {\eps^2}})$
lower bound for learning a single degree-$d$ distribution over $[-1,1)$
to accuracy $\eps$.  The same argument applied to a concatenation of $t$
equally weighted copies of this lower bound construction 
over $t$ disjoint intervals $[-1, -1 + {\frac 2 t}),
\dots, [1 - {\frac 2 t},1)$ (again using Assouad's lemma) yields
Theorem~\ref{thm:lower-bound-precise}.  Thus to prove
Theorem~\ref{thm:lower-bound-precise} for general $t$ it is enough to
prove the following lower bound, corresponding to $t=1$.
(For ease of exposition in our later arguments, we take the
domain of $p$ below to be the interval $[0,2k)$ rather than $[-1,1).$)

\begin{theorem}
\label{thm:lower-bound-d-is-1}
Fix an integer $d \geq C$.
Let $p$ be an unknown degree-$d$ distribution over
$[0,2k)$.  Let $L$ be any algorithm which, given as input
$d,\eps$ and access to independent samples from $p$,
outputs a hypothesis distribution $h$ such that $\E[\dtv(p,h)] \leq \eps$.  
Then $L$ must use at
least $\Omega({\frac {d}{(\log d)^2}} \cdot {\frac 1 {\eps^2}})$
samples.
\end{theorem}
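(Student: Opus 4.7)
The plan is to prove Theorem~\ref{thm:lower-bound-d-is-1} via Assouad's lemma, by constructing a family of degree-$d$ polynomial densities $\{p_\sigma\}_{\sigma \in \{-1,+1\}^N}$ with $N = \Theta(d/(\log d)^2)$ such that (i) any two members at Hamming distance $j$ are at TV distance $\Theta(j h/N)$ for a parameter $h$ to be chosen, and (ii) any two members at Hamming distance $1$ have KL divergence $O(h^2/N)$.  Since polynomial degree is preserved by affine rescaling of the domain, I will work on $[-1,1)$; the scaling to $[0,2k)$ is cosmetic.

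The main technical ingredient is a family of polynomial ``bumps.''  I partition $[-1,1)$ into $2N$ equal-length subintervals $I_1,\dots,I_{2N}$ of width $w = 1/N$, and I build for each $j$ a polynomial $B_j:[-1,1)\to \reals$ of degree $\leq d$ with $\|B_j\|_\infty \leq 1$ that is within $\delta$ (in $L^\infty$) of the indicator of $I_j$ outside a ``transition zone'' of width $\eta$ around the endpoints of $I_j$.  Such polynomials follow from the standard Chebyshev/Jackson-kernel approximation of the sign function: approximating an indicator of a width-$w$ interval to uniform error $\delta$ off of transition zones of width $\eta$ costs degree $O(\log(1/\delta)/\eta)$.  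Taking $\eta = \Theta(w) = \Theta(1/N)$ and $\delta = 1/\poly(N)$, the required degree is $O(N \log N)$, which is well within the budget $d$ when $N = \Theta(d/(\log d)^2)$; the extra factor of $\log d$ in the denominator of $N$ is comfortable slack used to keep the analysis clean.

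Given the bumps, I pair consecutive intervals and set $\psi_i = B_{2i-1} - B_{2i}$ for $i \in [N]$, and define
\[
  p_\sigma(x) = \frac{1}{2} + \frac{h}{2}\cdot \sum_{i=1}^{N} \sigma_i \, \psi_i(x),
\]
where $h = \Theta(\eps)$ is a parameter tuned below.  Because $\int \psi_i$ is $O(\eta)$ and $\|\psi_i\|_\infty \leq 2$, the function $p_\sigma$ is a nonnegative degree-$d$ polynomial with $\int p_\sigma = 1 + O(h\eta)$, which can be absorbed by a $1+o(\eps)$ renormalization without affecting any of the bounds.  Flipping a single coordinate $\sigma_i$ perturbs $p_\sigma$ by $h\psi_i$, which is essentially $\pm 1$ on an interval of length $\Theta(w)$, contributing $\Theta(hw) = \Theta(h/N)$ to the $L^1$ norm; on all other $I_{j'}$ it contributes only $O(h\delta)$, which is negligible after summing.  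A direct $\chi^2$ computation against the uniform background $p_0 = 1/2$ gives $\mathrm{KL}(p_\sigma \| p_{\sigma'}) = O(h^2/N)$ for any two $\sigma, \sigma'$ at Hamming distance $1$.

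The conclusion then follows by a standard Assouad-plus-Pinsker argument.  Any estimator $\hat p$ with $\E[\dtv(\hat p, p_\sigma)] \leq \eps$ can be rounded (via $\hat\sigma = \arg\min_{\sigma'} \dtv(\hat p, p_{\sigma'})$) to an estimator of $\sigma$ with expected Hamming error $O(\eps N/h)$, and Assouad's lemma forces this to be $\Omega(N)$ unless $\|P_\sigma^{\otimes m} - P_{\sigma'}^{\otimes m}\|_{TV}$ is close to $1$ for some Hamming-$1$ pair.  Choosing $h$ to be a sufficiently large constant multiple of $\eps$ and applying Pinsker tensorized over $m$ samples, $\|P_\sigma^{\otimes m} - P_{\sigma'}^{\otimes m}\|_{TV} \leq \sqrt{m \cdot \mathrm{KL}/2} = O(\eps \sqrt{m/N})$, so the TV distance remains bounded below $1$ whenever $m = o(N/\eps^2)$.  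Hence $m = \Omega(N/\eps^2) = \Omega(d/((\log d)^2 \eps^2))$, which (after the concatenation argument yielding the factor of $t$) completes the proof.  The main obstacle is the first ingredient: verifying that the polynomial bumps have small enough transition-zone errors that the $N$-fold sum in the definition of $p_\sigma$ preserves both the per-coordinate TV lower bound and the per-coordinate KL upper bound; this is what dictates the choice $\delta = 1/\poly(N)$ and ultimately the $(\log d)^2$ in the final bound.
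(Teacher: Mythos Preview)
Your proposal is correct and follows essentially the same approach as the paper: both apply Assouad's lemma to a hypercube $\{p_\sigma\}_{\sigma\in\{-1,1\}^N}$ with $N=\Theta(d/(\log d)^2)$, where each $p_\sigma$ is a uniform background plus a signed sum of degree-$d$ polynomial approximations to interval indicators (the paper cites the sign-approximator of \cite{DGJ+10:bifh} and uses Hellinger affinity, you invoke Jackson-type bump approximants and bound KL via $\chi^2$, but these are interchangeable). One small slip: your claim that $\int p_\sigma = 1 + O(h\eta)$ forgets the sum over $i\in[N]$, giving only $1+O(hN\eta)=1+O(h)$; the paper avoids this by renormalizing each indicator-approximant to have integral exactly $1$ before summing (its equation defining $\tilde I_j$), and the same fix works in your construction.
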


Our main tool for proving Theorem~\ref{thm:lower-bound-d-is-1}
is Assouad's Lemma \cite{Assouad:83}.
We recall the statement of Assouad's Lemma from \cite{DG85} below.
(The statement below is slightly tailored to our context, in that we have
taken the underlying domain to be $[0,2k)$ and the partition of the domain
to be $[0,2), [2,4), \dots, [2k-2, 2k)$.)

\begin{theorem} \label{thm:assouad}
[Theorem 5, Chapter 4, \cite{DG85}]
Let $k \geq 1$ be an integer.   
For each $b = (b_1,\dots,b_k) \in \{-1,1\}^k$, let $p_b$ be a probability
distribution over $[0,2k)$.

Suppose that the distributions $p_b$ satisfy the following properties:
Fix any $\ell \in [k]$ and any $b \in \{-1,1\}^k$ with $b_\ell=1$.
Let $b' \in \{-1,1\}^k$ be the same as $b$ but with $b'_\ell=-1.$
The properties are that

\begin{enumerate}

\item
$\int_{2\ell-2}^{2\ell} |p_b(x) - p_{b'}(x)| dx \geq \alpha$, and

\item
$\int_{0}^{2k} \sqrt{p_b(x) p_{b'}(x)} dx \geq 1-\gamma > 0.$

\end{enumerate}

Then for any any algorithm $L$ that draws $n$ samples from an
unknown $p \in \{p_b\}_{b \in \{-1,1\}^k}$ and outputs a hypothesis
distribution $h$, there is some $b \in \{-1,1\}^k$
such that if the target distribution $p$ is $p_b$, then
\begin{equation} \label{eq:assouad}
\E[\dtv(p_b,h)] \geq (k \alpha/4)(1 - \sqrt{2 n \gamma}).
\end{equation}
\end{theorem}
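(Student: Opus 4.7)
The plan is to reduce the density estimation problem to $k$ parallel binary hypothesis testing problems, one per coordinate $\ell$, and then lower bound each test's error using the product-distribution Hellinger tensorization that assumption~(2) supplies. Given the learner $L$ producing hypothesis $h$, I would define, for each $\ell \in [k]$, a Boolean test $\hat{b}_\ell(h) \in \{-1,+1\}$ by setting $\hat{b}_\ell = +1$ iff $\int_{2\ell-2}^{2\ell} |h(x) - p_{b^+}(x)|\,dx \leq \int_{2\ell-2}^{2\ell} |h(x) - p_{b^-}(x)|\,dx$, where $b^\pm$ denote the two configurations that differ only in coordinate $\ell$ (the test is thus well-defined conditional on the remaining coordinates). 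The point of this construction is to convert the learner into a family of estimators of the bits $b_\ell$.

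Next I would establish the key $L_1$ decomposition: for any $b$, if $\hat{b}_\ell(h) \neq b_\ell$ then the triangle inequality restricted to the interval $[2\ell-2, 2\ell)$ yields $\int_{2\ell-2}^{2\ell} |h - p_b|\,dx \geq \tfrac12 \int_{2\ell-2}^{2\ell} |p_b - p_{b'}|\,dx \geq \alpha/2$, where $b'$ denotes $b$ with coordinate $\ell$ flipped. Summing over the disjoint intervals gives $2\,\dtv(h, p_b) \geq (\alpha/2) \sum_\ell \mathbf{1}[\hat{b}_\ell(h) \neq b_\ell]$, and therefore $\E[\dtv(h, p_b)] \geq (\alpha/4)\sum_\ell \Pr[\hat{b}_\ell \neq b_\ell]$, where the probability is over the $n$ samples drawn from $p_b$.

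I would then invoke a Neyman--Pearson-style lower bound on each test. Averaging over $b$ uniform in $\{-1,+1\}^k$, for each $\ell$ and each fixed $b_{-\ell}$, the test $\hat{b}_\ell$ is a function of $n$ i.i.d.\ samples that must distinguish $p_{b_{-\ell}, +1}^{\otimes n}$ from $p_{b_{-\ell}, -1}^{\otimes n}$ under a uniform prior, so its error probability is at least $\tfrac12\bigl(1 - \dtv(p_{b_{-\ell}, +1}^{\otimes n}, p_{b_{-\ell}, -1}^{\otimes n})\bigr)$. Combining Le~Cam's inequality $\dtv(P,Q) \leq \sqrt{1 - A(P,Q)^2}$, where $A(P,Q) = \int \sqrt{PQ}$ is the Hellinger affinity, with the tensorization $A(P^{\otimes n}, Q^{\otimes n}) = A(P,Q)^n$ and assumption~(2), I would obtain $\dtv(p_b^{\otimes n}, p_{b'}^{\otimes n}) \leq \sqrt{1 - (1-\gamma)^{2n}} \leq \sqrt{2n\gamma}$ via Bernoulli's inequality. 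Hence each $\Pr[\hat{b}_\ell \neq b_\ell] \geq (1 - \sqrt{2n\gamma})/2$.

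Averaging over $b$ yields $\E_b\,\E[\dtv(h, p_b)] \geq (k\alpha/4)\cdot(1 - \sqrt{2n\gamma})/2$, and existence of a worst-case $b$ follows by the pigeonhole principle. Matching the precise constant $k\alpha/4$ stated in the theorem (rather than a weaker $k\alpha/8$) will require a slightly more careful Hellinger-to-$\dtv$ inequality, perhaps by directly bounding $\dtv(p^{\otimes n},q^{\otimes n}) \leq \sqrt{n \cdot (1-A(p,q)^2)}$. The main obstacle I anticipate is the bookkeeping to ensure the expectations over $b$ decouple cleanly across coordinates $\ell$; the key observation that makes the decoupling possible is that flipping only the $\ell$-th bit alters the joint sample distribution in a way captured entirely by the single pairwise Hellinger affinity $A(p_b, p_{b'})$, so Le~Cam's inequality can be applied coordinate-by-coordinate even though the hypotheses $\{p_b\}$ are coupled globally.
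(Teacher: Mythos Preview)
The paper does not prove this statement: it is quoted verbatim as Theorem~5, Chapter~4 of \cite{DG85} and used as a black box in the lower-bound construction. So there is no ``paper's own proof'' to compare against.

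That said, your sketch is the standard proof of Assouad's lemma and is essentially correct. A couple of remarks. First, your worry about decoupling across coordinates is not a real obstacle: once you fix $b_{-\ell}$, the test $\hat{b}_\ell$ (which depends on both $h$ and $b_{-\ell}$) is a deterministic function of the $n$ samples and the learner's internal randomness, so Le~Cam's two-point bound applies directly to the pair $p_{(b_{-\ell},+1)}^{\otimes n}$ versus $p_{(b_{-\ell},-1)}^{\otimes n}$. Averaging over $b_{-\ell}$ afterwards is harmless and requires no independence across coordinates. Second, your observation about the constant is accurate: with the paper's convention $\dtv(p,q)=\tfrac12\|p-q\|_1$, the chain $\|h-p_b\|_1 \geq (\alpha/2)\sum_\ell \mathbf{1}[\hat b_\ell\neq b_\ell]$ together with the two-point bound $\Pr[\hat b_\ell\neq b_\ell]\geq \tfrac12(1-\sqrt{2n\gamma})$ yields $k\alpha/8$ rather than $k\alpha/4$. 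This factor of two is either a convention difference with \cite{DG85} (where $L_1$ distance is often used directly) or a harmless slackening in the restatement; it is immaterial to the paper's application, which only uses the bound up to constants.
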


We will use this lemma in the following way:
Fix any $d \geq C$ and any $0 < \eps < 1/2.$  
We will exhibit a family
of $2^k$ distributions $p_b$, where each $p_b$ is a degree-$d$ polynomial
distribution and $k = \Theta(d / (\log d)^2).$  
We will show that all pairs $b,b' \in \{-1,1\}^k$
as specified in Theorem~\ref{thm:assouad} satisfy condition (1) with 
$\alpha = \Omega(\eps/k)$, and satisfy condition (2) with $\gamma = 
O(\eps^2/k).$
With these conditions, consider an algorithm $L$ that draws 
$n = 1/(8 \gamma)$ samples from the unknown target
distribution $p$.  The right-hand side of (\ref{eq:assouad}) 
simplifies to $k \alpha / 8 = \Omega(\eps)$, and hence 
by Theorem~\ref{thm:assouad}, the expected variation distance error of
algorithm $L$'s hypothesis $h$ is $\Omega(\eps).$  This 
yields Theorem~\ref{thm:lower-bound-d-is-1}.

Thus, in the rest of this subsection, to prove 
Theorem~\ref{thm:lower-bound-d-is-1} and thus establish 
Theorem~\ref{thm:lower-bound-precise}, it suffices for us to describe
the $2^k$ distributions $p_b$ and establish conditions (1) and (2) 
with the claimed bounds $\alpha = \Omega(\eps/k)$ and $\gamma =
O(\eps^2/k).$  We do this below.

\subsubsection{The idea behind the construction.} \label{sec:idea}
We provide some intuition before entering into the details of our
construction.  Intuitively, each polynomial $p_b$ (for a given
$b \in \{-1,1\}^k$) is an approximation, over the interval
$[0,2k)$ of interest, of a $2k$-piecewise constant distribution
$S_b$ that we describe below.  To do this, first let us
define the $2k$-piecewise constant distribution
\[
R_b(x) = R_{b,1}(x) + ... + R_{b,k}(x)
\]
over $[0,2k)$,
where $R_{b,i}(x)$ is a function which is 0 outside of the 
interval $[2i-2, 2i).$
For $x \in [2i-2,2i-1)$ we have $R_{b,i}(x) = (1 + b_i \cdot \eps)/(2k)$, and
for $x \in [2i-1,2i)$ we have $R_{b,i}(x) = (1 - b_i \cdot \eps)/(2k).$
So note that regardless of whether $b_i$ is 1 or $-1$, we
have $\int_{2i-2}^{2i} R_{b,i}(x) dx = 1/k$ and hence
$\int_{0}^{2k} R_b(x) dx = 1$, so $R_b$ is indeed a 
probability distribution over the domain $[0,2k).$

The distribution $S_b$ over $[0,2k)$ is defined as 
\begin{equation} \label{eq:Qb}
S_b(x) 
= {\frac 1 {10}} \cdot {\frac 1 {2k}} + {\frac 9 {10}} \cdot R_b(x).
\end{equation}
(The reason for ``mixing'' $R_b$ with the uniform distribution
will become clear later; roughly, it is to control the adverse effect
on condition (2) of having only a polynomial approximation $p_b$ instead of 
the actual piecewise constant distribution.)

To motivate the goal of constructing polynomials $p_b$ that approximate
the piecewise constant distributions $S_b$, 
let us verify that the distributions $\{S_b\}_{b \in \{-1,1\}^k}$ 
satisfy conditions (1) and (2) of Theorem~\ref{thm:assouad} with the 
desired parameters.  So fix any $b \in \{-1,1\}^k$ with $b_\ell=1$
and let $b' \in \{-1,1\}^k$ differ from $b$ precisely in the $\ell$-th
coordinate.
For (1), we immediately have that
\[
\int_{2\ell-2}^{2\ell} |S_b(x) - S_{b'}(x)| dx =
{\frac 9 {10}}\int_{2\ell-2}^{2\ell}|R_{b,\ell}(x) - R_{b',\ell}(x)| dx 
= {\frac 9 5} \cdot {\frac \eps k}.
\]
For (2), we have that for any two distributions $f,g$,
\begin{eqnarray*}
\int_{0}^{2k}\sqrt{f(x)g(x)} dx = 1 - h(f,g)^2
\end{eqnarray*}
where $h(f,g)^2$ is the squared Hellinger distance between
$f$ and $g$,
\begin{eqnarray*}
h(f,g)^2 = {\frac 1 2} \int_{0}^{2k} \left(
\sqrt{f(x)} - \sqrt{g(x)}\right)^2 dx.
\end{eqnarray*}
Applying this to $S_b$ and $S_{b'}$, we get
\begin{eqnarray*}
h(S_b,S_{b'})^2
&=& {\frac 1 2} \int_{0}^{2k} \left(\sqrt{S_b(x)} - \sqrt{S_{b'}(x)}\right)^2 
dx\\
&=& {\frac 1 2} \int_{2\ell-2}^{2\ell} 
\left(
\sqrt{{\frac 1 {20k}} + {\frac 9 {10}} \cdot {\frac {1 + \eps}{2k}}} - 
\sqrt{{\frac 1 {20k}} + {\frac 9 {10}} \cdot {\frac {1 - \eps}{2k}}} 
\right)^2
dx\\
&=& \Theta(\eps^2/k),
\end{eqnarray*}
as desired.
We now turn to the actual construction.

\subsubsection{The construction.}
Fix any $b \in \{-1,1\}^k$.  Our goal is to give a degree-$d$ polynomial
$p_b$ that is a high-quality approximator of $S_b(x)$ over $[0,2k).$
We shall do this by approximating each $R_{b,i}(x)$ and combining the
approximators in the obvious way.

We can write each $R_{b,i}(x)$ as $R_{b,i,1}(x) + R_{b,i,2}(x)$ where 
$R_{b,i,1}(x)$ is 0 outside of $[2i-2,2i-1)$ and $R_{b,i,2}(x)$ is 0 
outside of $[2i-1,2i).$
So $R_b(x)$ is the sum of $2k$ many functions each of which is of the form
$\omega_{b,j} \cdot I_j(x),$ i.e.

\begin{equation} \label{eq:exact}
R_b(x) = \sum_{j=1}^{2k} \omega_{b,j} \cdot I_j(x)
\end{equation}
where each $\omega_{b,j}$ is either $(1+\eps)/2k$ or is $(1-\eps)/2k$
and $I_j$ is the indicator function of the interval $[j-1,j)$:  i.e.
$I_j(x)=1$ if $x \in [j-1,j)$ and is 0 elsewhere.

We shall approximate each indicator function $I_j(x)$ over $[0,2k)$ 
by a low-degree univariate polynomial which we shall denote 
$\tilde{I}_j(x)$; then we will multiply each $\tilde{I}_j(x)$
by $\omega_{b,j}$ and sum the results to obtain our polynomial
approximator $\tilde{R}_b(x)$ to $R_b(x),$ i.e.
\begin{equation} \label{eq:approx}
\tilde{R}_b(x) := \sum_{j=1}^{2k} \omega_{b,j} \tilde{I}_j(x).
\end{equation}

The starting point of our construction is the polynomial 
whose existence is asserted in Lemma~3.7 of 
\cite{DGJ+10:bifh}; this is essentially a low-degree univariate
polynomial which is a high-accuracy approximator to the function 
$\sign(x)$ over $[-1,1)$ except for values of $x$ that have small
absolute value.
Taking $k = M \log(1/\eps)$ in Claim~3.8 of
\cite{DGJ+10:bifh} for $M$ a sufficiently large constant (rather than 
$M=15$ as is done in \cite{DGJ+10:bifh}), the construction employed in the
proof of Lemma~3.7 gives the following:

\begin{fact} \label{fact:first-poly}
For $0 \leq \tau \leq c$, where $c<1$ is an absolute
constant, there is a polynomial $A(x)$ of degree $O((\log(1/\tau))^2/\tau)$
such that

\begin{enumerate}

\item For all $x \in [-1,-\tau)$ we have $A(x) \in [-1, -1 + \tau^{10}]$;

\item For all $x \in (\tau,1]$ we have $A(x) \in [1 - \tau^{10},1]$;

\item For all $x \in [-\tau,\tau]$ we have $A(x) \in [-1,1].$

\end{enumerate}

\end{fact}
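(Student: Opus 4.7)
The plan is to build $A(x)$ in two stages: first approximate $\sign(x)$ by a smooth function that is analytic in a complex neighborhood of $[-1,1]$, then approximate that smooth surrogate by a polynomial via truncation of its Chebyshev expansion. This is the standard route taken in the \cite{DGJ+10:bifh} construction and it naturally yields the quoted $O((\log(1/\tau))^2/\tau)$ degree bound.

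For the first stage I would take $g(x) := \tanh(Mx)$ with $M = \Theta(\log(1/\tau)/\tau)$, the hidden constant chosen large enough that the elementary estimate $|\tanh(y) - \sign(y)| \leq 2 e^{-|y|}$ gives $|g(x) - \sign(x)| \leq \tau^{11}$ uniformly on $\{x \in [-1,1] : |x|\geq \tau\}$. The function $g$ maps $[-1,1]$ into $(-1,1)$ and is analytic in the strip $|\Im z| < \pi/(2M)$ around the real axis, with nearest singularities (the poles of $\tanh$) at distance $\pi/(2M)$ from the interval.

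For the second stage I would invoke the classical Chebyshev truncation estimate: if $f$ is analytic inside a Bernstein ellipse $E_\rho$ of parameter $\rho>1$ (foci $\pm 1$, semiaxes summing to $\rho$) and is bounded by $B$ there, then its degree-$d$ truncated Chebyshev expansion $f_d$ satisfies $\|f - f_d\|_\infty^{[-1,1]} \leq 2B \rho^{-d}/(\rho-1)$. Because $g$ is bounded by $1$ on a Bernstein ellipse with $\rho = 1 + \Theta(1/M)$, choosing $d = O(M \log(1/\tau)) = O((\log(1/\tau))^2/\tau)$ makes the truncation error at most $\tau^{11}$. Combining the two stages by the triangle inequality produces a polynomial $\widetilde A$ of the claimed degree with $|\widetilde A(x) - \sign(x)| \leq 2\tau^{11}$ on $[-1,-\tau]\cup[\tau,1]$ and $|\widetilde A(x)| \leq 1 + \tau^{11}$ on all of $[-1,1]$.

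Finally, a small cosmetic rescaling $A(x) := \widetilde A(x)/(1+\tau^{11})$ preserves the degree, ensures $|A(x)|\leq 1$ throughout $[-1,1]$, and a one-line check converts the $2\tau^{11}$ deviation on the ``good'' part of the interval into a one-sided $\tau^{10}$ bound as required in items (1) and (2), while the trivial bound $|A|\leq 1$ covers item (3). The only real obstacle is bookkeeping: one must pick the hidden constants in $M$ and in $d$ generously enough that, after summing the smoothing error and the Chebyshev truncation error and after rescaling, the final deviation from $\sign(x)$ fits comfortably inside $\tau^{10}$. This is routine and matches the construction in the proof of Lemma~3.7 of \cite{DGJ+10:bifh}.
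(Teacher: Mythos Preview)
The paper does not prove this fact; it merely cites Lemma~3.7 and Claim~3.8 of \cite{DGJ+10:bifh} (with the parameter adjustment $k = M\log(1/\eps)$ for a large constant $M$). So there is no in-paper argument to compare against, and your self-contained construction is strictly more than what the paper itself supplies.

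Your two-stage route --- smooth $\sign(x)$ by $\tanh(Mx)$ with $M = \Theta(\log(1/\tau)/\tau)$, then truncate the Chebyshev expansion using analyticity in a Bernstein ellipse with parameter $\rho = 1 + \Theta(1/M)$ --- is correct and does deliver degree $O(M\log(1/\tau)) = O((\log(1/\tau))^2/\tau)$; the final rescaling by $1/(1+\tau^{11})$ is a clean way to force the range into $[-1,1]$ and turn the two-sided $O(\tau^{11})$ error into the one-sided $\tau^{10}$ bound. One small caution: the actual \cite{DGJ+10:bifh} construction is more explicit and algebraic --- it composes a Chebyshev-type polynomial of degree $O(1/\tau)$ (pushing $[\tau,1]$ well away from zero) with an amplifying polynomial of degree $O(k)$ coming from their Claim~3.8, and does not pass through complex analyticity or Bernstein ellipses. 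So I would drop or soften the assertion that your approach is ``the route taken in \cite{DGJ+10:bifh}''; it is a standard and equally valid alternative, but not the same construction.
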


For $-1/4 \leq \theta \leq 1/4$ let $B_\theta(x)$ denote
the polynomial $B_\theta(x) = (A(x)-A(x-\theta))/2$.
Given Fact~\ref{fact:first-poly}, it is easy to see that 
$B_\theta(x)$
has degree $O((\log(1/\tau))^2/\tau)$ and, over the 
interval $[-1/2,1/2]$, is a high-accuracy approximation to the
indicator function of the interval $[0,\theta]$ 
except on ``error regions'' of width at most $\tau$ at
each of the endpoints $0,\theta$.

Next, recall that $k = \Theta(d/(\log d)^2)$ where
$d$ is at least some universal constant $C$.  Choosing $\tau = \delta/k$
for a suitably small positive absolute constant $\delta$, and performing
a suitable linear scaling and shifting of the polynomial $B_\theta(x)$, we get
the following:

\begin{fact} \label{fact:second-poly}
Fix any integer $1 \leq j \leq 2k$.  There is a polynomial $C_j(x)$
of degree at most $d$ which is such that

\begin{enumerate}

\item For $x \in [j-0.999,j-0.001)$ we have $C_j(x) \in [1-1/k^{5},1)]$;
\item For $x \in [0,j-1) \cup [j,2k)$ we have $C_j(x) \in [0,1/k^{5}]$;
\item For $x \in [j-1,j-0.999) \cup [j-0.001,j)$ we have 
$0 \leq C_j(x) \leq 1$.

\end{enumerate}

\end{fact}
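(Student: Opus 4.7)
The plan is to realize $C_j$ as a normalized difference of two shifted and scaled copies of the polynomial $A$ from Fact~\ref{fact:first-poly}, with the shifts chosen so that the ``sign transitions'' of the underlying approximations land strictly inside the two short intervals $[j-1,j-0.999)$ and $[j-0.001,j)$, where property~(3) permits $C_j$ to take arbitrary values in $[0,1]$. This is a cleaner construction than directly rescaling the polynomial $B_\theta$ defined in the paper, because $B_\theta$ places its two transition zones symmetrically around the endpoints of its ``indicator interval'' rather than entirely inside it.

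Concretely, I would set $c'' = 5 \cdot 10^{-4}$ and $\tau = c''/(2k)$; Fact~\ref{fact:first-poly} then supplies a polynomial $A$ of degree $O((\log(1/\tau))^2/\tau) = O(k(\log k)^2)$, which is at most $d$ because $k = \Theta(d/(\log d)^2)$ with a sufficiently small hidden constant in the $\Theta$. Define
\[
A_1(x) := A\!\left(\frac{x-(j-1+c'')}{2k}\right), \qquad A_2(x) := A\!\left(\frac{x-(j-c'')}{2k}\right),
\]
let $B(x) := (A_1(x)-A_2(x))/2$, and set $C_j(x) := (B(x)+\tau^{10}/2)/(1+\tau^{10})$. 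For $x \in [0,2k]$ and $j \in \{1,\dots,2k\}$ the arguments of $A$ lie in $[-1,1]$, so Fact~\ref{fact:first-poly} applies; a direct computation shows that each of $A_1,A_2$ has a transition zone of width $2\cdot 2k\cdot \tau = 2c'' = 10^{-3}$ in $x$-coordinates, centered respectively at $x = j-1+c''$ and $x = j-c''$, so these zones occupy exactly $[j-1,j-0.999]$ and $[j-0.001,j]$.

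A routine case split over the five regions $[0,j-1]$, $[j-1,j-0.999]$, $[j-0.999,j-0.001]$, $[j-0.001,j]$, $[j,2k]$ then verifies the three properties: in the two outside regions both $A$-values are saturated with the same sign, so $B \in [-\tau^{10}/2,\tau^{10}/2]$ and $C_j \in [0,\tau^{10}] \subseteq [0,1/k^5]$; on the bulk they are saturated with opposite signs, giving $B \in [1-\tau^{10},1]$ and after the affine correction $C_j \in [1 - 3\tau^{10}/2, 1] \subseteq [1 - 1/k^5, 1]$; on each transition $B \in [-\tau^{10}/2, 1]$, hence $C_j \in [0,1]$. Since $\deg C_j = \deg B \leq \deg A \leq d$, this completes the construction. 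The only point requiring genuine care is the alignment of the transition zones with the intervals permitted by property~(3), which is handled by the specific choice $c''=5\cdot 10^{-4}$; the $\tau^{10}$-sized errors are then comfortably absorbed into the $1/k^5$ tolerance because $\tau \leq 1/k$ forces $\tau^{10} \leq k^{-10} \leq k^{-5}$.
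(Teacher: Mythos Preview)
Your construction is correct and follows essentially the same approach as the paper: take two shifted, rescaled copies of the polynomial $A$ from Fact~\ref{fact:first-poly}, difference them, and adjust to get an approximate indicator of $[j-1,j)$ on the domain $[0,2k)$. The paper only says ``choosing $\tau = \delta/k$ for a suitably small positive absolute constant $\delta$, and performing a suitable linear scaling and shifting of the polynomial $B_\theta(x)$''; you have made the parameter choices and the shifts explicit, and your additional affine correction $(B+\tau^{10}/2)/(1+\tau^{10})$ cleanly enforces the exact nonnegativity demanded in properties~(2) and~(3), a point the paper glosses over. Your placement of the sign transitions at $j-1+c''$ and $j-c''$ (so that the width-$10^{-3}$ zones sit exactly in $[j-1,j-0.999]$ and $[j-0.001,j]$) is precisely the ``suitable shifting'' the paper has in mind.
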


The desired polynomial $\tilde{I}_j(x)$ which is an approximator of the 
indicator function $I_j(x)$ is obtained by renormalizing $C_j$ so that
it integrates to $1$ over the domain $[0,2k)$; i.e. we define

\begin{equation} \label{eq:tildeI-def}
\tilde{I}_j(x) = C_j(x)/\int_{0}^{2k} C_j(x)dx.
\end{equation}

By Fact~\ref{fact:second-poly} we have that
$\int_{0}^{2k} C_j(x)dx \in [0.997,1.003]$, and thus we obtain
the following:

\begin{fact} \label{fact:third-poly}
Fix any integer $1 \leq j \leq 2k.$  The polynomial $\tilde{I}_j(x)$
has degree at most $d$ and is such that

\begin{enumerate}

\item For $x \in [j-0.999,j-0.001)$ we have $\tilde{I}_j(x) \in [0.996,1.004)]$;

\item For $x \in [0,j-1) \cup [j,2k)$ we have $\tilde{I}_j(x) \in
[0,1/k^{4}]$;

\item For $x \in [j-1,j-0.999) \cup [j-0.001,j)$ we have $0 \leq 
\tilde{I}_j \leq 1.004$; and

\item $\int_{0}^{2k} \tilde{I}_j(x) dx = 1.$

\end{enumerate}

\end{fact}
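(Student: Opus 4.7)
My plan for proving Fact~\ref{fact:third-poly} is to derive all four properties as direct mechanical consequences of Fact~\ref{fact:second-poly} together with the normalization in~\eqref{eq:tildeI-def}. There is no serious obstacle here; the only work is carefully propagating the pointwise bounds on $C_j$ through the division by $\int_0^{2k} C_j(x)\,dx$.

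First, I would justify carefully the bound $\int_0^{2k} C_j(x)\,dx \in [0.997, 1.003]$ that is asserted just above Fact~\ref{fact:third-poly}. I split the integral into the three regions from Fact~\ref{fact:second-poly}: the ``bulk'' region $[j-0.999, j-0.001)$, of length $0.998$, contributes between $0.998(1 - 1/k^{5})$ and $0.998$ by property~(1); the ``outside'' region $[0, j-1) \cup [j, 2k)$, of total length at most $2k-1$, contributes at most $(2k-1)/k^5 \leq 2/k^4$ by property~(2); and the two small ``transition'' intervals, of combined length $0.002$, contribute at most $0.002$ by property~(3). Summing these three contributions places the integral in $[0.998(1 - 1/k^5),\ 0.998 + 0.002 + 2/k^4] \subseteq [0.997, 1.003]$, provided $k$ is larger than some absolute constant (which is ensured by our standing assumption $d \geq C$ and $k = \Theta(d/(\log d)^2)$).

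Next, each pointwise bound in Fact~\ref{fact:third-poly} follows by dividing the corresponding bound from Fact~\ref{fact:second-poly} by a number in $[0.997, 1.003]$: in the bulk region, $\tilde{I}_j(x) \in [(1 - 1/k^5)/1.003,\ 1/0.997] \subseteq [0.996, 1.004)$, yielding~(1); in the outside region, $\tilde{I}_j(x) \in [0,\ (1/k^5)/0.997] \subseteq [0, 1/k^4]$, yielding~(2); and in the transition intervals, $\tilde{I}_j(x) \in [0,\ 1/0.997] \subseteq [0, 1.004]$, yielding~(3). The degree bound $\deg \tilde{I}_j \leq d$ is preserved because the normalization in~\eqref{eq:tildeI-def} is simply division by a positive constant, which does not affect degree. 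Finally, property~(4), $\int_0^{2k} \tilde{I}_j(x)\,dx = 1$, is tautological from the definition of $\tilde{I}_j$.

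The only subtle point to watch is that the absolute constant $C$ bounding $d$ from below must be chosen large enough so that the $1/k^4$ and $1/k^5$ slack terms comfortably fit inside the tolerances $[0.996, 1.004)$ and $[0, 1/k^4]$ after dividing by the normalizing integral. Since any fixed inverse polynomial in $k$ becomes arbitrarily small as $k$ grows, this imposes at most a fixed lower bound on $C$, which may be absorbed into the standing assumption $d \geq C$. Hence the construction goes through and Fact~\ref{fact:third-poly} follows.
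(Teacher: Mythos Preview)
Your proposal is correct and follows exactly the approach the paper takes: the paper simply asserts that $\int_0^{2k} C_j(x)\,dx \in [0.997,1.003]$ follows from Fact~\ref{fact:second-poly} and then states Fact~\ref{fact:third-poly} as an immediate consequence, whereas you have carefully spelled out both the region-by-region estimate of the integral and the propagation of each pointwise bound through the normalization. Nothing further is needed.
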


Recall that from~(\ref{eq:approx}) the polynomial approximator
$\tilde{R}_b(x)$ for $R_b(x)$ is defined as
$\tilde{R}_b(x) = \sum_{j=1}^{2k} \omega_{b,j} \tilde{I}_j(x).$  
We define the final polynomial $p_b(x)$ as

\begin{equation} \label{eq:pb}
p_b(x) = 
{\frac 1 {10}} \cdot {\frac 1 {2k}} + {\frac 9 {10}} \cdot 
\tilde{R}_b(x).
\end{equation}

Since $\sum_{j=1}^{2k} \omega_{b,j} = 1$ for every
$b \in \{-1,1\}^k$, the polynomial $p_b$ does
indeed define a legitimate probability distribution over $[0,2k).$

It will be useful for us to take the following alternate view on $p_b(x)$.
Define

\begin{equation} \label{eq:Jj}
\tilde{J}_j(x) = {\frac 1 {10}} \cdot {\frac 1 {2k}} + 
{\frac 9 {10}} \cdot \tilde{I}_j(x).
\end{equation}
Recalling that $\sum_{j=1}^{2k} \omega_{b,j} = 1$, we may alternately
define $p_b$ as

\begin{equation} \label{eq:pb-alt}
p_b(x) = \sum_{j=1}^{2k} \omega_{b,j} \tilde{J}_j(x).
\end{equation}

The following is an easy consequence of Fact~\ref{fact:third-poly}:

\begin{fact} \label{fact:fourth-poly}
Fix any $1 \leq j \leq 2k.$  The polynomial $\tilde{J}_j(x)$
has degree at most $d$ and is such that

\begin{enumerate}

\item For $x \in [j-0.999,j-0.001)$ we have $\tilde{J}_j(x) \in
[0.896 + 0.1/(2k),0.9004 + 0.1/(2k)]$;

\item For $x \in [0,j-1) \cup [j,2k)$ we have $\tilde{I}_j(x) \in
[0.1/(2k),0.1/(2k) + 1/k^{4}]$;

\item For $x \in [j-1,j-0.999) \cup [j-0.001,j)$ we have $\tilde{J}_j(x)
\in [0.1/(2k),0.9004 + 0.1/(2k))$ ;  and

\item $\int_{0}^{2k} \tilde{J}_j(x) dx = 1.$

\end{enumerate}

\end{fact}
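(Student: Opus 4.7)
My plan is to derive Fact~\ref{fact:fourth-poly} as an essentially immediate consequence of Fact~\ref{fact:third-poly} together with the defining equation~(\ref{eq:Jj}) for $\tilde{J}_j$. The degree claim is handled first: since $\tilde{J}_j(x) = \tfrac{1}{20k} + \tfrac{9}{10}\tilde{I}_j(x)$ is just an affine rescaling of $\tilde{I}_j$, and $\tilde{I}_j$ has degree at most $d$ by Fact~\ref{fact:third-poly}, the polynomial $\tilde{J}_j$ also has degree at most $d$.

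For the pointwise bounds in items~(1)--(3), I will plug the three corresponding pointwise estimates of Fact~\ref{fact:third-poly} into the right-hand side of~(\ref{eq:Jj}). On the ``bulk'' interval $[j-0.999,j-0.001)$, the bound $\tilde{I}_j(x) \in [0.996, 1.004]$ gives $\tilde{J}_j(x) \in \bigl[\tfrac{1}{20k} + \tfrac{9}{10}\cdot 0.996,\ \tfrac{1}{20k} + \tfrac{9}{10}\cdot 1.004\bigr]$, matching the stated range (up to the cosmetic rounding of the constants). On the ``far'' region $[0,j-1)\cup[j,2k)$, the bound $\tilde{I}_j(x)\in [0, 1/k^4]$ yields $\tilde{J}_j(x)\in[\tfrac{1}{20k},\ \tfrac{1}{20k}+\tfrac{9}{10k^4}]$, which is contained in the claimed interval $[\tfrac{0.1}{2k},\ \tfrac{0.1}{2k}+\tfrac{1}{k^4}]$. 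On the two narrow transition intervals $[j-1,j-0.999)\cup[j-0.001,j)$, the coarse bound $0 \leq \tilde{I}_j(x) \leq 1.004$ gives the third item after the same affine transformation.

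Finally, for the normalization in item~(4), I will integrate both terms of~(\ref{eq:Jj}) separately over $[0,2k)$: the constant $\tfrac{1}{20k}$ contributes $\tfrac{1}{20k}\cdot 2k = \tfrac{1}{10}$, and by Fact~\ref{fact:third-poly}(4) we have $\tfrac{9}{10}\int_{0}^{2k}\tilde{I}_j(x)\,dx = \tfrac{9}{10}$, so the total integral is $1$ as required. There is no real obstacle here; the entire fact is mechanical bookkeeping against the already established Fact~\ref{fact:third-poly}, and the only thing to watch is that the numerical constants in each stated interval are indeed the affine images of the corresponding constants from the previous fact.
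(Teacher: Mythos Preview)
Your proposal is correct and matches the paper's approach exactly: the paper states Fact~\ref{fact:fourth-poly} as ``an easy consequence of Fact~\ref{fact:third-poly}'' with no further elaboration, and your derivation via the affine map $\tilde{J}_j(x)=\tfrac{1}{20k}+\tfrac{9}{10}\tilde{I}_j(x)$ is precisely the intended argument. Your remark about ``cosmetic rounding'' is apt, since applying the transformation literally gives $0.9\cdot 1.004 = 0.9036$ rather than the $0.9004$ printed in the statement (and item~(2) of the fact even has the typo $\tilde{I}_j$ for $\tilde{J}_j$); none of this affects the downstream use of the fact.
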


We are now ready to prove that the distributions $\{p_b\}_{b \in \{-1,1\}^k}$
satisfy properties (1) and (2) of Assouad's lemma with $\alpha = 
\Omega(\eps/k)$ and $\gamma =O(\eps^2/k)$ as described in 
the discussion following Theorem~\ref{thm:assouad}.
Fix $b \in \{-1,1\}^k$ with $b_\ell = 1$ and $b' \in \{-1,1\}^k$
which agrees with $b$ except in the $\ell$-th coordinate.
We establish properties (1) and (2) in the following two
claims:

\begin{claim} \label{claim:1}
We have 
$\int_{2\ell-2}^{2\ell} |p_b(x) - p_{b'}(x)| dx \geq  \Omega(\eps/k)$.
\end{claim}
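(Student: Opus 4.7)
The plan is to exploit two facts: (a) the distributions $p_b$ and $p_{b'}$ differ only in the coefficients $\omega_{b,j}$ for $j \in \{2\ell-1, 2\ell\}$, and (b) on most of the interval $[2\ell-2,2\ell)$ the two polynomials $\tilde J_{2\ell-1}$ and $\tilde J_{2\ell}$ are pointwise separated by an absolute constant, by construction (Fact~\ref{fact:fourth-poly}).

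First, I would compute the difference $p_b - p_{b'}$ explicitly using the representation in \eqref{eq:pb-alt}. Since $b'_i = b_i$ for all $i \neq \ell$, we have $\omega_{b,j} = \omega_{b',j}$ for all $j \notin \{2\ell-1, 2\ell\}$. On the other hand, since $b_\ell = 1$ and $b'_\ell = -1$, the definitions of $R_{b,\ell}$ and $R_{b',\ell}$ give $\omega_{b,2\ell-1} = (1+\eps)/(2k)$, $\omega_{b,2\ell} = (1-\eps)/(2k)$ and the reverse for $b'$. All other terms in the sum cancel, so
\[
p_b(x) - p_{b'}(x) \;=\; \frac{\eps}{k}\bigl(\tilde{J}_{2\ell-1}(x) - \tilde{J}_{2\ell}(x)\bigr).
\]

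Second, I would lower bound $\int_{2\ell-2}^{2\ell} |\tilde J_{2\ell-1}(x) - \tilde J_{2\ell}(x)|\,dx$ by an absolute constant. By Fact~\ref{fact:fourth-poly}, on the subinterval $[2\ell-1.999,\, 2\ell-1.001] \subset [2\ell-2, 2\ell-1)$ we have $\tilde J_{2\ell-1}(x) \geq 0.896 + 0.1/(2k)$, while $\tilde J_{2\ell}(x) \leq 0.1/(2k) + 1/k^4$, so $|\tilde J_{2\ell-1}(x) - \tilde J_{2\ell}(x)| \geq 0.89$ for $k$ larger than a fixed absolute constant. An entirely symmetric statement holds on $[2\ell-0.999,\, 2\ell-0.001] \subset [2\ell-1, 2\ell)$, with the roles of $2\ell-1$ and $2\ell$ swapped. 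Integrating over these two subintervals of total length $\geq 1.99$ gives $\int_{2\ell-2}^{2\ell} |\tilde J_{2\ell-1} - \tilde J_{2\ell}| \geq \Omega(1)$.

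Combining the two steps yields
\[
\int_{2\ell-2}^{2\ell} |p_b(x) - p_{b'}(x)|\,dx \;=\; \frac{\eps}{k}\int_{2\ell-2}^{2\ell}|\tilde J_{2\ell-1}(x) - \tilde J_{2\ell}(x)|\,dx \;\geq\; \Omega(\eps/k),
\]
as desired. There is no real obstacle here: Fact~\ref{fact:fourth-poly} was designed precisely to make $\tilde J_j$ a sharply localized approximation to the indicator of $[j-1,j)$, so that the two relevant contributions to $p_b$ and $p_{b'}$ effectively live on disjoint subintervals and their pointwise separation is inherited from the indicator functions $I_{2\ell-1}, I_{2\ell}$ they approximate.
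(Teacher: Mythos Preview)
Your proof is correct and follows essentially the same approach as the paper: compute $p_b - p_{b'} = (\eps/k)(\tilde J_{2\ell-1} - \tilde J_{2\ell})$ from the cancellation in \eqref{eq:pb-alt}, then use Fact~\ref{fact:fourth-poly} to show $\int_{2\ell-2}^{2\ell}|\tilde J_{2\ell-1}-\tilde J_{2\ell}|\,dx = \Omega(1)$. The paper simply asserts the last step as an ``easy consequence'' of Fact~\ref{fact:fourth-poly}, whereas you spell out the pointwise separation on the two subintervals $[2\ell-1.999,2\ell-1.001]$ and $[2\ell-0.999,2\ell-0.001]$ explicitly; this is a welcome elaboration rather than a different argument.
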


\begin{proof}
Recall from (\ref{eq:pb-alt}) that

\[
p_b(x) = \sum_{j=1}^{2k} \omega_{b,j}\cdot\tilde{J}_j(x)
\quad
\text{
and
}
\quad
p_{b'}(x) = \sum_{j=1}^{2k} \omega_{b',j}\cdot \tilde{J}_j(x).
\]

We have that $\omega_{b,j}=\omega_{b',j}$ for all but exactly two
(adjacent) values of $j$, which are $j=2\ell-1$ and $j=2\ell.$
For those values we have

\[
\omega_{b,2\ell-1} =   (1+\eps)/(2k), \quad \omega_{b',2\ell-1} = (1-\eps)/(2k)
\]

while
\[
\omega_{b,2\ell} = (1-\eps)/(2k), \quad \omega_{b',2\ell} = (1+\eps)/(2k).
\]

So 
we have

\begin{eqnarray*}
\int_{2\ell-2}^{2\ell} |p_b(x) - p_{b'}(x)| dx 
&=& \int_{2\ell-2}^{2\ell}
|
(\omega_{b,2\ell-1}\tilde{J}_{2\ell-1}(x) + 
 \omega_{b,2\ell}\tilde{J}_{2\ell}(x)) - 
(\omega_{b',2\ell-1}\tilde{J}_{2\ell-1}(x) + 
 \omega_{b',2\ell} \tilde{J}_{2\ell}(x))| dx\\
&=& (\eps/k)\cdot \int_{2\ell-2}^{2\ell} |\tilde{J}_{2\ell-1}(x) - \tilde{J}_{2\ell}
(x)|dx.
\end{eqnarray*}

Claim~\ref{claim:1} now follows immediately from
\[
\int_{2\ell-2}^{2\ell} |\tilde{J}_{2\ell-1}(x) - \tilde{J}_{2\ell}(x)|dx 
= \Omega(1),
\]
which is an easy consequence of Fact~\ref{fact:fourth-poly}.
\ignore{
}
\end{proof}

\begin{claim} \label{claim:2}
We have $\int_{0}^{2k} \sqrt{p_b(x) p_{b'}(x)} dx \geq 1-
O(\eps^2/k)$, 
i.e. $h(p_b,p_{b'})^2 \leq  O (\eps^2/k).$
\end{claim}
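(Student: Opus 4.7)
\medskip
\noindent\textbf{Proof strategy for Claim~\ref{claim:2}.}
The plan is to reduce the squared Hellinger distance to a (weighted) $L_2$ distance using the pointwise lower bound $p_b(x),p_{b'}(x)\geq 1/(20k)$, and then use the structural facts about $\tilde J_j$ from Fact~\ref{fact:fourth-poly} to bound this $L_2$ distance cheaply.

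First I would compute the pointwise difference. Since $\omega_{b,j}=\omega_{b',j}$ except when $j\in\{2\ell-1,2\ell\}$, where the two weights differ by $\pm\eps/k$ (as in the proof of Claim~\ref{claim:1}), the representation (\ref{eq:pb-alt}) gives
\[
p_b(x)-p_{b'}(x)=\frac{\eps}{k}\bigl(\tilde J_{2\ell-1}(x)-\tilde J_{2\ell}(x)\bigr).
\]
Next, since each $\omega_{b,j}>0$ and $\tilde I_j(x)\geq 0$ by Fact~\ref{fact:third-poly}, we have $\tilde R_b(x)\geq 0$ and therefore $p_b(x)\geq \tfrac{1}{10}\cdot\tfrac{1}{2k}=\tfrac{1}{20k}$ on $[0,2k)$ (and likewise for $p_{b'}$). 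This is exactly the role of the $\tfrac{1}{10}$ uniform smoothing introduced in (\ref{eq:pb}).

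Using the identity $(\sqrt f-\sqrt g)^2=(f-g)^2/(\sqrt f+\sqrt g)^2$ together with $\sqrt{p_b(x)}+\sqrt{p_{b'}(x)}\geq 2/\sqrt{20k}$, I would then obtain
\[
h(p_b,p_{b'})^2=\tfrac12\!\!\int_0^{2k}\!(\sqrt{p_b(x)}-\sqrt{p_{b'}(x)})^2\,dx \leq \tfrac{5k}{2}\!\!\int_0^{2k}\!(p_b(x)-p_{b'}(x))^2\,dx = \tfrac{5\eps^2}{2k}\!\!\int_0^{2k}\!(\tilde J_{2\ell-1}(x)-\tilde J_{2\ell}(x))^2\,dx,
\]
so it remains only to show $\int_0^{2k}(\tilde J_{2\ell-1}-\tilde J_{2\ell})^2\,dx=O(1)$, from which the claim $h^2=O(\eps^2/k)$ follows.

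That last bound is straightforward from Fact~\ref{fact:fourth-poly}: on the length-$2$ interval $[2\ell-2,2\ell)$ each $\tilde J_j(x)$ is $O(1)$, so the integrand is $O(1)$ and contributes $O(1)$ to the integral; on the complement, Fact~\ref{fact:fourth-poly}(2) pins both $\tilde J_{2\ell-1}(x)$ and $\tilde J_{2\ell}(x)$ into the band $[0.1/(2k),0.1/(2k)+1/k^4]$, so their difference is $O(1/k^4)$ and the contribution over the length-$O(k)$ complement is $O(1/k^7)$, which is negligible. The only mildly delicate point is that a naive Hellinger-to-$L_2$ passage without a pointwise lower bound on $p_b,p_{b'}$ would fail in the error regions where $\tilde J_j$ is small, but the $1/(20k)$ floor provided by the uniform component entirely sidesteps this issue, so I do not anticipate any real obstacle.
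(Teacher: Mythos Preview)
Your proof is correct and takes essentially the same approach as the paper: both exploit the $\Omega(1/k)$ pointwise floor on $p_b,p_{b'}$ (from the uniform smoothing), the identity $p_b-p_{b'}=(\eps/k)(\tilde J_{2\ell-1}-\tilde J_{2\ell})$, and the same case split between $[2\ell-2,2\ell)$ and its complement. Your packaging via $(\sqrt f-\sqrt g)^2=(f-g)^2/(\sqrt f+\sqrt g)^2$ is a bit cleaner than the paper's direct Taylor-style expansion of $(\sqrt{c_x/k+\delta}-\sqrt{c_x/k})^2$, but the underlying argument is the same.
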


\begin{proof}
As above $\omega_{b,j}=\omega_{b',j}$
for all but exactly two (adjacent) values of $j$ which are $j=2\ell-1$ 
and $j=2\ell$.
For those values we have
\[
\omega_{b,2\ell-1} =   (1+\eps)/(2k), \quad \omega_{b',2\ell-1} 
= (1-\eps)/(2k), \quad
\omega_{b,2\ell} = (1-\eps)/(2k), \quad \omega_{b',2\ell} = (1+\eps)/(2k).
\]

We have

\[
h(p_b,p_{b'})^2
= {\frac 1 2} \int_{0}^{2k} \left(\sqrt{p_b}(x) - \sqrt{p_{b'}(x)}
\right)^2 dx
= A/2 + B/2,
\] 
where

\[
A = \int_{[2k] \setminus [2\ell-2,2\ell)}
              \left(\sqrt{p_b(x)} - \sqrt{p_{b'}(x)}\right)^2
dx
\]
and

\[
B = \int_{[2\ell-2,2\ell]}
 \left(\sqrt{p_b(x)} - \sqrt{p_{b'}(x)}\right)^2 dx.
\]

We first bound $B$, by upper bounding the value of the integrand
$\left(\sqrt{p_b(x)} - \sqrt{p_{b'}(x)}\right)^2$ on any fixed
$x \in [2k] \setminus [2\ell-2,2\ell].$
Recall that $p_b(x)$ is a sum of the $2k$ values 
$\omega_{b,j}\cdot \tilde{J}_j(x).$
The $0.1/(2k)$ contribution to each $\tilde{J}_j(x)$ ensures that
$p_b(x) \geq 0.1/(2k)$ for all $x \in [0,2k]$,
and it is easy
to see from the construction that $p_b(x) \leq 2/(2k)$ for all
$x \in [0,2k].$  The difference between the values $p_b(x)$
and $p_{b'}(x)$ comes entirely from $(\eps/k)
(\tilde{J}_{2\ell-1}(x) - \tilde{J}_{2\ell}(x))$, which has magnitude at most
$(\eps/k) \cdot (1/k^{4}) = \eps/k^{5}.$  So we have that
$\left(\sqrt{p_b}(x) - \sqrt{p_{b'}(x)}\right)^2$  is at most
the following (where $c_x \in [0.1,2]$ for each $x
\in [2k] \setminus [2\ell-2,2\ell]$):
\[
\left[
\sqrt{{\frac {c_x} k} + {\frac \eps {k^{5}}}} - \sqrt{{\frac {c_x} k}}
\right]^2
=
(c_x/k) \cdot \left[ \sqrt{1 + {\frac \eps {c_x k^{4}}}} - 1 \right]^2
=
(c_x/k) \cdot [\Theta(\eps/k^{4})]^2
= \Theta(\eps^2/k^{9}).
\]

Integrating over the region of width $2k-2$, we get that $B =O(\eps^2/k^{8}).$

It remains to bound $A$.  Fix any
$x \in [2\ell-2,2\ell]$.
As above we have that $p_b(x)$ equals $c_x/k$ for some $c_x \in [0.1,2]$,
and (\ref{eq:pb-alt}) implies that $p_b(x)$ and $p_{b'}(x)$
differ by at most $\Theta(\eps/k).$  So we have
\[
\left(\sqrt{p_b}(x) - \sqrt{p_{b'}(x)}\right)^2
\leq
\left[ \sqrt{{\frac {c_x} k}} - \sqrt{{\frac {c_x} k} - {\frac
{\Theta(\eps)} k}} \right]^2
= {\frac {c_x} k} \left[ 1 - \sqrt{1 - \Theta(\eps)} \right]^2
= {\frac {c_x} k} \Theta(\eps^2) = \Theta(\eps^2/k).
\]

Integrating over the region of width 2, we get that $A =
O(\eps^2/k).$  
\end{proof}

This concludes the proof of Theorem~\ref{thm:lower-bound-d-is-1}
and with it the proof of Theorem~\ref{thm:lower-bound-precise}.

}

\subsection{Proof of Lemma~\ref{lem:mix}.}
Recall Lemma~\ref{lem:mix}:

\medskip

\noindent {\bf Lemma~\ref{lem:mix}.}
\emph{
Let $p_1,\dots,p_k$ each be an
$(\tau,t)$-piecewise degree-$d$ distribution
over $[-1,1)$ and let $p = \sum_{j=1}^k \mu_j p_j$ be a $k$-mixture
of components $p_1,\dots,p_k.$  Then $p$ is a
$(\tau,kt)$-piecewise degree-$d$ distribution.
}

\medskip

\noindent {\bf Proof of Lemma~\ref{lem:mix}:}
For $1 \leq j \leq k$, let ${\cal P}_j$ denote the intervals
$I_{j,1},\dots,I_{j,t}$ such that $p_j$ is $\tau$-close to a
distribution $g_j$ whose pdf is given by polynomials $g_{j_1},\dots,
g_{j,t}$ over intervals $I_{j,1},\dots,I_{j,t}$ respectively.
Let ${\cal P}$ be the common refinement of ${\cal P}_1,\dots,{\cal P}_k.$
It is clear that ${\cal P}$ is a partition of $[-1,1)$ into at most $kt$
intervals.

For each $I$ in ${\cal P}$ and for each $1 \leq j \leq k$,
let $g_{j,I} \in \{g_{j,1},\dots,g_{j,t}\}$ be the polynomial
corresponding to $I$.
We claim that $p = \sum_{j=1}^k \mu_j p_j$ is $\tau$-close
to the $kt$-piecewise degree-$d$ distribution $g$
which has the polynomial $\sum_{j=1}^k \mu_j g_{j,I}$ as its pdf
over interval $I$, for each $I \in {\cal P}.$
To see this, for each interval $I \in {\cal P}$ let us write $\tilde{p}_{j,I}$
to denote the function which equals $p_j$ on $I$ and equals 0 elsewhere,
and likewise for $\tilde{g}_{j,I}.$
With this notation we may write the condition that $p_j$ is $\tau$-close
to $g_j$ in total variation distance as
\begin{equation} \label{eq:mix}
\left\|\sum_{I \in {\cal P}} \tilde{p}_{j,I} - \tilde{g}_{j,I}\right\|_1
\leq 2\tau.
\end{equation}
We then have
\begin{eqnarray*}
\|p - g\|_1 = \left \| \sum_{I \in {\cal P}}
\left(
\sum_{j=1}^k \mu_j \tilde{p}_{j,I} - \mu_j \tilde{g}_{j,I}
\right) \right\|_1 \leq
\sum_{j=1}^k \mu_j \left\| \sum_{I \in {\cal P}}
(\tilde{p}_{j,I} - \tilde{g}_{j,I}) \right\|_1 \leq 2\tau,
\end{eqnarray*}
and the proof is complete.
\qed

\subsection{Proof of Lemma~\ref{lem:FH}.}
Recall Lemma~\ref{lem:FH}:

\medskip

\noindent {\bf Lemma~\ref{lem:FH}.}
\emph{
With probability at least $99/100$, {\tt Find-Heavy}$(\gamma)$
returns a set $S$ satisfying conditions (1) and (2) in the ``Output''
description.
}

\medskip

\noindent {\bf Proof of Lemma~\ref{lem:FH}:}
Fix any $x \in [-1,1)$ such that
$\Pr_{x \sim p}[x] \geq 2 \gamma$.
A standard multiplicative Chernoff bound implies that $x$ is
placed in $S$ except with failure probability at most ${\frac 1 {200}}
\cdot {\frac 1 {2\gamma}}.$
Since there are at most ${\frac 1 {2 \gamma}}$ values $x \in [-1,1)$
such that $\Pr_{x \sim p}[x] \geq 2 \gamma$, we get that
condition (1) holds except with failure probability at most
${\frac 1 {200}}.$

For the second bullet, first consider any $x$ such that
$\Pr_{x \sim p}[x] \in [{\frac \gamma {2^c}}, {\frac \gamma 2}]$
(here $c>0$ is a universal constant).
A standard multiplicative Chernoff bound gives that each such $x$
satisfies $\widehat{p}(x) \geq 2 \Pr_{x \sim p}[x]$ with
probability at most ${\frac 1 {400}} \cdot {\frac {2^c} \gamma}$, and hence
each such $x$ satisfies $\widehat{p}(x) \geq \gamma$ with probability at most
${\frac 1 {400}} \cdot {\frac {2^c} \gamma}$. Since
there are at most $2^c/\gamma$ such $x$'s, we get
that with probability at least $1 - {\frac 1 {400}}$ no such $x$
belongs to $S$.

To finish the analysis we recall the following version of the
multiplicative Chernoff bound:

\begin{fact} \label{fact:MRCB} [\cite{MotwaniRaghavan:95}, Theorem~4.1]
Let $Y_1,\dots,Y_m$ be i.i.d. 0/1 random variables with $\Pr[Y_i=1]=q$
and let $Q=mq = \E[\sum_{i=1}^m Y_i].$
Then for all $\tau > 0$ we have
\[
\Pr\left[ \sum_{i=1}^m Y_i \geq (1 + \tau) Q
\right] \leq
\left({\frac {e^\tau}{(1+\tau)^{1+\tau}}}\right)^Q
\leq
\left({\frac {e}{(1+\tau)}}\right)^{(1+\tau)Q}.
\]
\end{fact}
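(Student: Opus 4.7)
The plan is to establish Fact~\ref{fact:MRCB} by the standard Chernoff (moment generating function) method applied to the sum $S := \sum_{i=1}^m Y_i$. I would first observe that for any $t>0$, Markov's inequality applied to the nonnegative random variable $e^{tS}$ yields
\[
\Pr[S \geq (1+\tau)Q] = \Pr[e^{tS} \geq e^{t(1+\tau)Q}] \leq e^{-t(1+\tau)Q}\,\E[e^{tS}],
\]
so the task reduces to controlling $\E[e^{tS}]$ and then optimizing $t$.

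Next I would use independence of the $Y_i$'s to factor $\E[e^{tS}] = \prod_{i=1}^m \E[e^{tY_i}]$. Since each $Y_i$ is a $0/1$ variable with mean $q$, we have $\E[e^{tY_i}] = 1 + q(e^t - 1)$, and the elementary inequality $1+x \leq e^x$ (valid for all real $x$) upgrades this to $\E[e^{tY_i}] \leq \exp(q(e^t-1))$. Multiplying across $i$ gives the clean bound $\E[e^{tS}] \leq \exp(Q(e^t-1))$, where $Q=mq$. Plugging back into the Markov step yields, for every $t>0$,
\[
\Pr[S \geq (1+\tau)Q] \leq \exp\bigl(Q(e^t - 1) - t(1+\tau)Q\bigr).
\]

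The key step is to choose $t$ to minimize the exponent $Q(e^t-1) - t(1+\tau)Q$. Differentiating in $t$ shows that the optimum is at $t = \ln(1+\tau)$ (which is positive because $\tau>0$); substituting, $e^t - 1 = \tau$ and $t(1+\tau) = (1+\tau)\ln(1+\tau)$, so the exponent collapses to $Q\bigl(\tau - (1+\tau)\ln(1+\tau)\bigr)$. Exponentiating gives the first claimed inequality,
\[
\Pr[S \geq (1+\tau)Q] \leq \left(\frac{e^\tau}{(1+\tau)^{1+\tau}}\right)^Q.
\]
Finally, the second (looser but simpler) bound follows by the trivial comparison $e^{\tau Q} \leq e^{(1+\tau)Q}$, which rewrites $\left(e^\tau/(1+\tau)^{1+\tau}\right)^Q$ as $\left(e/(1+\tau)\right)^{(1+\tau)Q}$ up to this harmless factor of $e^{-Q}\leq 1$.

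Since every ingredient (Markov, independence factoring, the inequality $1+x\leq e^x$, and a one-variable calculus optimization) is completely standard, there is no genuine obstacle here; the only mild care is to verify that the optimal $t = \ln(1+\tau)$ is indeed positive so that the initial Markov step is valid, which holds precisely because we assumed $\tau>0$.
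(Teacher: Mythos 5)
Your proof is correct and is exactly the standard moment-generating-function argument (Markov on $e^{tS}$, independence factorization, $1+x\leq e^x$, optimize at $t=\ln(1+\tau)$) that underlies Theorem~4.1 of the cited reference; the paper itself offers no proof and simply cites \cite{MotwaniRaghavan:95}. Your handling of the second inequality via the harmless factor $e^{-Q}\leq 1$ and your check that the optimal $t$ is positive are both fine.
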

Fix any integer $r \geq c$ and fix any $x$ such that
$\Pr_{x \sim p}[x] \in [{\frac \gamma {2^{r+1}}}, {\frac \gamma {2^r}}].$
Taking $1+\tau$ in Fact~\ref{fact:MRCB} to equal $2^r$, we get that
\[
\Pr[x \in S] \leq \left( {\frac e {2^r}} \right)^{\Theta(m \gamma)} =
\left( {\frac e {2^r}} \right)^{\Theta(\log(1/\gamma))}.
\]
Summing over all (at most $2^{r+1}/\gamma$ many) $x$ such that
$\Pr_{x \sim p}[x] \in [{\frac \gamma {2^{r+1}}}, {\frac \gamma {2^r}}]$,
we get that the probability that any such $x$ is placed in $S$ is at most
${\frac {2^{r+1}} \gamma} \cdot \left( {\frac e {2^r}} \right)^{
\Theta(\log(1/\gamma))} \leq {\frac 1 {400}} \cdot {\frac 1 {2^r}}.$
Summing over all $r \geq c$, the total failure probability incurred by such $x$
is at most $1/400.$  This proves the lemma.
\qed

\end{document}